\pdfoutput=1
\documentclass[11pt]{article}
\usepackage[textwidth=400pt,centering]{geometry}
\usepackage{fullpage}
\usepackage{longtable}

\usepackage{booktabs}

\usepackage[dvipdfmx]{graphicx}
\usepackage[utf8]{inputenc} 
\usepackage[T1]{fontenc}    
\usepackage{amsfonts}       
\usepackage{nicefrac}       
\usepackage{xcolor}    
\usepackage{times}  
\usepackage{helvet}  
\usepackage{courier}  
\usepackage[hyphens]{url}  
\usepackage{graphicx} 
\urlstyle{rm} 
\usepackage{natbib}  
\usepackage{caption} 
\frenchspacing  
\setlength{\pdfpagewidth}{8.5in}  
\setlength{\pdfpageheight}{11in}  
%
\usepackage{algorithm}
\usepackage{algorithmic}

\usepackage{nicefrac}       

\usepackage{datatool}
\usepackage{subcaption}
\usepackage{epigraph} 
\usepackage{framed}
\usepackage{csquotes}

\usepackage{pifont}
\usepackage{bm}
\usepackage{multirow}
\usepackage{braket}
\usepackage{physics}
\usepackage{enumitem}
\usepackage{verbatim}

\usepackage{amssymb,amsthm}
\usepackage{thmtools,thm-restate}
\usepackage{calc}  
\usepackage{array} 
\usepackage{csquotes}

\theoremstyle{plain}
\newtheorem{theorem}{Theorem}

\newtheorem{lemma}[theorem]{Lemma}
\newtheorem{corollary}[theorem]{Corollary}
\theoremstyle{definition}
\newtheorem{definition}[theorem]{Definition}

\newtheorem{fact}[theorem]{Fact}
\theoremstyle{remark}

\newcommand\numberthis{\addtocounter{equation}{1}\tag{\theequation}}


\newcommand{\Lmid}{\,\middle\vert\, }
\newcommand{\cmark}{\ding{51}}%
\newcommand{\xmark}{\ding{55}}%
\newcommand{\I}{\bm{1}}
\newcommand{\dx}{\mathrm{d}}
\newcommand{\KL}{\mathrm{KL}}

\newcommand{\eps}{\epsilon}

\newcommand{\bth}{\boldsymbol{\theta}}
\newcommand{\bmu}{\boldsymbol{\mu}}

\newcommand{\hmu}{\hat{\mu}}
\newcommand{\tmu}{\tilde{\mu}}

\newcommand{\bn}{n_0}
\newcommand{\bp}{\bar{p}}

\newcommand{\reg}{\mathrm{Reg}}
\newcommand{\Uni}{\mathrm{Uni}}
\newcommand{\normal}{\mathrm{Gaussian}}


\newcommand{\sig}{\sigma}

\newcommand{\rU}{\mathrm{U}}
\newcommand{\rG}{\mathrm{G}}

\newcommand{\ms}{{\mu, \sigma}}
\newcommand{\ts}{\tilde{\sigma}}

\newcommand{\hs}{\hat{\sigma}}
\newcommand{\bs}{\bar{\sigma}}
\newcommand{\bpi}{\bar{\pi}}

\newcommand{\x}[1]{x^{(#1)}}
\newcommand{\bx}[1]{\bar{x}^{(#1)}}
\newcommand{\hbbx}[1][]{\x{n}_{#1}-\x{1}_{#1}}
\newcommand{\hbx}[1][]{\bx{n}_{#1}-\x{1}_{#1}}

\newcommand{\eA}{\mathcal{A}}
\newcommand{\eB}{\mathcal{B}}
\newcommand{\eE}{\mathcal{E}}
\newcommand{\ebE}{\bar{\mathcal{E}}}
\newcommand{\eM}{\mathcal{M}}

\title{The Choice of Noninformative Priors for Thompson Sampling \\ in Multiparameter Bandit Models}
\author{Jongyeong Lee$^{1,2}$\footnote{JL is now affiliated with Seoul National University.} \and Chao-Kai Chiang$^{1}$ \and Masashi Sugiyama$^{2, 1}$}
\date{
$^1$ The University of Tokyo 
$^2$ RIKEN AIP
}

\begin{document}
\maketitle

\begin{abstract}
Thompson sampling (TS) has been known for its outstanding empirical performance supported by theoretical guarantees across various reward models in the classical stochastic multi-armed bandit problems.
Nonetheless, its optimality is often restricted to specific priors due to the common observation that TS is fairly insensitive to the choice of the prior when it comes to asymptotic regret bounds.
However, when the model contains multiple parameters, the optimality of TS highly depends on the choice of priors, which casts doubt on the generalizability of previous findings to other models. 
To address this gap, this study explores the impact of selecting noninformative priors, offering insights into the performance of TS when dealing with new models that lack theoretical understanding.
We first extend the regret analysis of TS to the model of uniform distributions with unknown supports, which would be the simplest non-regular model. 
Our findings reveal that changing noninformative priors can significantly affect the expected regret, aligning with previously known results in other multiparameter bandit models.
Although the uniform prior is shown to be optimal, we highlight the inherent limitation of its optimality, which is limited to specific parameterizations and emphasizes the significance of the invariance property of priors.
In light of this limitation, we propose a slightly modified TS-based policy, called TS with Truncation (TS-T), which can achieve the asymptotic optimality for the Gaussian models and the uniform models by using the reference prior and the Jeffreys prior that are invariant under one-to-one reparameterizations.
This policy provides an alternative approach to achieving optimality by employing fine-tuned truncation, which would be much easier than hunting for optimal priors in practice.
\end{abstract}

\section{Introduction}
In the classical parametric stochastic multi-armed bandit (MAB) problems, an agent plays an arm at every round.
In each round, the agent observes a reward generated from the distribution associated with the played arm, whose functional form is known, but the specific values of parameters are unknown.
Since the agent observes a reward only from the played arm and is not aware of the true parameters, they have to choose an arm carefully to maximize rewards based on the history of their choices and corresponding rewards.
Therefore, the MAB problem is one of the elementary models that exemplify the tradeoff between the exploration to learn parameters and the exploitation of knowledge to accumulate rewards.

For this problem, we can evaluate the performance of an agent's policy by the \emph{regret} defined as the difference between maximum rewards and the rewards obtained from the policy since minimizing the expected regret is equivalent to maximizing expected rewards. 
\citet{lai1985asymptotically} provided an asymptotic problem-dependent lower bound on the expected regret that captures the optimal problem-dependent performance, which was generalized by \citet{burnetas1996optimal}.
Note that their regret bounds are on the frequentist's view, where the parameters are regarded as fixed quantities, and we say a policy matching this lower bound to be asymptotically optimal.

Out of the various policies in the bandit literature, this paper focuses on the asymptotic optimality of Thompson sampling (TS) due to its outstanding empirical performance~\citep{chapelle2011empirical}.
TS is a randomized Bayesian policy that maintains a posterior distribution over the unknown parameters~\citep{thompson1933likelihood}.
Therefore, the choice of the priors would be important since TS plays an arm according to the posterior probability of being the best arm. 
When there is no prior knowledge of the parameters, it is reasonable to utilize noninformative priors based on the interpretation initially proposed by \citet{kass1996selection} and subsequently discussed by \citet[Section 3.5]{robert2007bayesian}:
\begin{displayquote}
    Noninformative priors should be taken as default priors, upon which everyone could fall back when the prior information is missing. 
\end{displayquote}
In this study, we translate this description to the usefulness of TS with noninformative priors as a \emph{starting point} for bandit problems where no prior knowledge is available. 
One naive choice would be the uniform prior that assigns equal probability to all possible values over the parameter space~\citep{laplace1820theorie}, which obviously represents the ignorance of the parameters and can be defined for any model.
However, as pointed out in literature~\citep{datta1996invariance}, uniform priors can vary depending on the parameterization of the distribution, which means that when the same distribution is modeled by different parameters, the resulting posterior distributions may also be different.
\citet{robert2007bayesian} also emphasized the importance of invariance properties, especially when one makes inferences on multiple parameters.

Nevertheless, when it comes to the problem-dependent regret bounds of TS, it is often reported that TS is not too sensitive to the choice of the prior for the model of single-parameter distributions.
For example, both the uniform prior~\citep{kaufmann2012thompson} and the Jeffreys prior~\citep{KordaTS} are found to be optimal for the Bernoulli models.
Note that the reference prior also leads to the optimal regret bound for the Bernoulli bandit models since the Jeffreys prior coincides with the reference prior for the regular single-parameter models~\citep{ghosh2011objective}.
This would be due to the fact that in MAB problems, the focus is solely on inferring the mean of the reward model, which differs from other pure inference tasks that involve multiple parameters of interest.

However, it has been shown that the choice of noninformative priors can significantly impact the performance of TS for noncompact multiparameter bandit models, such as the Gaussian models~\citep{honda2014optimality} and the Pareto models~\citep{Lee2023}.
These results indicate that the choice of noninformative priors becomes more challenging in multiparameter models than that in single-parameter models.
In this paper, we first show that the prior sensitivity of TS occurs not only in the noncompact multiparameter models but also in the uniform model with unknown supports, which is a compact non-regular multiparameter model.
Specifically, we show that TS with the uniform prior with location-scale (LS) parameterization is asymptotically optimal, while TS with the reference prior and the Jeffreys prior are suboptimal.
The implication of this discovery is twofold. 
Firstly, the bounds show the importance of selecting priors in multiparameter models, extending the understanding provided by \citet{honda2014optimality} and \citet{Lee2023}.
Moreover, the invariance problems of the uniform priors mentioned above make the optimal regret bound less informative.
This is demonstrated in the O-T column of Gaussian and uniform models in Table~\ref{tab: overall_rslt}, where we showed that some uniform priors are optimal while others are not.

\begin{table}[t]
    \centering
    \begin{tabular}{ccccccccc}
        Model & R & C & T &  Parameter $\theta$ & Priors & O-T & O-TT \\
        \hline
        \multirow{5}{*}{Uniform} & \multirow{5}{*}{\xmark} & \multirow{5}{*}{\cmark} & \multirow{5}{*}{$\mathbf{L}$}   &  \multirow{3}{*}{\shortstack{location (mean) and scale \\ $(\mu, \sig) \in \mathbb{R}\times \mathbb{R}_{+}$ }} & $\pi_\mathrm{u}^{\mu,\sig}$ & \cmark$_{\text{Thm. 1}}$  &  \cmark$_{\text{Thm. 4}}$\\
          & & &  & &  $\pi_\mathrm{j}$ & \xmark$_{\text{Thm. 2}}$ &  \cmark$_{\text{Thm. 4}}$  \\
           & & &  & &  $\pi_\mathrm{r}$ & \xmark$_{\text{Thm. 2}}$ & \cmark$_{\text{Thm. 4}}$ \\
           & & & & \multirow{2}{*}{\shortstack{location (mean) and rate \\ $(\mu, \sig^{-1}) \in \mathbb{R}\times \mathbb{R}_{+}$ }} & \multirow{2}{*}{$\pi_\mathrm{u}^{\mu,\sig^{-1}}$} & \multirow{2}{*}{\xmark$_{\text{Cor. 3}}$}  &  \multirow{2}{*}{\cmark$_{\text{Thm. 4}}$}\\
           & & & & & & & &\\[0.2em]
           \hline
        \multirow{5}{*}{Gaussian} & \multirow{5}{*}{\cmark} & \multirow{5}{*}{\xmark} & \multirow{5}{*}{$\mathbf{L}$}   &  \multirow{3}{*}{\shortstack{location (mean) and scale \\ $(\mu, \sig) \in \mathbb{R}\times \mathbb{R}_{+}$ }} & $\pi_\mathrm{u}^{\mu,\sig}$ & \cmark$_H$  &  \cmark$_{\text{Thm. 5}}$\\
          & & &  & &  $\pi_\mathrm{j}$ & \xmark$_H$ &  \cmark$_{\text{Thm. 5}}$  \\
           & & &  & &  $\pi_\mathrm{r}$ & \xmark$_H$ & \cmark$_{\text{Thm. 5}}$ \\
         & & & & \multirow{2}{*}{\shortstack{location (mean) and rate \\ $(\mu, \sig^{-1}) \in \mathbb{R}\times \mathbb{R}_{+}$ }} & \multirow{2}{*}{$\pi_\mathrm{u}^{\mu,\sig^{-1}}$} & \multirow{2}{*}{\xmark$_{\text{Cor. 3}}$}  &  \multirow{2}{*}{\cmark$_{\text{Thm. 5}}$}\\
         & & & & & & & &\\[0.2em]
        \hline
           \multirow{5}{*}{Pareto} & \multirow{5}{*}{\xmark} & \multirow{5}{*}{\xmark} & \multirow{5}{*}{$\mathbf{H}$}   &  \multirow{3}{*}{\shortstack{scale and shape \\ $(\sig, \alpha) \in \mathbb{R}_{+}\times \mathbb{R}_{\geq 1}$ }} & $\pi_\mathrm{u}^{\sig, \alpha}$ & \xmark$_L$  &  \cmark$_L$\\
          & & &  & &  $\pi_\mathrm{j}$ & \xmark$_L$ &  \cmark$_L$  \\
           & & &  & &  $\pi_\mathrm{r}$ & \xmark$_L$ & \cmark$_L$ \\
           & & & &\multirow{2}{*}{\shortstack{rate and shape \\ $(\sig^{-1}, \alpha) \in \mathbb{R}_{+}\times \mathbb{R}_{\geq 1}$ }} & \multirow{2}{*}{$\pi_\mathrm{u}^{\sig^{-1}, \alpha}$} & \multirow{2}{*}{\xmark$_{\text{Cor. 3}}$} & \multirow{2}{*}{\textbf{?}}\\
           & & & & & & & &\\[0.2em]
        \hline
    \end{tabular}
    \caption{Asymptotic optimality with different noninformative priors for multiparameter models. 
    R, C, and T denote whether the model satisfies the Fisher regularity (\cmark) or not (\xmark), whether it is compact (\cmark) or non-compact (\xmark), and whether its function is light-tailed ($\mathbf{L}$) or heavy-tailed ($\mathbf{H}$). O-T and O-TT indicate the optimality of TS and TS with truncation (TS-T), respectively, in terms of whether they can achieve the asymptotic regret lower bound for the corresponding model (\cmark) or not (\xmark).
    Notice that $_H$ and $_L$ indicate that the results are derived by \citet{honda2014optimality} and by \citet{Lee2023}, respectively.
    $\pi_\mathrm{u}$, $\pi_\mathrm{j}$, and $\pi_\mathrm{r}$ denote the uniform prior, the Jeffreys prior, and the reference priors, respectively.
    For the uniform priors, we specify the parameterization in the superscript.
    \textbf{?} denotes unknown results.
    }
    \label{tab: overall_rslt}
\end{table}

Moreover, recent findings have demonstrated that selecting the uniform prior with scale-shape parameterization is suboptimal for Pareto bandits~\citep{Lee2023}.
These results raise concerns about the reliability of the uniform prior as a fallback option, as it becomes evident that the choice of parameterization in statistical models requires meticulous consideration. 
This brings us to the central question that serves as the driving force behind this paper:
\begin{displayquote}
    Is there a \emph{universally} applicable prior in general bandit models that \emph{consistently} leads to high-performance outcomes when employed in posterior sampling?
\end{displayquote}
As noted in \citet{berger1992development}, the three most important criteria for noninformative priors would be simplicity, generality, and trustworthiness.
Although several well-known noninformative priors have been studied for multiparameter models, none of them simultaneously satisfy all three criteria in the context of MAB problems.
In general, there is no silver bullet that can optimally address all problems.
However, it might be possible to discover a ``bronze bullet'', a solution that achieves optimal performance in certain scenarios while still maintaining reasonable effectiveness in others, which can serve as a valuable \emph{baseline}. 

On the other hand, one might be looking forward to an alternative approach with renowned (invariant) priors that can provide practical and optimal solutions rather than hunting for good priors.
In this regard, we propose a variant of TS, called TS with \underline{T}runcation (TS-T), for the uniform models and the Gaussian models.
We provide a finite-time regret analysis of TS-T, which demonstrates its asymptotic optimality under the reference prior and the Jeffreys prior for both models.
Our approach builds upon the basic strategy of TS, but with key modifications that improve the performance and address the limitations of TS.
In particular, we devise an adaptive truncation procedure on the parameter space of the posterior distribution to control the problems in the early stage of learning, hence the name truncation in TS-T.
The proposed policy is inspired by the policies proposed in \citet{jin2021mots} and \citet{Lee2023}, extending and generalizing their approaches.
We further provide a high-level design idea that can be generalized to other reward models easily. 

The main results of this paper and related works are summarized in Table~\ref{tab: overall_rslt}, and our contributions are summarized as follows:
\begin{itemize}
    \item We prove the asymptotic optimality/suboptimality of TS with noninformative priors for the uniform bandits.
    This extends the understanding of TS in the multiparameter models, which have not been well studied so far, emphasizing the significance of selecting noninformative priors.
    \item We show that some uniform priors with different parameterizations are suboptimal.
    This makes the optimality of TS with the uniform prior less attractive in general, as it inherently involves the non-trivial task of selecting appropriate parameterizations.
    \item We propose a variant of TS that is asymptotically optimal for the uniform models and the Gaussian models under the reference prior and the Jeffreys prior, where the vanilla TS is found to be suboptimal. 
    This provides optimal results that remain consistent regardless of the way of parameterizing the models, which addresses the limitations of the vanilla TS.
\end{itemize}

\section{Preliminaries}
In this section, we formulate $K$-armed bandit problems and the asymptotic regret lower bound for the uniform models and Gaussian models.

\subsection{Problem Formulation}
Suppose that there are finite $K$ arms associated with a reward distribution $\nu_{\theta}$ belonging to the LS family, whose density function is denoted by $f_{l,\sig}(x)$ with location $l \in \mathbb{R}$ and scale $\sig \in \mathbb{R}_{+}$.
Here, the parameters $\theta = (l,\sig) \in \mathbb{R}\times \mathbb{R}_{+}$ are unknown to the agent. 
Note that we consider MAB problems where every arm is modeled by the \emph{same} type of distribution but with possibly different parameters.

If a random variable $X$ with the density function $f_{\theta}(x)$ belongs to the LS family, then $f_{l,\sig}$ can be written using a probability density function $f_{0,1}(\cdot)$ as
\begin{equation}\label{eq: uni_LSfamily}
    f_{l,\sig}(x) = \frac{1}{\sig}f_{0,1}\left( \frac{x - l}{\sig}\right). 
\end{equation}
Although location $l$ is not necessarily equivalent to the expectation $\mu(\theta) = \mathbb{E}_{\nu_\theta}[X]$ in general, we use them interchangeably in this paper since they coincide for both the Gaussian and uniform models.
One can retrieve the density function of the Gaussian distribution $\normal(\ms)$ with location (mean) $\mu$ and scale $\sig$, $f_{\ms}^{\rG}(x)$, by substituting the standard normal density for $f_{0,1}$.
The uniform distribution can be obtained by letting $f_{0,1}(x) = \I[0 \leq x \leq 1]$ for the indicator function $\I[\cdot]$.
If $X$ follows the uniform distribution $\Uni_{\mu\sig}(\mu, \sigma)$ under the LS parameterization, then it has the density of the form with location (mean) $\mu$ and scale $\sig$, 
\begin{equation*}
    f_{\mu,\sigma}^{\mathrm{U}_{\mu\sig}}(x) = \frac{1}{\sig} \I\left[ \mu - \frac{\sig}{2} \leq x \leq \mu + \frac{\sig}{2}\right].
\end{equation*}
The uniform distribution can be reparameterized in terms of the boundary of the support by letting $(a,b) = \left(\mu - \frac{\sig}{2}, \mu + \frac{\sig}{2} \right)$, denoted by $\Uni_{ab}(a,b)$, whose density function is given as $f_{a,b}^{\mathrm{U}_{ab}}(x) = \frac{1}{b-a} \I[a \leq x \leq b]$.
Here, we assume that the arm $1$ is the unique optimal arm that has the maximum expected reward for convenience without loss of generality, i.e., $\mu_1 = \max_{i\in[K]} \mu_i$ and $\mu_1 > \mu_i$ for $i \in \{2, \ldots, K\}$.
This assumption is made to simplify the analysis, and it is worth noting that incorporating additional optimal arms can only decrease the expected regret of TS~\citep[see][Appendix A]{agrawal2012analysis}.

Denote the index of the arm played at round $t$ by $j(t)$ and the number of rounds that the arm $i$ is played until round $t$ by $N_i(t)=\sum_{s=1}^{t-1}\I[j(s)=i]$.
Then, the regret at round $T$ is defined with the sub-optimality gap $\Delta_i := \mu_1 - \mu_i$ as 
\begin{equation*}
    \reg(T) = \sum_{t=1}^T \Delta_{j(t)} = \sum_{i=2}^K \Delta_i N_i(T+1). 
\end{equation*}
When the sub-optimality gap is regarded as a fixed quantity, \citet{burnetas1996optimal} showed that any policy,
satisfying $\reg(T) = o(t^\alpha)$ for all $\alpha \in (0,1)$, 
must satisfy
\begin{align}\label{eq: burnetasLB}
    \liminf_{T \to \infty} \frac{\mathbb{E}[\reg(T)]}{\log T} 
    \geq \sum_{i=2}^K \frac{\Delta_i}{\inf_{\theta: \mu(\theta)> \mu_1}\KL(\nu_{\theta_i}; \nu_{\theta})}, 
\end{align}
where $\KL(\cdot;\cdot)$ denotes the Kullback-Leibler (KL) divergence.
Here, an algorithm is said to be asymptotically optimal if it satisfies
\begin{equation*}
    \limsup_{T \to \infty} \frac{\mathbb{E}[\reg(T)]}{\log T} \leq  \sum_{i=2}^K \frac{\Delta_i}{\inf_{\theta: \mu(\theta)> \mu_1}\KL(\nu_{\theta_i};\nu_{\theta})}.
\end{equation*}
The infimum over the KL divergence can be explicitly computed for any $i\ne 1$ 
 under uniform models~\citep{cowan2015asymptotically} as
\begin{equation}\label{eq: LB_u}
    \inf_{\theta: \mu(\theta)> \mu_1}\KL(\nu_{\theta_i};\nu_{\theta}) = \log\left(1+ \frac{2\Delta_i}{\sigma_i}\right)
\end{equation}
and under Gaussian models~\citep{honda2014optimality} as 
\begin{equation}\label{eq: LB_g}
    \inf_{\theta: \mu(\theta)> \mu_1}\KL(\nu_{\theta_i};\nu_{\theta}) =  \frac{1}{2}\log \left(1 + \left(\frac{\Delta_i}{\sigma_i}\right)^2 \right).
\end{equation}

\section{Thompson Sampling and the Choice of Priors}
In this section, we instantiate TS and propose a variant of TS, TS-T, for the uniform model and the Gaussian model based on the noninformative priors.

\subsection{Noninformative Priors in the LS Family}
To develop an invariant noninformative prior, one can consider the Fisher information matrix (FIM), which does not rely on any prior information on unknown parameters.
The FIM for the LS family is given as follows~\citep{ghosh2011objective}:
\begin{equation*}
    I(l, \sig) = \sig^{-2} \begin{bmatrix}
        c_1 & c_2 \\
        c_2 & c_3
    \end{bmatrix},
\end{equation*}
where $c_1, c_2$, and $c_3$ are functions of $f$ and do not involve parameters $\theta=(l,\sig)$.
Then, the FIM for the uniform model and the Gaussian model are given as follows:
\begin{align*}
    (c_1, c_2, c_3) = \begin{cases}
        (0, 0, 1) &\text{ if } f_{l,\sig} = f_{\mu, \sig}^{\rU_{\mu\sig}}, \\
        (1, 0, 2) &\text{ if } f_{l,\sig} = f_{\mu, \sig}^{\rG}.
    \end{cases}
\end{align*}
Since $c_2 =0$, from the orthogonality, the first-order probability matching prior is of the form $\sig^{-k}$ for $k\in \mathbb{R}$~\citep{tibshirani1989noninformative, nicolaou1993bayesian}.
This prior not only provides the posterior in a close form, but also encompasses various well-known noninformative priors as special cases in the LS family such as the uniform prior $\pi_{\mathrm{u}}(l,\sig) \propto 1$ by $k=0$.
Throughout the rest of the paper, unless otherwise stated, $\pi_\mathrm{u}$ denotes the uniform prior with $(l,\sig)$ parameterization.

Furthermore, when $k=1$, it coincides with the reference prior $\pi_{\mathrm{r}}(l, \sig) \propto \sig^{-1}$, which is the unique second-order probability matching prior~\citep{datta2004probability}.
On the other hand, the Jeffreys prior is not defined well for the uniform model since the determinant of the FIM is zero.
Nevertheless, in this paper, we call prior with $k=2$ as the Jeffreys prior $\pi_{\mathrm{j}}(l, \sig) \propto  \sig^{-2}$ even for the uniform model to maintain consistency with the Gaussian model.
More details on the noninformative priors are provided in the appendix for completeness.

\subsection{Thompson Sampling}
For the priors $\sig^{-k}$, we denote the joint posterior distribution after observing $n$ rewards from the arm $i$, $X_{i,n} := (x_{i,1}, \ldots, x_{i,n})$ by $\pi^{k}(\ms|X_{i,n})$ or simply $\pi_{i,n}^k(\ms)$.
Let us denote the (classical) sufficient statistic $T(X_{i,n})$ for the parameter $(\mu_i,\sig_i)$.
Since the sufficient statistic is always Bayes-sufficient~\citep{blackwell1982bayes}, one can rewrite the posterior distribution using the sufficient statistic as
\begin{equation*}
    \pi^k(\ms|X_{i,n}) = \pi^k(\ms| T(X_{i,n})).
\end{equation*}
The vanilla TS observes samples $(\tmu_i(t), \ts_i(t))$ generated from the posterior $\pi_{i,N_i(t)}^k(\ms)$ at each round.
Since maximum likelihood estimators (MLEs) can be chosen as a function of sufficient statistics if any MLE exists \citep{moore1971maximum}, we denote the posterior after $n$ observations as $\pi^k(\ms|\hmu_{i,n}, \hs_{i,n})$, instead of $\pi^k(\ms|T(X_{i,n}))$, to explicitly indicate the estimates after $n$ observations for the priors $\sig^{-k}$.
We adopt this notation as it facilitates a clear distinction between the vanilla TS and TS-T.


\subsection{Thompson Sampling with Truncation}
As shown in previous studies on the multiparameter bandit models~\citep{honda2014optimality, Lee2023}, TS sometimes plays only suboptimal arms when the posterior of the optimal arm has a very small variance in the early stage of learning, which contributes to the suboptimality in \emph{expectation}.
To avoid such problems, TS-T samples parameters from the distributions obtained by replacing an MLE of the scale $\hs_n$ with a truncated estimator $\bs_n$ satisfying $\bs_n = \Omega(n^{-\beta})$ for some $\beta >0$.
Note that we choose a specific $\beta$ to make regret analysis simple, but our discussion can be easily extended to any $\beta >0$.
Such truncation prevents an extreme case where $\hs_n \approx 0$ for small $n$ in the regret analysis.
In summary, TS-T is a policy that samples parameters from the distribution at every round, which is
\begin{equation}\label{eq: TST_post}
    \bpi_{i,n}^k(\ms) = \pi^k(\ms| \hmu_{i,n}, \bs_{i,n}).
\end{equation}
Strictly speaking, TS-T is not a Bayesian policy but rather a kind of randomized probability matching policy as the distribution in (\ref{eq: TST_post}) is not a posterior distribution anymore.
However, TS-T can be seen as a pre-processed posterior probability matching policy since the truncation is applied before sampling and will behave like TS as $n$ increases where the truncation has almost no effect.

\subsubsection{General Design Idea of TS-T}
Adaptive truncation in the parameter space of the posterior was considered in \citet{Lee2023}, where they aimed to compensate for the change of the priors by replacing the MLE with a truncated one.
The following design principle is a generalization of their approach to handling the problems in the first few rounds: 
\begin{displayquote}
    Truncate the parameter space of the posterior distribution to \emph{stretch} the distribution, which encourages a policy to \emph{explore more} in the early stage of learning. 
\end{displayquote}
Here, stretching the posterior distribution can be seen as flattening the posterior distributions, which prevents them from overly concentrating on the specific value in the first few rounds.
By flattening the distributions, we encourage exploration and avoid prematurely favoring a specific arm based on the small number of observations.

As an illustration, we consider a case where the posterior distribution is represented by a Gaussian distribution in Figure~\ref{fig: int_intuition}, where Figure~\ref{fig: int_badTS} displays the posteriors of each arm.
During the initial learning phase, the inherent randomness of the rewards can cause the posterior distribution of the optimal arm (arm $1$) to be concentrated around a small value, such as $0$, in this particular example. 
As a result, this concentration of the posterior may result in suboptimal behavior, where the vanilla TS is more likely to play the arm $2$ that exhibits a higher expected reward according to the current posterior distribution.
To address this issue, one can lift the scale parameter of the Gaussian (posterior), as depicted in Figure~\ref{fig: int_TST}, in order to prevent the occurrence of extreme cases during the early stage of learning. 
Obviously, one has to design the truncation carefully to cover the entire parameter space of the posterior as the number of samples increases.

\begin{figure}[t]
     \centering
     \begin{subfigure}[b]{0.486\columnwidth}
         \centering
         \includegraphics[width=\columnwidth]{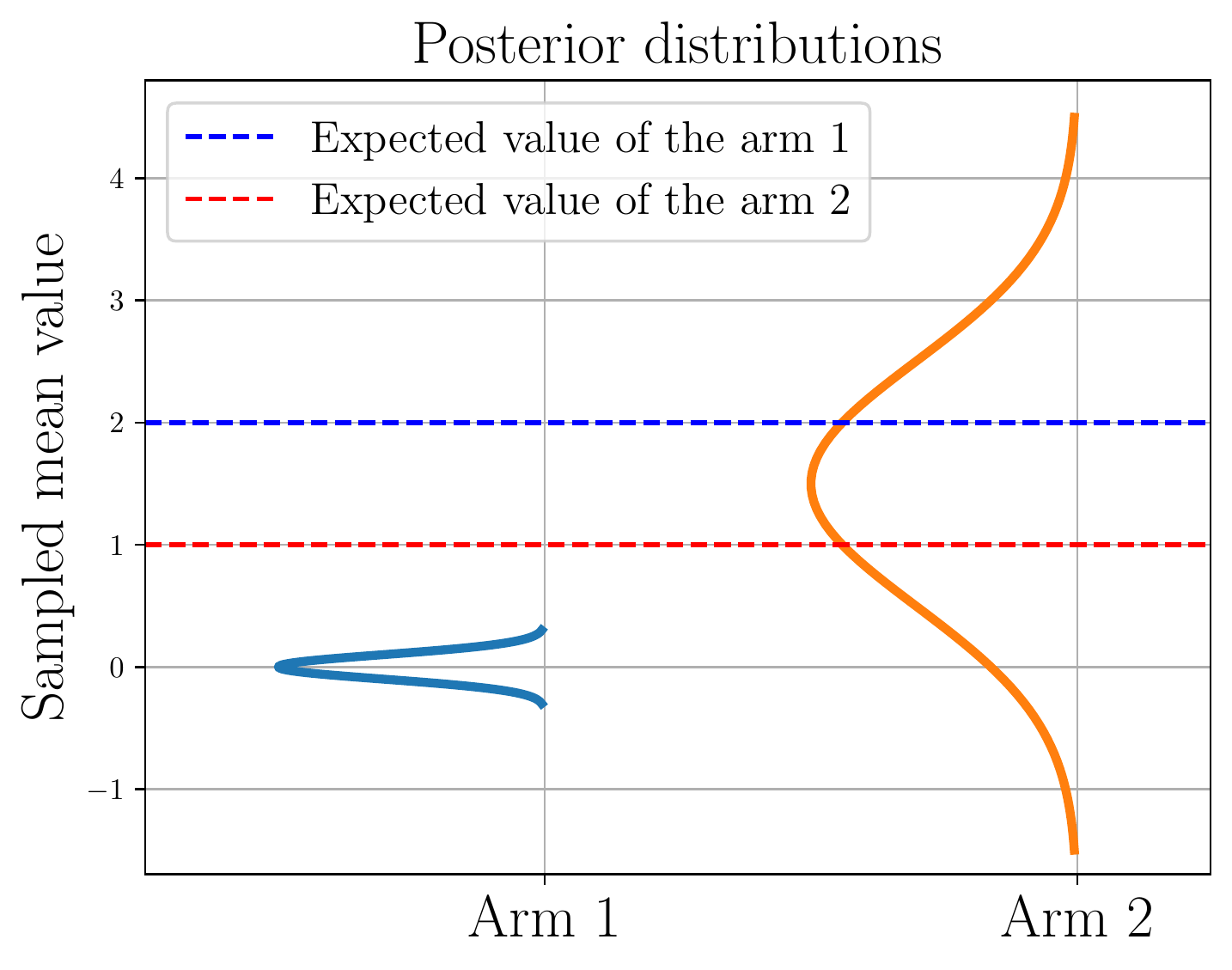}
         \caption{Posterior distribution.}
         \label{fig: int_badTS}
     \end{subfigure}
     \hfil
     \begin{subfigure}[b]{0.486\columnwidth}
         \centering
         \includegraphics[width=\columnwidth]{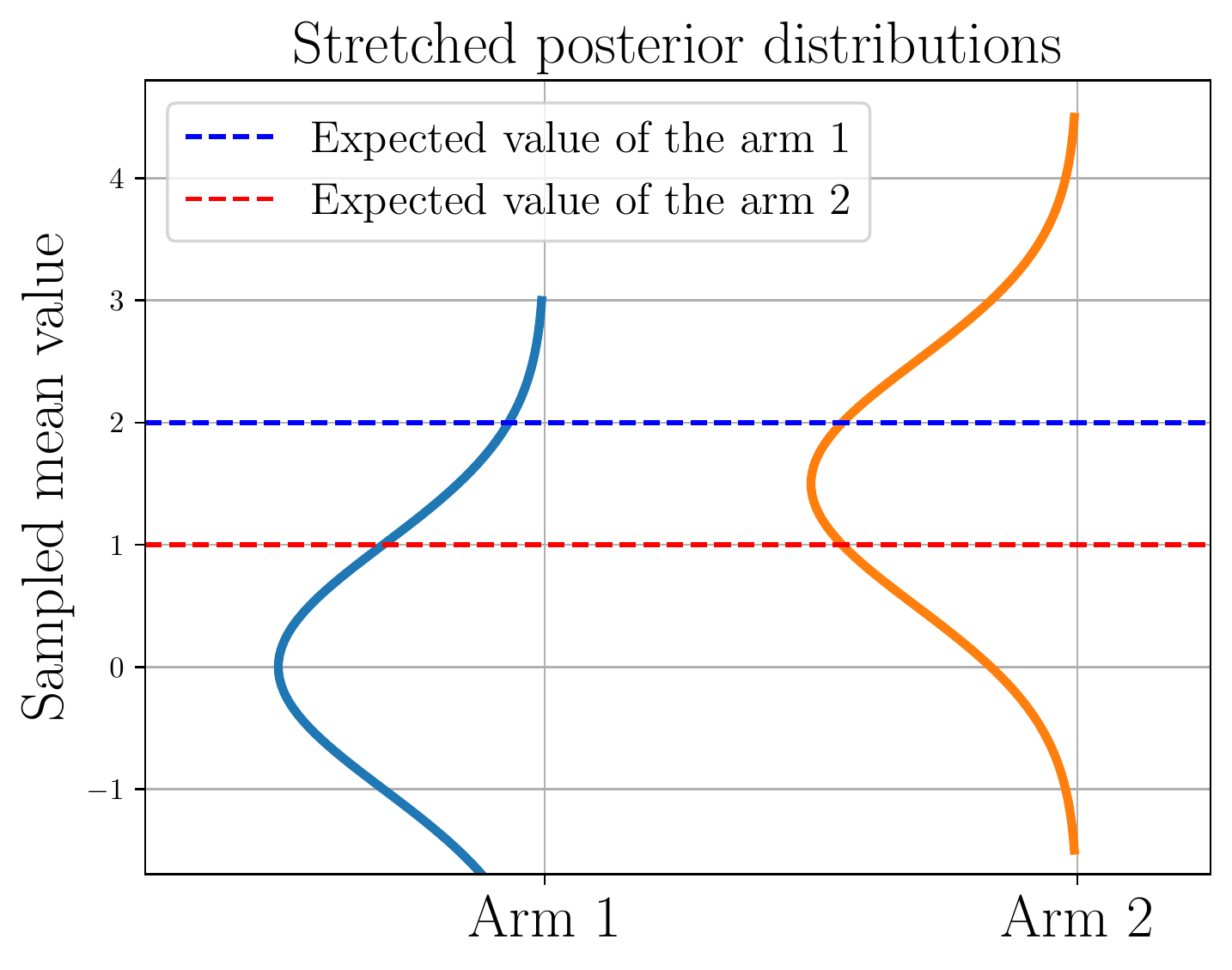}
         \caption{``Stretched'' posterior.}
         \label{fig: int_TST}
     \end{subfigure}
  \caption{An example where the posterior distribution of each arm belongs to the Gaussian distribution.
  The solid lines represent the posterior probability of sampling mean values, while the blue and red dashed lines indicate the \emph{true} expected rewards of each arm, respectively.}
  \label{fig: int_intuition}
\end{figure}

In this paper, we truncate the parameter space by replacing sufficient statistics with truncated ones, which induces a truncated estimator instead of the MLE.
Therefore, we expect that our approach can be easily applied to any model where sufficient statistics have a constant dimension, such as the (quasi-)exponential family~\citep{robert2007bayesian}.
This offers an alternative approach to achieving optimality without the need to search for an optimal or appropriate prior for each specific problem, a process we expect will be significantly more convenient in practical applications.

\subsubsection{Comparison with Different Adaptive Approaches}
It is worth noting that a similar adaptive approach has been considered in the Gaussian model with known variance~\citep{jin2021mots} and linear models~\citep{hamidi2020worst}.
In these approaches, the posterior distribution was modeled as a Gaussian distribution and an adaptive inflation value $\rho_t$ was introduced to the scale parameter, which effectively flattened the posterior distributions.
If one extends their approaches to the LS family, it becomes a probability matching policy with the modified posterior $\pi^k(\mu,\sig | \hmu_{i,n}, \rho_t \hs_{i,n})$.
However, we found that this still has a similar problem to the naive TS in our analysis, which is related to Lemmas 10 and 12 in the appendix\footnote{This does not necessarily imply that this approach cannot provide the optimal solution to our problem.
Therefore, one might be able to show its suboptimality in a similar way to Theorem~\ref{thm: uni_TS_unif} or set adaptive inflation $\rho_t$ to achieve the regret lower bounds in (\ref{eq: LB_u}) and (\ref{eq: LB_g}) asymptotically although it would be more difficult than our approach in the multiparameter bandit models.}.
In addition, \citet{jin2021mots} clipped the outputs after sampling to achieve minimax optimality, which can be seen as a post-processed posterior matching policy.
While our paper does not establish the minimax optimality of TS-T, we expect that combining similar techniques with our approach could be a promising direction for the simultaneous achievement of asymptotic optimality and minimax optimality in multiparameter models, which presents an interesting problem for follow-up investigation.

\subsection{Analytical Expressions of Posterior Distributions}\label{sec: TS_analytic}
Here, we present the formulation of the posterior for TS and TS-T in the uniform and Gaussian models. 
The detailed derivation for the uniform model is given in the appendix.

\subsubsection{Uniform Bandits}\label{sec: uni_TS}
If rewards $(x_{i,s})$ follow $\Uni_{\mu\sig}(\mu_i, \sig_i)$, the sufficient statistic is given as $T(X_{i,n}) = (\x{1}_i, \x{n}_i)$ for $\x{1}_i = \min_{s \in [n]} x_{i,s}$ and $\x{n}_i= \max_{s \in [n]} x_{i,s}$.
Then, the marginal posterior of $\sig$ and the conditional posterior of $\mu$ given $\sig$ under the prior $\sig^{-k}$ are given as follows:
\begin{align*}
   \pi^{\rU, k}(\sig| \hmu_{i,n}, \hs_{i,n}) &= n_k(n_k+1) \left(\hs_{i,n}\right)^{n_k} \frac{\sig - \hs_{i,n}}{\sig^{n_k+2}} \I\left[\sig \geq \hs_{i,n}\right],\numberthis{\label{eq: uni_TS_post_sig}}\\
    \pi^{\rU, k}(\mu | \hmu_{i,n}, \hs_{i,n}, \sig = \ts) &= f_{\hmu_{i,n}, \ts-\hs_{i,n}}^{\rU_{\mu\sig}}(\mu),\numberthis{\label{eq: uni_TS_post_mu}}
\end{align*}
where MLEs $\hmu_{i,n} = \frac{\x{n}_i + \x{1}_i}{2}$ and $\hs_{i,n} = \x{n}_i - \x{1}_i$, and $n_k = n+k-2$.

Here, following \citet{Lee2023}, we employ a sequential sampling scheme to avoid the use of computationally costly approximation methods.
This means that $\ts$ is sampled first from the marginal posterior in (\ref{eq: uni_TS_post_sig}), which can be easily implemented by using the inverse transform sampling method.
Then we sample $\tmu$ from the conditional posterior given the sampled scale parameter $\ts$ in (\ref{eq: uni_TS_post_mu}).
This sequential sampling approach yields the same result as sampling $\mu$ from the joint posterior $\pi_{i,n}(\ms) = \pi_{i,n}(\sig) \pi_{i,n}(\mu|\sig)$.
Here, initial $\bn=\max(2, 3-\lceil k \rceil)$ plays are required to avoid improper posteriors, where $\lceil \cdot \rceil$ denotes the ceiling function.

As described in (\ref{eq: TST_post}), TS-T is a sampling policy with the distribution parameterized by a truncated scale estimator.
For the uniform models, we simply replace $\x{n}$ with a truncated statistic $\bx{n}=\max( \x{1}+n^{-1}, \x{n} )$.
In other words, we replace $\hs_{n}$ with $\bs_{n} = \bx{n} - \x{1}$, which satisfies $\bs_{n} \geq n^{-1}$.
This truncation procedure is specific to the posterior sampling in (\ref{eq: uni_TS_post_sig}) and (\ref{eq: uni_TS_post_mu}), and is introduced to avoid the situation where parameters are sampled from a distribution whose density function is similar to the Dirac delta function.
Therefore, under the TS-T policy, an agent observes samples from the following distributions:
\begin{align*} 
    \bpi_{i,n}^{\rU,k}(\sig) &= \pi^{\rU,k}(\sig | \hmu_{i,n}, \bs_{i,n}) \numberthis{\label{eq: uni_TST_post_sig}} \\
    \bpi_{i,n}^{\rU,k}(\mu|\sig = \ts) &= f_{\hmu_{i,n}, \ts-\bs_{i,n}}^{\rU_{\mu\sig}}(\mu),\numberthis{ \label{eq: uni_TST_post_mu}}
\end{align*}
where we simply replaced $\hs_{i,n}$ with $\bs_{i,n}$ in (\ref{eq: uni_TS_post_sig}) and (\ref{eq: uni_TS_post_mu}).
\subsubsection{Gaussian Bandits}\label{sec: Gaussian}
For the Gaussian model, the sufficient statistic is given as $T(X_{i,n})=(\hat{x}_{i,n}, S_{i,n})$ where $\hat{x}_{i,n} =  \frac{1}{n}\sum_{s=1}^n x_{i,s},$ and $S_{i,n} = \sum_{s=1}^n (x_{i,s} - \hat{x}_{i,n})^2$.
Then, the marginal posterior distribution of $\mu$ under the priors $\sig^{-k}$ is given as 
\begin{equation}\label{eq: TSpost_gauss}
    \pi^{\rG, k}\left(\mu| \hmu_{i,n}, \hs_{i,n} \right) = f^{\mathrm{t}}_{n_k}(\mu|\hmu_{i,n}, \hs_{i,n}),
\end{equation}
where $f^{\mathrm{t}}_{n_k}(\cdot|\hmu_{i,n}, \hs_{i,n})$ denotes the density function of the non-standardized $\mathrm{t}$-distribution with the degree of freedom $n_k= n+k-2$, location $\hmu_{i,n}= \hat{x}_{i,n}$, and scale $\hs_{i,n}= \sqrt{S_{i,n}/n}$.
\citet{honda2014optimality} showed that TS with priors $k\geq 1$ could not achieve the lower bound with (\ref{eq: LB_g}).

For the realization of the TS-T policy in the Gaussian models, we consider a truncated statistic and the corresponding scale estimator as follows:
\begin{equation*}
    \bar{S}_{i,n} = \max(1, S_{i,n}) \implies \bs_{i,n} = \sqrt{\bar{S}_{i,n} n^{-1}}\geq n^{-\frac{1}{2}}.
\end{equation*}
This implies that TS-T draws a sample from the distribution whose density function is given as
\begin{equation}\label{eq: TSTpost_gauss}
    \bpi^{\rG,k}_{i,n}\left(\mu\right) = \pi^{\rG,k}\left(\mu| \hmu_{i,n}, \bs_{i,n} \right) = f_{n_k}^{t}(\mu | \hmu_{i,n}, \bs_{i,n}),
\end{equation}
where we just replaced $\hs_{i,n}$ with $\bs_{i,n}$ in (\ref{eq: TSpost_gauss}).
In the Gaussian models, we can easily sample the location parameter directly from its marginal posterior distribution as it can be expressed by a well-known probability distribution.
Note that we require $\bn$ initial plays to avoid improper posteriors.

\section{Main Results}
This section provides the main theoretical results of this paper, whose detailed proofs are postponed to the appendix.

\begin{theorem}\label{thm: uni_unif}
Assume that the arm $1$ is the unique optimal arm with a finite mean.
Given arbitrary $\eps \in \left(0, \min_{i\ne 1} \frac{\Delta_i}{2}\right)$, the expected regret of TS with the prior $\sig^{-k}$ with $k < 1$ for the uniform models is bounded as 
\begin{equation*}
    \mathbb{E}[\mathrm{Reg}(T)] \leq \sum_{i=2}^K \Delta_i \Bigg( \frac{ \log T}{\log\left(1 + \frac{2\Delta_i - 4\eps}{\sig_i} \right)} + \frac{2\sig_i}{\eps} + \frac{11}{2} - \lceil k \rceil - k  \Bigg) + \Delta_{\mathrm{max}} C(\eps, k, \sig_1),
\end{equation*}
where $\Delta_{\mathrm{max}}=\max_{i\ne 1} \Delta_i$ and $C(\eps,k, \sig_1) = 1+\frac{9\sig_1}{\eps} + \frac{3}{16(1-k)} \frac{\sig_1^2}{\eps^2}(2e^{\frac{2\eps}{\sig_1}}-1) = \mathcal{O}\left(\frac{\sig_1^2}{(1-k)\eps^2}\right)$.
\end{theorem}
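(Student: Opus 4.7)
The plan is to adapt the Agrawal--Goyal-type regret decomposition to the joint posterior $(\tmu_i,\ts_i)$ whose explicit form is given by (\ref{eq: uni_TS_post_sig})--(\ref{eq: uni_TS_post_mu}). Fixing a threshold $y=\mu_1-\veps$ and, for each suboptimal arm $i$, defining $\eA_i(t)=\{\tmu_i(t)>y\}$ and $\eB_i(t)=\{\tmu_i(t)\le y\}$, I will write
\begin{equation*}
\mathbb{E}[N_i(T)]\le\sum_{t=1}^T\Pr[j(t)=i,\eA_i(t)]+\sum_{t=1}^T\Pr[j(t)=i,\eB_i(t)]
\end{equation*}
and treat the two sums separately: the first produces the $\log T$ main term via posterior concentration of arm $i$, while the second produces the $\Delta_{\max}\,C(\veps,k,\sig_1)$ constant via an inverse-probability argument on arm $1$.

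For the $\eA_i$ contribution, the deterministic consequence of the sequential-sampling scheme in (\ref{eq: uni_TS_post_mu}) is that $\tmu_i\le\hmu_{i,n}+(\ts-\hs_{i,n})/2$, so $\tmu_i>y$ forces $\ts\ge 2(y-\hmu_{i,n})+\hs_{i,n}$. Restricting to the typical event $\{\hmu_{i,n}\le\mu_i+\veps\}\cap\{\hs_{i,n}\le\sig_i\}$, this simplifies to $\ts\ge\sig_i+2(\Delta_i-2\veps)$. A direct integration of (\ref{eq: uni_TS_post_sig}) yields the closed form
\begin{equation*}
\Pr[\ts\ge c\mid\hs_{i,n}]=(\hs_{i,n}/c)^{n_k}\bigl(n_k+1-n_k\hs_{i,n}/c\bigr),
\end{equation*}
which, at $c=\sig_i+2(\Delta_i-2\veps)$, decays geometrically at rate $(1+(2\Delta_i-4\veps)/\sig_i)^{-n}$. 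Choosing $L_i=\lceil\log T/\log(1+(2\Delta_i-4\veps)/\sig_i)\rceil$ and splitting the sum at $N_i(t)=L_i$ produces the $\log T$ main term, while the complementary probabilities of $\{\hmu_{i,n}>\mu_i+\veps\}$ and $\{\hs_{i,n}>\sig_i\}$ are summable in $n$ via the Beta-type law $\hs_{i,n}/\sig_i\sim\mathrm{Beta}(n-1,2)$ of the uniform range and an order-statistic tail bound on the midpoint, contributing the $2\sig_i/\veps$ and $\tfrac{11}{2}-\lceil k\rceil-k$ lower-order constants.

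The principal obstacle is the $\eB_i$ sum, for which the standard manipulation yields $\sum_t\Pr[j(t)=i,\eB_i(t)]\le\mathbb{E}\bigl[\sum_{n\ge\bn}(1/p_{1,n}-1)\bigr]$ with $p_{1,n}=\Pr[\tmu_1>y\mid\hmu_{1,n},\hs_{1,n}]$. The difficulty is that $p_{1,n}$ can be arbitrarily small on sample paths where $\hs_{1,n}$ is small, since the conditional law of $\tmu_1$ given $\ts$ then concentrates on a narrow interval around $\hmu_{1,n}$. I plan to lower-bound $p_{1,n}$ by restricting the $\ts$-marginal integral to a region $[\hs_{1,n}+a,\infty)$ on which $\Pr[\tmu_1>y\mid\ts=s]$ is bounded below by a constant depending only on $\veps/\sig_1$, and then take the expectation of $1/p_{1,n}$ against the joint density $n(n-1)\sig_1^{-n}(\x{n}_1-\x{1}_1)^{n-2}$ of the order statistics $(\x{1}_1,\x{n}_1)$. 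The pairing of the $\hs_{1,n}^{n_k}$-prefactor in (\ref{eq: uni_TS_post_sig}) with the $(\x{n}_1-\x{1}_1)^{n-2}$ factor from this density produces a geometric-in-$n$ series whose convergence requires the exponent $n-1-n_k=1-k$ to be strictly positive, which is precisely the hypothesis $k<1$; careful bookkeeping of this integral should yield the $\mathcal{O}(\sig_1^2/((1-k)\veps^2))$ scaling of $C(\veps,k,\sig_1)$, with the $e^{2\veps/\sig_1}$ factor arising from the change of variables used to absorb the $\hmu_{1,n}$-dependence on the event $\{\hmu_{1,n}<y\}$.
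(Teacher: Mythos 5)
Your proposal follows essentially the same route as the paper: the same two-term decomposition (a $\log T$ main term from arm $i$'s posterior tail on a typical order-statistic event with exponentially summable complement, and a constant term from $\mathbb{E}\left[\sum_n (1/p_{1,n}-1)\right]$ integrated against the joint density of $(\x{1}_1,\x{n}_1)$, with the hypothesis $k<1$ emerging exactly as you identify from the integrability of $(z-y)^{-k}$ near $z=y$). The only substantive difference is that the paper bounds the joint probability $\Pr[\tmu_i\ge\mu_1-\eps]$ by integrating over both $\ts$ and $\tmu$, which cancels the linear-in-$n$ prefactor $(n_k+1)$ that your bound on $\Pr[\ts\ge c]$ alone retains; this affects only the non-leading constants, not the $\log T$ coefficient.
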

Since Theorem~\ref{thm: uni_unif} holds for any $\eps \in \left(0, \min_{i\ne 1} \frac{\Delta_i}{2}\right)$, letting $\eps = \mathcal{O}((\log T)^{-1/3})$ directly implies that
\begin{align*}
    \liminf_{T\to \infty} \frac{\mathbb{E}[\reg(T)]}{\log T} \leq \sum_{i=2}^K \frac{\Delta_i}{\log\left(1 + \frac{2\Delta_i}{\sig_i} \right)},
\end{align*}
which shows the asymptotic optimality of TS with $k < 1$ in terms of the regret lower bound with (\ref{eq: LB_u}).
Notice that our bound is tighter than the optimal upper-confidence bound (UCB) based policy of \citet{cowan2015asymptotically}, where the remaining term is $\mathcal{O}(\eps^{-3})$.

Theorem~\ref{thm: uni_unif} not only establishes asymptotic optimality but also provides two additional observations:
(i) A moderate choice of $k$ can be beneficial because having a too small $k$ induces larger regrets as it requires many initial plays, while large $k$ increases $C(\eps, k, \sigma_1)$.
The reduction in $C(\eps, k, \sigma_1)$ is preferable when $\epsilon$ is sufficiently small.
(ii) We need a more delicate approach to consider the worst-case scenario where both $\Delta_i$ and $\sigma_i$ are extremely large. 
Since $\sigma_i$ is an unknown problem-dependent constant in this paper, we cannot directly apply the techniques used in the case where $\sigma_i$ is assumed to be a given fixed constant~\citep{agrawal2017near, jin2021mots}.

Next, we show that the vanilla TS with $k \geq 1$ based on the posteriors in (\ref{eq: uni_TS_post_sig}) and (\ref{eq: uni_TS_post_mu}) cannot achieve the regret lower bound in the theorem below.
To simplify the analysis, we consider two-armed bandit problems where two arms have the same left-boundary point of the support.
Furthermore, we provide the full information on the arm $2$ to the agent following the previous proofs~\citep{honda2014optimality, Lee2023}, where the prior on the arm 2 is the Dirac measure so that $\tmu_2(t) = \mu_2$ holds for any round $t \in \mathbb{N}$.

\begin{theorem}\label{thm: uni_TS_unif}
Assume that the arm $1$ follows $\Uni_{ab}(a_1, b_1)$ and the arm $2$ follows $\Uni_{ab}(a_2, b_2)$ with $a_1= a_2$ and $b_2 < b_1$, where $\mu_1 > \mu_2$ holds.
When $\ts_1(t)$ and $\tmu_1(t)$ are sampled from the posteriors in (\ref{eq: uni_TS_post_sig}) and (\ref{eq: uni_TS_post_mu}) with the priors $k\geq 1$, and $\tmu_2(t) = \mu_2$ holds, there exists a constant $\xi^{\mathrm{U}} > 0$ independent of $\sig_2$ satisfying 
\begin{equation*}
    \liminf_{T \to \infty}\frac{\mathbb{E}[\reg (T)]}{\log T} \geq \Delta_2 \xi^{\mathrm{U}}.
\end{equation*}
If $k>1$, then there exist constants $\xi^{\mathrm{U}}_k >0$  independent of $\sig_2$ satisfying
\begin{equation*}
    \liminf_{T \to \infty}\frac{\mathbb{E}[\reg (T)]}{T^{\frac{k-1}{k}}} \geq \Delta_2 \xi^{\mathrm{U}}_k.
\end{equation*}
\end{theorem}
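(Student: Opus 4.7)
The plan is to identify an initial ``trapping'' event on the first $\bn = 2$ plays of arm $1$ that, with positive probability, forces the algorithm to over-sample arm $2$. Because $\tmu_2(t) \equiv \mu_2$, the dynamics until arm $1$ is re-played reduces to a geometric waiting-time problem whose failure probability is frozen, and this probability admits an exact closed form. Specifically, I would derive — by conditioning on $\ts_1 = \hs_{1,n} + \tau$ (so that $\tmu_1$ is uniform on an interval of length $\tau$ centred at $\hmu_{1,n}$), integrating the conditional tail $(\tfrac12 - (\mu_2 - \hmu_{1,n})/\tau)_+$ against the marginal in (\ref{eq: uni_TS_post_sig}), changing variables $u = \hs_{1,n} + \tau$, and invoking $\tfrac{1}{n_k} - \tfrac{1}{n_k+1} = \tfrac{1}{n_k(n_k+1)}$ — the identity
\[
    \Pr\bigl[\tmu_1 > \mu_2 \mid \hmu_{1,n}, \hs_{1,n}\bigr] = \frac{1}{2}\left(\frac{\hs_{1,n}}{\hs_{1,n} + 2(\mu_2 - \hmu_{1,n})}\right)^{n_k}, \qquad \hmu_{1,n} < \mu_2,
\]
where $n_k = n + k - 2$.

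For the trap, fix $\rho \in (0, \tfrac{1}{4})$ and let $E_0$ be the event that both initial arm-$1$ samples land in $[a_1, a_1 + \rho\sig_2]$, so that $\Pr[E_0] = (\rho\sig_2/\sig_1)^2 > 0$ and on $E_0$ both $\hs_{1,\bn} \leq \rho\sig_2$ and $\mu_2 - \hmu_{1,\bn} \geq (\tfrac12 - \rho)\sig_2 \geq \sig_2/4$ hold. Let $\tau^\star$ be the first round $t > \bn$ with $j(t) = 1$: while $\tau^\star$ has not occurred, $(\hmu_{1,\bn}, \hs_{1,\bn})$ are frozen, so by the closed form above the indicators $\I[j(t) = 1]$ are i.i.d.\ Bernoulli with success probability $P \leq \tfrac{1}{2}(2\hs_{1,\bn}/\sig_2)^k$ (noting $n_k = k$ at $n = \bn$ for $k \geq 1$). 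Thus $\tau^\star - \bn \sim \mathrm{Geom}(P)$, $N_2(T) \geq \min(\tau^\star - \bn,\, T - \bn)$ on $E_0$, and $\mathbb{E}[\min(\mathrm{Geom}(P), T)] \geq c_0 \min(1/P, T)$ for an absolute constant $c_0 > 0$.

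Unfreezing $(\hmu_{1,\bn}, \hs_{1,\bn})$ against their joint density $2/\sig_1^2$ (obtained from the two order statistics by a unit Jacobian) and performing the $\hmu$-integral over the slice compatible with $E_0$ (which contributes at least $\rho\sig_2/2$ when $H \leq \rho\sig_2/2$) yields
\[
    \mathbb{E}\bigl[N_2(T)\,\I[E_0]\bigr] \;\geq\; \frac{c_0\,\rho\sig_2}{\sig_1^{2}}\int_{0}^{\rho\sig_2/2}\min\!\Bigl(T,\;2\bigl(\tfrac{\sig_2}{2H}\bigr)^{k}\Bigr)\,dH.
\]
Writing $H^\star = \sig_2 \cdot 2^{(1-k)/k}/T^{1/k}$ for the breakpoint and splitting the integral at $H^\star$, the ``flat'' piece contributes $T H^\star$ and the ``tail'' piece $\int_{H^\star}^{\rho\sig_2/2} 2(\sig_2/(2H))^k\,dH$ is an elementary power-law integral; together they give $\Theta(\sig_2 \log T)$ for $k = 1$ and $\Theta(\sig_2 T^{(k-1)/k})$ for $k > 1$. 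Multiplying by $\Delta_2$ then produces the two claimed asymptotic lower bounds with positive constants $\xi^{\mathrm{U}}$ and $\xi^{\mathrm{U}}_k$.

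The main obstacle is the exact factorisation in the first paragraph: without the clean $(\hs/(\hs + 2(\mu_2 - \hmu)))^{n_k}$ expression, disentangling the frozen statistics from the prior exponent $n_k = k$ would be painful, and the precise polynomial rate $T^{(k-1)/k}$ for $k > 1$ would be difficult to pin down. A secondary technical point is to verify the ``independent of $\sig_2$'' assertion: the $\sig_2$-dependence of the integrand collapses into the breakpoint $H^\star$ after the rescaling $H \mapsto H/H^\star$, but one must still track the residual prefactor $\rho\sig_2/\sig_1^2$ via $\sig_1 = \sig_2 + 2\Delta_2$ to see that the leading constant can be packaged as a positive number whose positivity does not hinge on the size of $\sig_2$.
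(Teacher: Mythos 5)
Your proposal follows essentially the same route as the paper's proof: restrict to the sample path on which arm $1$ is never replayed after its two forced initial pulls, observe that with $\tmu_2(t)\equiv\mu_2$ the statistics $(\x{1}_1,\x{2}_1)$ are frozen so the waiting time until arm $1$ is replayed is geometric with a frozen success probability, compute that probability from the posteriors (\ref{eq: uni_TS_post_sig})--(\ref{eq: uni_TS_post_mu}), and integrate $\min(T,1/P)$ against the order-statistic density, splitting at a breakpoint to extract $\log T$ for $k=1$ and $T^{(k-1)/k}$ for $k>1$. Your closed form $\mathbb{P}[\tmu_1>\mu_2\mid\hmu_{1,n},\hs_{1,n}]=\tfrac12\bigl(\hs_{1,n}/(\hs_{1,n}+2(\mu_2-\hmu_{1,n}))\bigr)^{n_k}$ checks out; the paper instead lower-bounds $\mathbb{P}[\ts_1\le\hs_{1,2}+2(\mu_2-\hmu_{1,2})]$ and gets the matching escape-probability bound $(k+1)(\cdot)^k$, and the two are interchangeable here.

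The one genuine gap is the clause ``$\xi^{\mathrm{U}}$ independent of $\sig_2$,'' which you defer to a ``secondary technical point'' and then wave through. Your prefactor is $\rho\sig_2/\sig_1^2$ times an integral of size $\Theta(\sig_2\log T)$ (resp.\ $\Theta(\sig_2 T^{(k-1)/k})$), so the constant you actually produce is of order $\sig_2^2/\sig_1^2=\sig_2^2/(\sig_2+2\Delta_2)^2$. For a fixed instance this is positive, but it is not independent of $\sig_2$: it vanishes like $\sig_2^2$ as $\sig_2\to 0$ with $\Delta_2$ fixed, which is much faster than the $1/\log(1+2\Delta_2/\sig_2)$ decay of the benchmark in (\ref{eq: LB_u}), so the suboptimality conclusion that motivates the theorem would not follow from your constant. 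This is not an artifact of your particular choice of trap: for the two-sample posterior of arm $1$ to concentrate below $\mu_2$, both initial samples must land within $O(\sig_2)$ of $a_1$, an event of probability $O(\sig_2^2/\sig_1^2)$ under $\Uni_{ab}(a_1,b_1)$, so any ``first-trap'' argument inherits this factor. The paper's own computation reaches the $\sig_2$-free constant only because it writes the joint density of the two order statistics of arm $1$ as $2\sig_2^{-2}$, whereas Lemma~\ref{lem: uni_SD_U} applied to arm $1$ gives $2\sig_1^{-2}$; you need to either reconcile that discrepancy or state your conclusion with the weaker, $\sig_2$-dependent constant.
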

Theorem~\ref{thm: uni_TS_unif} shows that TS with $k\geq 1$ suffers at least logarithmic regrets in expectation.
Although the regret lower bound with (\ref{eq: LB_u}) approaches zero for sufficiently small $\sig_2=b_2-a_2$, the regret of TS is lower-bounded by a non-zero term since the coefficient of $\log T$ converges to a non-zero constant.
Therefore, TS with prior $k \geq 1$ is suboptimal, at least for sufficiently small $\sig_2$, where the same result was found in the Gaussian models~\citep{honda2014optimality}.
Furthermore, one can see that priors with $k > 2$ are suboptimal even in the view of the worst-case analysis since their regret can be larger than $\sqrt{T}$ order for some instances.

From Theorem~\ref{thm: uni_TS_unif}, we can obtain the following corollary, which shows the suboptimality of some uniform priors with different parameterizations.
\begin{corollary}\label{co: ex}
For any one-to-one transformations $g(\mu)$ and $h(\sig)$, if $ \frac{\dx}{\dx \mu} g^{-1}(\mu)\propto 1$ and $\frac{\dx}{\dx \sig} h^{-1}(\sig) \propto \sig^{-k}$ hold with some $k\geq 1$, then TS with the uniform priors with $(g(\mu),h(\sig))$ parameterization, $\pi_{\mathrm{u}}^{g(\mu), h(\sig)}$ is suboptimal.
\end{corollary}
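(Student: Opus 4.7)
The plan is to reduce the claim to Theorem~\ref{thm: uni_TS_unif} by a change-of-variables argument on the improper prior. The key observation is that Bayesian posterior sampling is invariant under order-preserving reparameterizations of the parameter space: sampling from the posterior in the $(g(\mu), h(\sig))$ coordinates and pushing back via $(g^{-1}, h^{-1})$ produces the same distribution over sampled means as sampling directly from the pushforward of $\pi_{\mathrm{u}}^{g(\mu),h(\sig)}$ in the $(\mu, \sig)$ coordinates. Since $g$ will turn out to be affine, arm rankings based on sampled means are preserved by the bijection, so the two TS procedures choose the same arm at every round almost surely.

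The first step is to compute the pushforward. Letting $\phi = g(\mu)$ and $\psi = h(\sig)$, the change-of-variables formula gives
\begin{equation*}
\pi(\mu, \sig) \;\propto\; |g'(\mu)| \cdot |h'(\sig)| \;=\; \frac{1}{|(g^{-1})'(\phi)|} \cdot \frac{1}{|(h^{-1})'(\psi)|}
\end{equation*}
by the inverse function theorem. The assumption $(g^{-1})'(\phi) \propto 1$ forces $g^{-1}$ to be affine with constant (and WLOG positive) derivative, so $|g'(\mu)|$ is a nonzero constant and $g$ preserves arm rankings. The assumption $(h^{-1})'(\psi) = c\,\psi^{-k}$ integrates, for $k > 1$, to $h^{-1}(\psi) = \tfrac{c}{1-k}\psi^{1-k}$ up to an additive constant; solving for $\psi$ in terms of $\sig$ gives $\psi \propto \sig^{-1/(k-1)}$, and hence $|h'(\sig)| = 1/|(h^{-1})'(\psi)| \propto \psi^{k} \propto \sig^{-k/(k-1)}$. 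Consequently the pushforward prior on $(\mu, \sig)$ is of the form $\sig^{-k'}$ with $k' = k/(k-1) > 1$.

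With this in hand, Theorem~\ref{thm: uni_TS_unif} applied to the exponent $k' > 1$ yields a suboptimality lower bound on the regret of TS driven by the pushforward prior; by the sampling equivalence noted at the outset, the same lower bound applies to TS under $\pi_{\mathrm{u}}^{g(\mu),h(\sig)}$, which proves the corollary for $k > 1$. The main obstacle is the boundary case $k = 1$: there $(h^{-1})'(\psi) \propto \psi^{-1}$ integrates to a logarithm, so the pushforward takes the form $\pi(\mu, \sig) \propto e^{\sig/c}$ rather than a power law, and Theorem~\ref{thm: uni_TS_unif} does not apply verbatim. I would address this by revisiting the proof of Theorem~\ref{thm: uni_TS_unif} and checking that its driving mechanism (the scale posterior of arm~$1$ concentrating on values small enough to squeeze the sampled mean beneath $\mu_2$) persists under the exponential weighting, noting that $e^{\sig/c}$ dominates every $\sig^{-k'}$ in the tail regime that matters for the lower bound; alternatively, the key estimates of Theorem~\ref{thm: uni_TS_unif} can be reworked in the improper-prior form $e^{\sig/c}$ directly.
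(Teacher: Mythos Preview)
Your overall strategy---push the uniform prior from the new coordinates back to $(\mu,\sigma)$ via the Jacobian and then invoke Theorem~\ref{thm: uni_TS_unif}---is exactly the paper's. The paper's execution, however, is a one-liner: it reads the hypothesis as specifying the Jacobian factor directly as $\sigma^{-k}$ in the \emph{original} scale coordinate, so the induced prior on $(\mu,\sigma)$ is $\sigma^{-k}$ with the \emph{same} $k\geq1$ that appears in the hypothesis, and Theorem~\ref{thm: uni_TS_unif} (together with the analogous Gaussian result of Honda and Takemura) applies immediately, with no boundary case at $k=1$. The paper's notation is admittedly loose here---$\mu$ and $\sigma$ double as the original parameters and as dummy arguments for the derivatives of $g^{-1},h^{-1}$---but the intent is that the hypothesis names the induced prior exponent directly.

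You instead read the hypothesis as a condition on $(h^{-1})'$ in the \emph{new} variable $\psi$, then integrate, invert, and differentiate to recover $h'(\sigma)$. This detour creates two problems that the paper's argument does not face. First, the constant of integration need not vanish: for instance $h^{-1}(\psi)=-\psi^{-1}-1$ on $\psi\in(-1,0)$ satisfies $(h^{-1})'(\psi)=\psi^{-2}$ and maps bijectively onto $(0,\infty)$, yet the induced prior is $\pi(\mu,\sigma)\propto(\sigma+1)^{-2}$, which is not a power of $\sigma$ and lies outside the scope of Theorem~\ref{thm: uni_TS_unif}; your conclusion $k'=k/(k-1)$ therefore does not follow in general. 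Second, the $k=1$ ``boundary'' and the exponential-type prior you obtain there are artifacts of this same detour; under the intended reading $k=1$ is handled uniformly with $k>1$. The idea is right, but the extra machinery both over-complicates the argument and opens a genuine gap via the ignored integration constant.
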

\begin{proof}
The uniform prior with $(g(\mu),h(\sig))$ parameterization indicates that $\pi_{\mathrm{u}}^{g(\mu), h(\sig)} \propto 1$.
Let us define $f(\mu,\sig) = (g^{-1}(\mu), h^{-1}(\sig))$.
Then, the corresponding prior with $(\ms)$ parameterization can be obtained by multiplying the absolute value of the Jacobian determinant of $f$, which is given as $|\det \nabla f| \cdot \pi_{\mathrm{u}}^{g(\mu), h(\sig)} = \sig^{-k}$.
Since $k\geq 1$ holds from the assumption, the proof follows from Theorem~\ref{thm: uni_TS_unif} in this paper for the uniform models and from Theorem~2 in \citet{honda2014optimality} for the Gaussian models.
\end{proof}

The result of Corollary~\ref{co: ex} would not be surprising since one can easily expect that some arbitrary parameterizations can result in poor performance of TS with the uniform prior.
However, this variability can introduce unnecessary concerns about the appropriate way to parameterize models.
While the uniform prior with the LS parameterization might seem like a natural choice in the LS family, this idea cannot be generalized to other models. 
For instance, the uniform prior with the scale-shape parameterization in the Pareto model was shown to be suboptimal~\citep{Lee2023} and Corollary~\ref{co: ex} further demonstrates the suboptimality of the rate-shape parameterization.
Another consideration would be the use of natural parameters for exponential family models.
However, the uniform prior with $\left( \frac{\mu}{\sig^2}, -\frac{1}{2\sig^2} \right)$ parameterization can be seen as prior with $k=5$ in the LS parameterization, which is suboptimal in the Gaussian bandits.
Therefore, such observations emphasize the importance of the invariance property of the priors in the MAB problems, which is related to the trustworthiness of priors.

The theorem below shows the asymptotic optimality of TS-T with the prior with any $k$, including the reference prior\footnote{Although the reference priors are invariant under the transformation that preserves the group order of parameters in general~\citep[see][Theorem 2.1]{datta1996invariance}, it is invariant under any one-to-one transformation in the LS family~\citep{ghosh2011objective}.} and the Jeffreys prior that are invariant under any one-to-one transformations.
\begin{theorem}\label{thm: uni_TST_unif}
With the same notation as Theorem~\ref{thm: uni_unif}, the expected regret of TS-T with prior $k \in \mathbb{R}$ for the uniform models is bounded as
\begin{equation*}
    \mathbb{E}[\mathrm{Reg}(T)] \leq \sum_{i=2}^K \Delta_i \Bigg(\frac{ \log T}{\log\left(1 + \frac{2\Delta_i - 4\eps}{\sig_i} \right)} +  \frac{2\sig_i}{\eps} + \frac{1}{\sig_i} + \max\left( \frac{7}{2} , \frac{9}{2}-\lceil k \rceil \right)\Bigg) + \Delta_{\mathrm{max}}C'(\eps, k, \sig_1),
\end{equation*}
where $C'(\eps, k, \sig_1) = 1+\frac{9\sig_1}{\eps} + \frac{3}{16} \frac{\sig_1^2}{\eps^2}(2e^{\frac{2\eps}{\sig_1}}-1) = \mathcal{O}\left(\frac{\sig_1^2}{\eps^2}\right)$ for $k < 1$, $C'(\eps, 1, \sig_1) = \mathcal{O}\left( \frac{\sig_1^2 \log(\sig_1)}{\eps^2} \right)$, and for $k>1$ $C'(\eps, k, \sig_1) = \mathcal{O}\left( \frac{\sig_1^{2k}}{\eps^{k+1}} \right)$.
\end{theorem}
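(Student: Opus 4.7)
The plan is to follow the standard Agrawal-Goyal style regret decomposition, adapted to TS-T via the truncated posterior in (\ref{eq: uni_TST_post_sig}) and (\ref{eq: uni_TST_post_mu}). For each suboptimal arm $i$, I would bound $\mathbb{E}[N_i(T+1)]$ by splitting plays at round $t$ according to whether the sample $\tmu_i(t)$ exceeds the threshold $\mu_1 - \eps$. Since the event $\{j(t)=i,\ \tmu_i(t)\leq\mu_1-\eps\}$ can only occur when arm $1$ is also under-sampled, i.e.\ $\tmu_1(t)\leq\mu_1-\eps$, this yields two main contributions to control, an ``over-sampling'' term for arm $i$ and an ``under-sampling'' term for arm $1$, plus the initial $\bn$ plays.

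For the over-sampling term, I would combine the explicit marginal density of $\sig$ with the conditional uniform density of $\mu$ given $\sig$ from (\ref{eq: uni_TST_post_sig})--(\ref{eq: uni_TST_post_mu}), with $\hs_{i,n}$ replaced by $\bs_{i,n}\ge n^{-1}$. Because $\tmu_i$ lies uniformly in a window of width $\ts_i-\bs_{i,n}$ around $\hmu_{i,n}$, exceeding $\mu_1-\eps=\mu_i+\Delta_i-\eps$ forces $\ts_i$ to be large; integrating against the Pareto-like tail of $\sig$ produces a geometric decay in $n$ of order $(1+(2\Delta_i-4\eps)/\sig_i)^{-n}$, whose sum gives the leading $\log T/\log(1+(2\Delta_i-4\eps)/\sig_i)$ term together with the $2\sig_i/\eps$ correction. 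The $1/\sig_i$ and constant contributions arise from bookkeeping on the truncation threshold $n^{-1}$ and on the number of initial plays needed for a proper posterior, which accounts for the $\max(7/2,\,9/2-\lceil k\rceil)$ discrepancy with Theorem~\ref{thm: uni_unif}.

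The under-sampling term is where truncation plays its essential role. Without it, for $k\geq 1$, $\hs_{1,n}$ can be so small that the conditional posterior of $\mu$ becomes a near-degenerate spike at $\hmu_{1,n}$, which is precisely the mechanism by which Theorem~\ref{thm: uni_TS_unif} forces logarithmic regret. Enforcing $\bs_{1,n}\geq n^{-1}$ guarantees a conditional window of width at least $\ts_1-n^{-1}$, so that whenever $\hmu_{1,n}$ is within $\mathcal{O}(\eps)$ of $\mu_1$ and $\ts_1$ exceeds $\bs_{1,n}$ by a constant fraction of $\sig_1$, the probability $\Pr[\tmu_1(t)>\mu_1-\eps\mid\F_t]$ is bounded away from zero. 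I would then use standard concentration of uniform order statistics to show that $|\hmu_{1,n}-\mu_1|$ and $|\hs_{1,n}-\sig_1|$ are simultaneously $\mathcal{O}(\eps)$ except on an event whose contribution is summable over $n$, yielding the finite constant $C'(\eps,k,\sig_1)$.

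The main obstacle I anticipate is the under-sampling analysis for $k>1$, where the prior $\sig^{-k}$ pulls the marginal posterior of $\sig_1$ back toward $\bs_{1,n}$. Since the marginal density in (\ref{eq: uni_TS_post_sig}) decays roughly like $\sig^{-(n+k-1)}$, the typical $\ts_1$ hugs the truncation level and the effective sampling window $\ts_1-\bs_{1,n}$ shrinks with $k$. Quantifying, as a function of $k$ and $\sig_1$, how often $\ts_1$ deviates sufficiently above $\bs_{1,n}=n^{-1}$ for $\tmu_1$ to cross $\mu_1-\eps$, and keeping the cumulative failure probability summable, is exactly what inflates $C'(\eps,k,\sig_1)$ to $\mathcal{O}(\sig_1^{2k}/\eps^{k+1})$ for $k>1$ while preserving the $T$-independence of the constant. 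Once this is done, dividing the total regret bound by $\log T$ and letting $\eps\to0$ recovers the asymptotically optimal coefficient $1/\log(1+2\Delta_i/\sig_i)$ for every $k\in\mathbb{R}$, completing the proof.
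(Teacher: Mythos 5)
Your overall architecture matches the paper's: the regret is split into the initial plays, a ``good'' term where the optimal arm's sample stays above $\mu_1-\eps$ (your over-sampling term, the paper's $(\mathrm{GO})$), and a ``bad'' term where it drops below (your under-sampling term, the paper's $(\mathrm{BO})$). Your treatment of the over-sampling term --- observing that $\tmu_i\geq\mu_1-\eps$ forces $\ts_i$ to be large and integrating the Pareto-like tail of the marginal posterior of $\sig$ to obtain decay of order $(1+(2\Delta_i-4\eps)/\sig_i)^{-n}$ --- is essentially the paper's Lemma~\ref{lem: uni_main_U}, and the $1/\sig_i$ correction from requiring $n\geq 1/\sig_i$ so that the truncation is inactive on the good event is also as in the paper.

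The gap is in the under-sampling term. You propose to show that $\bp_n=\mathbb{P}[\tmu_1\geq\mu_1-\eps\mid\x{1}_1,\bx{n}_1]$ is bounded away from zero on a good event and that the bad event has ``summable contribution,'' but the quantity that must actually be controlled is $\sum_n\mathbb{E}[(1-\bp_n)/\bp_n]$, since the number of consecutive rounds in which the optimal arm loses at a fixed $N_1(t)=n$ is geometric with success probability $\bp_n$. Hence on the bad event $\{\x{1}_1+\bx{n}_1\leq 2(\mu_1-\eps)\}$ one needs the expectation of the \emph{reciprocal} $1/\bp_n$, which equals $2\bigl(2(\mu_1-\eps-\x{1}_1)/(\bx{n}_1-\x{1}_1)\bigr)^{n+k-2}$ and must be integrated against the joint density of the order statistics; a bound on the probability of the bad event alone is not enough. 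The paper carries this out by splitting on whether the truncation is active ($(\diamond_{\mathrm{U}})$) or not ($(\dagger_{\mathrm{U}})$), and it is precisely the restriction of the inner integral to $z\geq y+1/n$ that renders $\int(z-y)^{-k}\,\dx z$ finite for $k\geq 1$ and produces the extra factors $n\log(n\sig_1)$ (for $k=1$) and $n^{k-1}$ (for $k>1$) that, after summing the resulting polylogarithmic series, give $C'=\mathcal{O}(\sig_1^2\log(\sig_1)/\eps^2)$ and $\mathcal{O}(\sig_1^{2k}/\eps^{k+1})$. You correctly flag this as the main obstacle but do not resolve it, so the stated forms of $C'(\eps,k,\sig_1)$ remain unjustified. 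A minor correction: when $\hmu_{1,n}\geq\mu_1-\eps$ the conditional posterior of $\mu$ is symmetric about $\hmu_{1,n}$, so $\bp_n\geq 1/2$ holds with no condition on how far $\ts_1$ exceeds $\bs_{1,n}$.
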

Although Theorem~\ref{thm: uni_TST_unif} states that any prior $\sig^{-k}$ can achieve the regret lower bound \emph{asymptotically}, we recommend using the priors with $k\in [0,1]$ since small $k$ requires many initial plays from $\bn = \max(2, 3-\lceil k \rceil)$, while large $k$ will suffer from a large regret in the finite time due to large $C'(\eps,k,\sig_1)$.

Not only for the uniform models, but TS-T with the reference prior and the Jeffreys prior are also asymptotically optimal for the Gaussian models, which were found to be suboptimal for TS~\citep{honda2014optimality}.
\begin{theorem}\label{thm: TST_gauss}
Assume arm $1$ is the unique optimal arm with a finite mean.
Given arbitrary $\eps \in \left(0, \min_{i\ne 1} \frac{\Delta_i}{2}\right)$, there exists a problem-prior-dependent constant $C''(\eps, k, \sig_1)$ such that the expected regret of TS-T with priors $\sig^{-k}$ for the Gaussian models is bounded for $k \leq 2$ as
\begin{equation*}
    \mathbb{E}[\mathrm{Reg}(T)] \leq \sum_{i=2}^K \Delta_i  \Bigg( \frac{ \log T}{\frac{1}{2}\log\left(1 + \frac{(\Delta_i-2\eps)^2}{\sigma_i^2+\eps}\right)} + \frac{1}{\sig_i^2 }  + 3 -k + \frac{\sqrt{\sig_i^2+\eps}}{\Delta_i - 2\eps} + \frac{2\sig_i^2 e^{\frac{\eps}{2\sig_i^2}}+2\sig_i^4 e^{\frac{\eps}{\sig_i^2}}}{\eps^2}  \Bigg) + \Delta_{\mathrm{max}} C''(\eps, k, \sig_1),
\end{equation*}
where $C''(\eps, k, \sig_1) = \mathcal{O}\left( \left(\frac{\sig_1}{\eps}\right)^{4 + \lceil k \rceil \I[k \geq 1]}\right)$.
\end{theorem}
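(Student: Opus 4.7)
The plan is to adapt the Agrawal-Goyal/Kaufmann-style decomposition used by \citet{honda2014optimality} for Gaussian bandits with unknown variance, with the key twist that every occurrence of the MLE $\hs_{i,n}$ in the posterior tail is replaced by the truncated estimator $\bs_{i,n}\ge n^{-1/2}$ from (\ref{eq: TSTpost_gauss}). Fix $y:=\mu_1-2\eps$ and decompose
\begin{equation*}
\mathbb{E}[N_i(T)]\le \sum_{t=1}^T \mathbb{P}\!\left(j(t)=i,\ \tmu_i(t)>y\right)+\sum_{t=1}^T \mathbb{P}\!\left(j(t)=i,\ \tmu_i(t)\le y\right).
\end{equation*}
The first sum will yield the leading $\log T/\frac{1}{2}\log(1+(\Delta_i-2\eps)^2/(\sig_i^2+\eps))$ term matching (\ref{eq: LB_g}), while the second is absorbed into $\Delta_{\mathrm{max}}C''(\eps,k,\sig_1)$.

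For the first sum (suboptimal-arm concentration), I would condition on the good event $\{|\hmu_{i,n}-\mu_i|<\eps/2\}\cap\{|\hs_{i,n}^2-\sig_i^2|<\eps\}$; its complement has probability summable in $n$ by Gaussian and chi-squared tail bounds, accounting for the $1/\sig_i^2$ and $(\sig_i^2 e^{\eps/(2\sig_i^2)}+\sig_i^4 e^{\eps/\sig_i^2})/\eps^2$ constants. On the good event, $\bs_{i,n}=\hs_{i,n}$ for $n$ past a $\sig_i$-dependent threshold and $\tmu_i(t)$ is a non-standardized $\mathrm{t}_{n_k}$ variable centered near $\mu_i$ with scale near $\sig_i$, so a Chernoff-type $\mathrm{t}$-tail estimate gives $\mathbb{P}(\tmu_i(t)>y)\lesssim \tfrac{\sqrt{\sig_i^2+\eps}}{\Delta_i-2\eps}\exp(-\tfrac{n_k}{2}\log(1+(\Delta_i-2\eps)^2/(\sig_i^2+\eps)))$. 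Summing in $n$ produces the claimed leading logarithmic term, the $\sqrt{\sig_i^2+\eps}/(\Delta_i-2\eps)$ correction, and the $3-k$ constant from $\bn=\max(2,3-\lceil k\rceil)$.

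For the second sum (optimal-arm anti-concentration), I would use the standard Agrawal-Goyal coupling $\mathbb{P}(j(t)=i,\tmu_i(t)\le y\mid \Ft)\le \tfrac{1-p_t}{p_t}\mathbb{P}(j(t)=1,\tmu_i(t)\le y\mid\Ft)$ with $p_t:=\mathbb{P}(\tmu_1(t)>y\mid\Ft)$, which by the stopping-time rewrite in terms of the pulls of arm $1$ reduces the sum to $\sum_{n\ge\bn}\mathbb{E}[1/p_n-1]$. The crucial step is that $\bs_{1,n}\ge n^{-1/2}$ forces the posterior $f^{\mathrm{t}}_{n_k}(\cdot\mid \hmu_{1,n},\bs_{1,n})$ to retain spread at least $n^{-1/2}$, so the $\mathrm{t}$-tail satisfies $p_n\gtrsim (n^{-1/2}/(|\hmu_{1,n}-y|\vee n^{-1/2}))^{n_k+1}$ up to universal constants. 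Combining with a Gaussian tail bound on $\{\hmu_{1,n}<y+\eps/2\}$ makes $\mathbb{E}[1/p_n]=\mathcal{O}(n^{k/2})$ on the bad event and $\mathcal{O}(1)$ once $n$ exceeds a $\sig_1/\eps$-dependent threshold. Aggregating over the early rounds produces $C''(\eps,k,\sig_1)=\mathcal{O}((\sig_1/\eps)^{4+\lceil k\rceil\I[k\ge 1]})$; the extra $\lceil k\rceil\I[k\ge 1]$ factor reflects that more concentrated priors ($k\ge 1$) yield a more peaked $\mathrm{t}$-tail, so the truncation must compensate more aggressively.

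The main obstacle is precisely the control of $\mathbb{E}[1/p_n]$ in the first few rounds, where $\hs_{1,n}$ has not yet concentrated near $\sig_1$: without truncation this is essentially unbounded and is the failure mode that \citet{honda2014optimality} exploit in their Theorem~2, and the design $\bs_{1,n}\ge n^{-1/2}$ is engineered exactly to cap the blow-up at a polynomial rate compatible with summation. The restriction $k\le 2$ keeps $n_k=n+k-2$ in a regime where the $n^{-1/2}$ truncation is strong enough to dominate the $\mathrm{t}$-tail; for $k>2$ the $\mathrm{t}$-density near its mode is too peaked relative to $n^{-1/2}$, and a tighter truncation schedule would be required.
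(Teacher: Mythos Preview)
Your high-level decomposition matches the paper's (they use the event $\{\tmu^*(t)\ge\mu_1-\eps\}$ rather than $\{\tmu_i(t)>y\}$, but both lead to the same $\sum_n\mathbb{E}[1/p_n-1]$ for the optimal-arm anti-concentration), and your treatment of the first (GO) sum is essentially what the paper does via Lemma~\ref{lem: uni_GO_gauss}. The gap is in the BO step.

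Your control of $\mathbb{E}[1/p_n]$ relies \emph{only} on the floor $\bs_{1,n}\ge n^{-1/2}$ (equivalently $\bar S_{1,n}\ge 1$), which gives $1/p_n\lesssim A_{n,k}^{-1}\bigl(1+n(\hat x_{1,n}+\eps)^2\bigr)^{n_k/2}$. But the expectation of this quantity over $\hat x_{1,n}\sim\normal(0,1/n)$ restricted to $\{\hat x_{1,n}<-\eps\}$ is, after the change of variable $u=\sqrt{n}(\hat x_{1,n}+\eps)$, proportional to $e^{-n\eps^2/2}\int_0^\infty(1+u^2)^{n_k/2}e^{-u^2/2}\,\dx u$, and this integral is $\Gamma(1/2)\,U(1/2,(n_k+3)/2,1/2)\asymp 2^{n_k/2}\Gamma(n_k/2+1)$, which grows like $n^{n/2}$ by Stirling. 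So the crude floor alone yields a super-polynomial bound, not the $\mathcal{O}(n^{k/2})$ you claim; no Gaussian tail on $\hat x_{1,n}$ can rescue this because the blow-up sits in the $\bar S$-coordinate, not the location coordinate.

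What the paper actually does (Lemma~\ref{lem: uni_BO_gauss}) is split $\mathbb{E}[\I[\hat x_{1,n}<-\eps]/p_n]$ into $(\dagger_{\rG})$ on $\{S_{1,n}>1\}$ (truncation inactive, $\bar S_{1,n}=S_{1,n}$) and $(\diamond_{\rG})$ on $\{S_{1,n}\le 1\}$ (truncation active, $\bar S_{1,n}=1$). For $(\diamond_{\rG})$ the super-polynomial moment above is killed by the chi-squared tail $\mathbb{P}[S_{1,n}\le 1]\lesssim (n-1)^{-(n-1)/2}e^{n/2}$, and the product is then tamed via the confluent-hypergeometric bound of Lemma~\ref{lem: confluent_bound}. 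For $(\dagger_{\rG})$ you must integrate over the \emph{actual} $S_{1,n}\in(1,\infty)$; the lower limit $s=1$ (rather than $s=0$) is exactly what removes the divergence that made vanilla TS fail for $k\ge 1$, and the resulting double integral is handled by the change of variables $(x,s)\mapsto(z,w)$ and the incomplete-gamma/beta estimate of Lemma~\ref{lem: gauss_MI}. These two lemmas, proved separately for $k\in\{1,2\}$ via induction using modified Bessel functions, are the technical heart of the theorem, and your sketch does not contain any mechanism that plays their role.
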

Letting $\eps=\mathcal{O}\left((\log T)^{-1/7}\right)$ provides an $\eps$-free bound, which shows the asymptotic optimality of TS-T.
Although the overall proofs of Theorem~\ref{thm: TST_gauss} resemble that of \citet{honda2014optimality}, the introduction of the truncated estimator $\bs$ induces a technical challenge of integrating a product of the beta function and the incomplete gamma function, which did not occur in the previous analysis.
We solve it by exploiting the modified Bessel functions of the second kind and confluent hypergeometric functions of the second kind to carefully control the effect of $\bs$.

\section{Numerical Validation}
This section presents simulation results to validate the theoretical analysis of TS and TS-T.
To provide a baseline for comparison, we present the results of asymptotically optimal UCB-based policies, CK-UCB for the uniform bandits~\citep{cowan2015asymptotically} where ``CK'' is the initials of the authors following the notation in the original paper.

We considered a $6$-armed uniform bandit instance with parameters given as $\bmu=(5.5, 5.0, 4.5, 4.0, 4.75, 3.0)$ and $\bm{\sig} = (4.5, 5.0, 4.5, 4, 3.75, 2.0)$, which was previously studied~\citep{cowan2015asymptotically}.
In Figure~\ref{fig: uni_U}, the solid lines denote the averaged regret over 10,000 independent runs of the policy that was found to be optimal in terms of the regret lower bound with (\ref{eq: LB_u}), whereas the dashed lines denote that of the suboptimal policies.
The dotted lines denote the asymptotic regret lower bound.
Note that the Jeffreys prior ($k=2$) coincides with the uniform prior with the location-rate parameterizations ($\mu, \sig^{-1}$).
Validations in the Gaussian models are given in the appendix.

\begin{figure}[!t]
     \centering
     \begin{subfigure}[b]{0.486\columnwidth}
         \centering
         \includegraphics[width=\columnwidth]{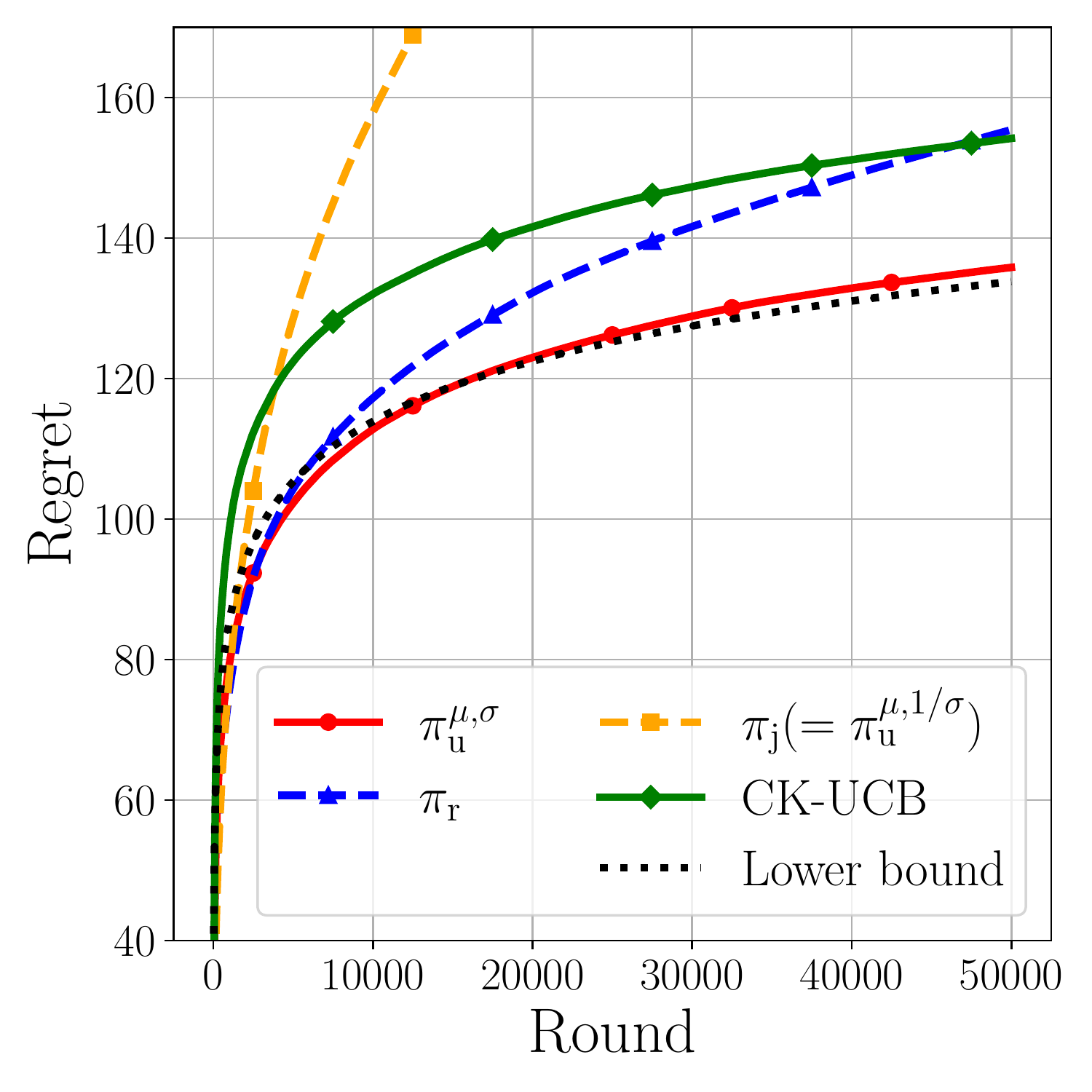}
         \caption{Regret of TS.}
         \label{fig: uni_TS_U}
     \end{subfigure}
     \hfil
     \begin{subfigure}[b]{0.486\columnwidth}
         \centering
         \includegraphics[width=\columnwidth]{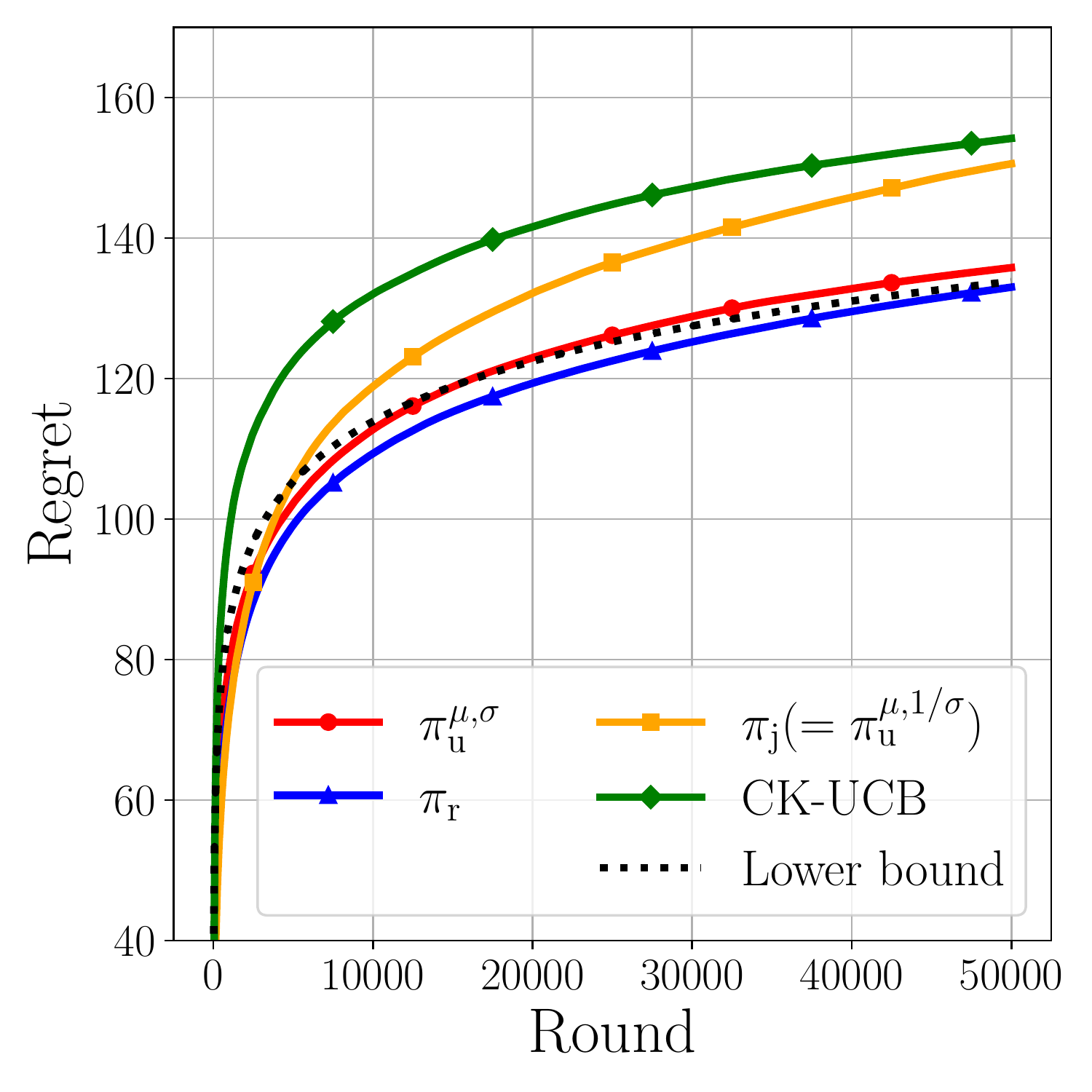}
         \caption{Regret of TS-T.}
         \label{fig: uni_TST_U}
     \end{subfigure}
\caption{Cumulative regret for the $6$-armed uniform bandit instance. The solid lines and the dashed lines denote the averaged values over 10,000 independent runs of the policies that can and cannot achieve the lower bound, respectively.} 
\label{fig: uni_U}
\end{figure}
In Figure~\ref{fig: uni_TS_U}, TS with the uniform prior $\pi_{\mathrm{u}}^{\ms}$ shows the best performance, while TS with the Jeffreys prior $\pi_{\mathrm{j}}$ and the reference prior $\pi_{\mathrm{r}}$ suffer from a large regret.
Although TS with the reference prior shows a similar finite-time performance to CK-UCB, it seems to have a larger regret order compared to asymptotically optimal policies.
However, as shown in Figure~\ref{fig: uni_TST_U}, the performance of TS-T with the reference prior improves significantly, which highlights the effectiveness of the truncation procedure in the TS-based policy.

\section{Conclusion}
In this paper, we first demonstrated the importance of choosing noninformative priors for the vanilla TS under the uniform bandit models with unknown supports.
Although the uniform prior is optimal in terms of the expected problem-dependent regret, we showed that the use of the uniform prior is problematic due to its dependency on parameterizations, which makes the optimality under the specific parameterization less informative in general.
On the other hand, invariant noninformative priors, the reference prior and the Jeffreys prior, are shown to be suboptimal.

Nevertheless, in the various multiparameter models, the reference priors have been shown to be on the borderline between optimal and suboptimal in terms of prior parameter $k$~\citep{honda2014optimality, Lee2023}.
Therefore, we expect that TS with the reference prior could serve as a \emph{baseline} for other models since the reference posterior can be derived generally~\citep{berger1992development} and that an optimal policy would perform at least better than TS with the reference priors.
Furthermore, by combining with TS-T, one can focus on the adaptive truncation, which provides an alternative solution to achieve optimality with renowned invariant priors.
We expect that adaptively truncating parameter space would be more convenient than finding good priors for each model in practice.
Our analysis was supported by the simulation results, where the invariant priors under TS-T showed a better performance than those under TS.

\section*{Acknowledgement}
JL was supported by JST SPRING, Grant Number JPMJSP2108. CC and MS were supported by the Institute for AI and Beyond, UTokyo.

\bibliographystyle{plainnat}
\bibliography{aaai24, app}

\appendix
\setcounter{secnumdepth}{2}
\onecolumn

\section{Details on Priors}
In this section, we provide a brief introduction to some well-known noninformative priors that are of relevance to this paper.
For a more comprehensive understanding and detailed information, we encourage readers to refer to the references herein.

\subsection{Conjugate Priors}
While it is true that prior distributions can be designed arbitrarily, it is obvious that certain choices can make it difficult to accurately infer parameters or induce extremely complicated posterior distributions obtained by Bayes' theorem, which is computationally expensive to handle.
The conjugate prior is defined to solve at least the latter problem by simplifying the computation of the posterior distribution and performing analytical calculations efficiently.
Usually, a prior is called conjugate if the posterior and the prior distributions belong to the same model.
Beyond the algebraic convenience, \citet{agarwal2010geometric} provided a geometric meaning of the conjugate prior, where they showed that the conjugate prior has the same geometry as the likelihood when it belongs to the exponential family.

Strictly speaking, the conjugate prior is not a noninformative prior since it often involves the choice of hyperparameters for prior distributions where one can combine their own belief or knowledge~\citep{robert2007bayesian}.
Furthermore, the existence of the conjugate prior for the regular non-exponential family, such as $t$-distribution, is unclear since the derivation of the conjugate prior is usually based on the Neyman factorization theorem where the existence of the sufficient is crucial~\citep{neyman1936teorema, halmos1949application} and the existence of a sufficient statistic directly implies the model belongs to the exponential family under mild regularity conditions by Pitman-Koopman-Darmois Lemma~\citep{jeffreys1961theory}.
For more details, one can refer to \citet{orbanz2009functional}, where a similar concept to conjugacy was also introduced.

\subsection{Uniform Priors}
One of the most well-known noninformative priors would be the uniform prior, which can be defined for any model as $\pi_{\mathrm{u}}(\theta) \propto 1$.
The simplicity of the uniform prior has led to its widespread adoption in various problem domains, including inventory modeling~\citep{hill1997applying}, and bandit problems~\citep{kaufmann2012thompson}.
However, despite its simplicity and generality, the uniform prior has been criticized due to its variance under reparameterization~\citep{syversveen1998noninformative}.
This implies that uniform priors can differ depending on the parameterization of the model.
Therefore, when the same model is expressed using different parameters, the resulting posterior distributions may also vary.
As highlighted by \citet[Section 3.5.1]{robert2007bayesian}, this issue becomes even more critical when performing inference on multiple parameters.

\subsection{Jeffreys Priors}
To develop an invariant noninformative prior, \citet{jeffreys1961theory} employed the Fisher information matrix (FIM), which does not rely on any prior information about unknown parameters.
Although there exist multiple definitions of the Fisher information (FI), we adopt the definition suggested by \citet{lehmann2006theory} since it can accommodate weaker assumptions and constraints.
It is worth noting that the FI obtained from the following definition may not always be well-defined since some elements can be infinite under some distributions.
\begin{definition}[Fisher information matrix~\citep{lehmann2006theory}]\label{def: pre_FIM}
For a random variable $X$ with density $f_\theta(\cdot)$, Fisher information that $X$ contains about the parameter $\theta$ is defined as
\begin{align*}
    [I(\theta)]_{ij} = I_{ij} = \mathbb{E}_\theta \left[ \left( \frac{\partial}{\partial \theta_i}\log f_\theta(X) \right) \left( \frac{\partial}{\partial \theta_j}\log f_\theta(X) \right)\right],
\end{align*}
where the partial derivative of $\log f$ denotes the score with respect to parameter $\theta_i$.
\end{definition}
Notice that the determinant of FI, which is a natural volume form on a statistical manifold~\citep{amari2016information}, is invariant for all non-singular transformations of the parameters. 
Here, if $\eta = g(\theta)$ holds for a differentiable function $g$, then FI that $X$ contains about $\eta$ is given as follows~\citep{robert2009rejoinder}:
\begin{align*}
    I(\theta) &= \left( \nabla g(\theta) \right) I(\eta) \left(  \nabla g(\theta) \right)^\intercal, \\
    \det(I(\theta)) &= \det(I(\eta)) \det(\nabla g(\theta))^2,   \numberthis{\label{eq: pre_FI_Jacob}}
\end{align*}
where $\nabla g$ denotes the Jacobian matrix of $g$ and superscript $^\intercal$ denotes the transpose of a matrix.
To satisfy the requirement of the invariance on reparameterization, the \emph{Jeffreys prior} is defined by
\begin{equation*}
    \pi_{\mathrm{j}}(\theta) \propto \sqrt{\det(I(\theta))}.
\end{equation*}
Nevertheless, it is important to note that if the FI regularity conditions outlined below are violated, the FI may not be well-defined, or the FIM can become singular~\citep[see][Example 2.81]{schervish2012theory}.
\begin{definition}[FI regularity conditions~\citep{schervish2012theory}]\label{def: pre_FIRC}
The following conditions will be known as the FI regularity conditions:
\begin{enumerate}
    \item There exists $B$ with $\nu_\theta(B)=0$ such that for all $\theta$, $\frac{\partial f_\theta(x)}{\partial \theta_i}$ exists for $x \notin B$ and each $i$.
    \item $\int f_\theta(x) \dx \nu_\theta(x)$ can be differentiated under the integral sign with respect to each coordinate of $\theta$.
    \item The support of $f_\theta$ is the same for all $\theta \in \Theta$.
\end{enumerate}
\end{definition}
When the FI regularity conditions hold, it is known that the FI matrix defined in (\ref{def: pre_FIM}) has alternative expressions that are more convenient to compute.
If the FI regularity conditions hold, it holds for any $i\in [d]$ that
\begin{equation*}
    \mathbb{E}_\theta\left[ \frac{\partial}{\partial \theta_i} \log f_{\theta}(X) \right] = 0.
\end{equation*}
Therefore, FI can be written as follows~\citep[see][6.10.]{lehmann2006theory}:
\begin{equation*}
    I(\theta)_{ij} = \mathrm{cov}_\theta \left[ \frac{\partial}{\partial \theta_i} \log f_\theta(X), \frac{\partial}{\partial \theta_j} \log f_\theta(X)\right],
\end{equation*}
which recovers the definition of FI in \citet{schervish2012theory}.
In addition, if $f_\theta$ is twice differentiable with respect to $\theta$, then it coincides with the negative expected value of the Hessian matrix of $\log f(X| \bth)$, i.e.,
\begin{equation*}
    I(\theta) = - \mathbb{E}_\theta\left[ \frac{\partial^2}{\partial \theta^2} \log f_{\theta}(X) \right].
\end{equation*}
Therefore, if FI regularity conditions hold, one can compute the FI matrix easily and can use it to derive the Jeffreys prior.

Although the Jeffreys prior is widely applicable, it cannot be generalized to the models that do not satisfy some of the FI regularity conditions.
Additionally, the Jeffreys prior is known to perform poorly in some inference tasks when the model contains nuisance parameters~\citep{datta1995some,ghosh2011objective}.
These observations cast doubts on the reliability of the Jeffreys prior as a guideline prior, despite its desirable properties in regular models without nuisance parameters.

\subsection{Reference Priors}
To develop a method of noninformative priors that can be applied in almost any situation, \citet{bernardo1979reference} proposed a \emph{reference prior} approach, where the general derivations of the reference prior were described by four steps~\citep{berger1992development}.
The reference priors coincide with the Jeffreys prior for continuous parameter space without any nuisance parameters, and with the uniform prior for finite parameter space with sufficient regularity~\citep{kass1996selection}.
Importantly, the reference prior solves the inferential problems that the Jeffreys prior cannot~\citep{berger1992development}, and it exhibits invariance under one-to-one reparameterization that preserves the group order of parameters~\citep[see][Theorem 2.1]{datta1996invariance}.

Although the derivation of the reference prior is not simple in general, it can be easily computed based on the FIM when it is a block diagonal matrix.
\begin{lemma}[Theorem 1 of~\citet{datta1995some}]\label{thm: par_orthogonal_ref}
Suppose that the parameters $\theta = (\theta_1, \ldots ,\theta_d) \in \Theta$ is group ordered as $\theta = \{ \theta_{(1)}, \ldots, \theta_{(m)} \}$, where $\theta_{(m)}$ has $d_i$ coordinates and $\sum_{i=1}^m d_i = d$.
Here, the subscript $(i)$ represents a prioritization of inference, where there is greater interest in inference regarding $\theta_{(i)}$ than in $\theta_{(i+1)}$, and where all $\theta_{j}$ in the same group have equal importance\footnote{For instance, in the case of the Gaussian distribution with $\theta = (\mu, \sigma)$, we can set $\theta_{(1)} = \mu$ and $\theta_{(2)} = \sigma$ when our main objective is to estimate $\mu$.}.
For $\theta_{\sim(j)} = (\theta_{(1)}, \ldots, \theta_{(j-1)},\theta_{(j+1)}, \theta_{(m)} )$, assume that
\begin{itemize}
    \item $I(\theta) = \mathrm{diag}(h_1(\theta), \ldots, h_m(\theta))$, where $h_1(\theta)$ is $d_1 \times d_1$ matrix than is not necessarily diagonal.
    \item $\det(h_j(\theta)) = h_{j1}(\theta_{(j)}) h_{j2}(\theta_{\sim(j)})$ for nonnegative functions $h_{j1}$ and $h_{j2}$.
\end{itemize}
Then, 
\begin{equation}\label{eq: pre_def_ref_ort}
    \pi_{\mathrm{r}}(\theta) = \prod_{j=1}^m \sqrt{h_{j1}(\theta_{(j)})}.
\end{equation}
\end{lemma}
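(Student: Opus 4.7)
The plan is to derive this result via the Bernardo--Berger iterative construction of reference priors, in which parameter groups are processed from the lowest-priority group $\theta_{(m)}$ back up to the highest-priority group $\theta_{(1)}$. At each stage, because all coordinates within a group are of equal inferential importance, the conditional reference prior reduces to a (conditional) Jeffreys prior on a compact subset of the parameter space, and the full reference prior is then obtained as the limit along an increasing sequence of such compact subsets $\Theta^{\ell} \uparrow \Theta$.

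First, I would apply the standard fact that for a single block of parameters with no remaining nuisance parameters, the reference prior coincides with the Jeffreys prior. Then at stage $j$, I condition on $(\theta_{(1)}, \ldots, \theta_{(j-1)})$ and treat $(\theta_{(j+1)}, \ldots, \theta_{(m)})$ as lower-priority parameters that have already been integrated out. The block-diagonal structure $I(\theta)=\mathrm{diag}(h_1,\ldots,h_m)$ is crucial here: because the cross-block entries vanish, the conditional information matrix for $\theta_{(j)}$ given the higher-priority parameters coincides with $h_j(\theta)$, so the stage-$j$ conditional Jeffreys factor is proportional to $\sqrt{\det h_j(\theta)} = \sqrt{h_{j1}(\theta_{(j)})\,h_{j2}(\theta_{\sim(j)})}$.

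Next, I would exploit the separability hypothesis. Since $\sqrt{h_{j2}(\theta_{\sim(j)})}$ depends only on $\theta_{\sim(j)}$ and not on $\theta_{(j)}$, it pulls out of the normalizing integral over $\theta_{(j)}$ taken over the compact subset, and therefore cancels against the normalizing constant in the conditional density. What remains at stage $j$ is the factor $\sqrt{h_{j1}(\theta_{(j)})}$. Iterating from $j=m$ back to $j=1$ and multiplying the surviving stage-wise factors yields the product expression in (\ref{eq: pre_def_ref_ort}).

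The main obstacle I expect is the careful handling of the limiting procedure in the Bernardo--Berger construction: the compact sets $\Theta^{\ell}$ must be chosen so that the $\theta_{(j)}$-marginals factorize cleanly at every stage, and one has to verify that although the stage-wise normalizing constants generally diverge as $\Theta^{\ell} \uparrow \Theta$, their ratios converge so that the limiting reference prior is well defined up to a positive multiplicative constant. Without the separability hypothesis $\det h_j = h_{j1}\,h_{j2}$, the $\theta_{\sim(j)}$-dependence inside $\sqrt{\det h_j}$ could not be stripped away by normalization, and the cancellation that produces the clean product form would fail; so the bulk of the technical work is in formalizing this cancellation-by-normalization step across all $m$ stages.
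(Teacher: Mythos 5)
The paper does not prove this lemma at all: it is imported verbatim as Theorem 1 of \citet{datta1995some} and used as a black box, so there is no in-paper proof to compare against. Your sketch follows the standard Berger--Bernardo stepwise construction that underlies the cited result --- block-diagonality of $I(\theta)$ making $h_j$ the relevant conditional information for group $j$, the separability $\det h_j = h_{j1}(\theta_{(j)})h_{j2}(\theta_{\sim(j)})$ letting the $\theta_{\sim(j)}$-dependence be absorbed into the stage-wise normalizing constants over compact sets, and the limit $\Theta^{\ell}\uparrow\Theta$ --- which is essentially the argument of the original reference, so the approach is correct and consistent with the source the paper relies on.
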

Notice that the formulation for the reference prior based on the FIM in (\ref{eq: pre_def_ref_ort}) is enough to consider the bandit models in this paper, similar to previous studies~\citep{honda2014optimality, Lee2023}.

\subsection{Probability Matching Priors}
The probability matching prior is a type of noninformative prior that is designed to achieve the synthesis between the coverage probability of the Bayesian interval estimates and that of the frequentist interval estimates~\citep{welch1963formulae, tibshirani1989noninformative}. 
Therefore, the posterior probability of certain intervals matches exactly or asymptotically the frequentist's coverage probability under the probability matching prior.

Although several matching priors have been developed under slightly different considerations~\citep[see][for more details about other variants]{datta2004probability}, we introduce the quantile matching prior, which is a common approach~\citep{robert2007bayesian, ghosh2011objective}.
The quantile matching prior aims to achieve a synthesis between the credible interval and confidence interval.
For any priors $\pi$, suppose that 
\begin{equation*}
    \mathbb{P}_{\theta \sim \pi} [ \theta \in C_{\alpha} \mid X] = \int \mathbb{P}[\theta \in C_{\alpha}] \pi(\theta) \dx \theta = 1-\alpha
\end{equation*}
for $\alpha\in (0,1)$ and a set $C_{\alpha} \subset \Theta$.
When the prior is a probability matching prior, $\pi_{\mathrm{pm}}$, it holds that
\begin{equation}\label{eq: par_def_pmp}
    \mathbb{P}[\theta \in C_{\alpha}] = \alpha + \mathcal{O}(n^{-(k+1)/2}).
\end{equation}
A prior satisfying (\ref{eq: par_def_pmp}) is called the $k$-th order matching prior\footnote{Note that some papers call a prior the $k$-th order matching prior when a remainder is $\mathcal{O}(n^{-k/2})$~\citep{datta2005probability}. 
Here, we follow the notations used in \citet{diciccio2017simple,ghosh2011objective}, and \citet{mukerjee1997second}}.
Any positive continuous prior $\pi$ satisfies the zeroth order matching property from the first-order quadratic approximation, which shows the equivalence of frequentist and Bayesian normal approximation up to $\mathcal{O}(n^{-1/2})$.
Furthermore, the first-order matching prior is known to be invariant under one-to-one reparameterization~\citep{datta1996invariance}.
If there are no additional terms in (\ref{eq: par_def_pmp}), such a prior is called an exact matching prior.

When there are no nuisance parameters, the Jeffreys prior is known to be the unique first-order matching prior~\citep{datta2004probability}.
In the presence of the nuisance parameters, \citet{peers1965confidence} showed that the first-order matching prior is equivalent to the solution of a partial differential equation.
Therefore, it is usually difficult to derive the probability matching priors, which becomes more complex when one considers the multiparameter models.
Nevertheless, when the FI matrix is diagonal, the unique first-order joint probability matching prior is given as follows~\citep{datta2005probability}:
\begin{equation*}
    \pi_{\mathrm{jpm}}(\theta) \propto \prod_{j=1}^d \sqrt{h_{j1}(\theta_j)},
\end{equation*}
which is the same as the reference prior given in (\ref{eq: pre_def_ref_ort}) when every parameter group is a singleton.
Furthermore, when $\theta_1$ is a parameter of interest and $\theta_{\sim(1)} = (\theta_2, \ldots, \theta_d)$ is a vector of nuisance parameters, the first-order probability matching prior is given as follows~\citep{nicolaou1993bayesian, tibshirani1989noninformative}:
\begin{equation}\label{eq: pre_def_pm}
    \pi_{\mathrm{pm}}(\theta) = g(\theta_{\sim (1)}) \sqrt{h_{11}(\theta)},
\end{equation}
where $g(\cdot)$ is an arbitrary positive function.
For the LS family, it is known that the unique second-order probability matching prior is given as $\pi_{\mathrm{pm}}(l,s) \propto \sig^{-1}$ regardless of orthogonality~\citep{datta2004probability}.
Furthermore, \citet{diciccio2017simple} showed that $\sig^{-1}$ yields exact conditional matching in the univariate LS family regardless of parameters of interest.

\section{Additional Discussions}
Here, we provide additional discussions on the results of this paper and the future investigations.

\subsection{The Choice Priors for Multiparameter Models}
For the uniform bandits and the Gaussian bandits, the uniform prior with LS parameterization is shown to be asymptotically optimal, while the reference prior and the Jeffreys prior are suboptimal.
We further show that the uniform prior with location-rate parameterization coincides with the Jeffreys prior in the LS family, which shows the importance of the way to parameterize the statistical model when one uses the uniform prior.
Furthermore, in the Pareto bandits, the uniform prior with scale-shape parameterizations are shown to be worse than the reference priors~\citep{Lee2023} and Corollary~\ref{co: ex} further showed the suboptimality of the uniform prior with rate-shape parameterizations, which is even worse than the Jeffreys prior.
Therefore, using the uniform prior as a baseline prior could be problematic since its performance highly depends on the parameterizations.
On the other hand, in the analysis of the uniform, Gaussian, and Pareto bandits, the reference priors are shown to be on the borderline between optimal and suboptimal in terms of prior parameter $k$~\citep{honda2014optimality, Lee2023}.
Therefore, TS with the reference prior can serve as a \emph{baseline} for general bandit models since the reference posterior can be generally derived via reference analysis~\citep{berger1992development, berger2009formal} and one might expect that an optimal policy would perform better than TS with the reference priors.
Furthermore, by combining with TS-T, one can focus on the design of adaptive truncation rather than finding an optimal prior, which would be easier.

\subsection{The Choice of Priors and Minimax Optimality}
This paper focused on the asymptotic optimality of TS with different choices of noninformative priors in the multiparameter reward models.
In the current stage, the reference priors have been shown to be on the borderline between optimal priors and suboptimal priors in terms of prior parameter $k$ for all three different models studied so far.
Although such observations partially answered the research question in this paper, the full answer can be obtained with the problem-independent analysis (a.k.a.~worst-case analysis) since Gaussian prior was shown to be better than the Beta priors in the Bernoulli bandits where both priors are shown to be minimax suboptimal~\citep{agrawal2017near}.
On the other hand, \citet{jin2021mots} showed that post-processed posterior sampling with the uniform prior was shown to achieve both minimax and asymptotic optimality simultaneously, and \citet{jin2023thompson} showed a simple trick can make TS achieve both optimality with a specific prior for some single-parameter exponential models.
Therefore, we expect that combining their techniques with TS-T can provide a minimax and asymptotically optimal solution to the multiparameter bandit models.
However, in the regret analysis of this paper, the regret upper bound exhibits the dependency of scale $\sig$, which might be problematic when one considers the worst-case scenario and might require more careful analysis than the usual analysis in the single-parameter exponential family where $\sig$ is given and fixed~\citep{agrawal2017near, jin2021mots, jin2023thompson}.

\section{Derivation of the Posteriors for the Uniform Bandits}
Here, we provide the detailed derivation of the posteriors based on the priors $\sig^{-k}$.

Let $X_n = (x_1, \ldots, x_n)$ denote the $n$ observations of an arm.
Then, it holds that
\begin{equation*}
    \prod_{s=1}^n f_{\mu, \sig}(x_s) = \frac{1}{\sig^{n}} \I\left[ \mu - \frac{\sig}{2} \leq \x{1} \leq \x{n} \leq \mu + \frac{\sig}{2} \right],
\end{equation*}
where $\x{1} = \min_{s \in [n]} x_s$ and $\x{n} = \max_{s \in [n]} x_s$ denotes the smallest and the largest order statistics.
Since it holds that
\begin{align*}
    &\I\left[ \mu - \frac{\sig}{2} \leq \x{1} \leq \x{n} \leq \mu + \frac{\sig}{2} \right]  = \I\left[ \x{n} - \frac{\sig}{2} \leq \mu \leq \x{1} + \frac{\sig}{2}\right] \I[\sig \geq \x{n}-\x{1}],
\end{align*}
one can obtain for $\hs_n = \x{n}-\x{1}$ that
\begin{align*}
    \iint \frac{1}{\sig^k}\prod_{s=1}^n f_{\ms}(x_{s}) \dx \mu \dx \sig 
    &= \int_{\hs_n}^{\infty} \frac{\sig - \hs_n}{\sig^{n+k}} \dx \sig \\
    &= \frac{1}{(n+k-1)(n+k-2)}\frac{1}{(\hs_n)^{n+k-2}}.
\end{align*}
Therefore, by letting $n_k = n+k-2$, the joint posterior density can be written as
\begin{align*}
    \pi^k(\ms \mid X_n) = n_k(n_k+1) \frac{(\hs_n)^{n_k}}{\sig^{n_k+2}} \I\left[ \mu - \frac{\sig}{2} \leq \x{1} \leq \x{n} \leq \mu + \frac{\sig}{2} \right].
\end{align*}
By marginalizing with respect to $\mu$, one can obtain the marginal posterior density of $\sig$,
\begin{align*}
    \pi^k(\sig \mid X_n) =n_k(n_k+1) \frac{(\hs_n)^{n_k}}{\sig^{n_k+2}}  (\sig - \hs_n) \I[\sig \geq \x{n}-\x{1}].
\end{align*}
Then, the conditional posterior density of $\mu$ is written as
\begin{align*}
    \pi^k(\mu \mid X_n, \sig) &= \frac{\pi^k(\ms \mid X_n)}{\pi^k(\sig \mid X_n)} \\
    &= \frac{1}{\sig - \hs_n} \I\left[ \x{n} - \frac{\sig}{2} \leq \mu \leq \x{1} + \frac{\sig}{2}\right],
\end{align*}
which is the density function of $\Uni_{ab}\left(\x{n}-\frac{\sig}{2}, \x{1}+\frac{\sig}{2}\right)$.

\section{Proofs of the Optimality of TS and TS-T}
In this section, we first provide a general proof outline that applies to our analysis of TS and TS-T in both the uniform bandits and the Gaussian bandits since the overall proofs of Theorems~\ref{thm: uni_unif},~\ref{thm: uni_TST_unif}, and~\ref{thm: TST_gauss} have a similar structure.
The proof of Theorem~\ref{thm: uni_TS_unif} is postponed to Section~\ref{sec: uni_TS_unif_pf}.

\subsection{Proof Outline of Theorems~\ref{thm: uni_unif}, \ref{thm: uni_TST_unif}, and \ref{thm: TST_gauss}}
The overall regret decomposition presented here follows the conventional approaches~\citep{agrawal2017near,KordaTS,honda2014optimality,riou2020bandit, Lee2023}.
However, it is worth noting that the detailed derivation requires different techniques to handle the model-dependent difficulties.

At the round $t$, we denote the best arm under the posterior sample by $\tmu^* (t)=\max_{i \in [K]}\tmu_i(t)$, which is computed as the maximum of the sampled expected rewards of all $K$ arms at round $t$.
We use the notation $\eM_{\eps}(t)$ to denote an event related to $\tmu^*(t)$ at round $t$, which we define for a small positive constant $\eps$ as
\begin{equation*}
      \eM_{\eps}(t) = \left\{ \tmu^*(t) \geq \mu_1 - \eps \right\}.
\end{equation*}
Then, the proof starts by decomposing the regret as follows:
\begin{align*}
   \reg(T) &= \sum_{t=1}^{T} \Delta_{i(t)} = \sum_{i=2}^K \Delta_{i} \I[i(t)=i] \\
   &\leq \sum_{i=2}^K \Delta_i\bn + \underbrace{ 
 \sum_{t=K\bn+1}^T \Delta_{\mathrm{max}} \I[\eM_\eps^c(t)]}_{\text{bad optimal (BO) term}} + \underbrace{\sum_{i=2}^K \sum_{t=K\bn+1}^T \Delta_i \I[i(t)=i, \eM_\eps(t)]}_{\text{good optimal (GO) term }}, \numberthis{\label{eq: general_regret_decomp}}
\end{align*}
where $\bn$ and the superscript ``$c$'' denote the number of initial plays and the complementary set, respectively.

$(\mathrm{BO})$ controls the regret induced when the sampled mean parameter of the optimal arm is less than its true value, and $(\mathrm{GO})$ contains the exploration term that becomes the main regret term.
Note that $\mathrm{(BO)}$ is the main difficulty term of the regret analysis of TS in many bandit models.

\subsubsection{Uniform Bandits}
The lemmas below conclude the proof of Theorems~\ref{thm: uni_unif} and~\ref{thm: uni_TST_unif}, which shows the asymptotic optimality of TS and TS-T for the uniform models with unknown supports.
\begin{lemma}\label{lem: uni_GO_unif}
For the $K$-armed uniform bandit models, it holds under TS
\begin{align*}
   \mathbb{E}[(\mathrm{GO})] \leq \sum_{i=2}^{K} \Delta_i\Bigg( \frac{ \log T}{\log\left( 1 + \frac{2\Delta_i-4\eps}{\sigma_i} \right)}  +  \frac{2\sig_i}{\eps} + \max\left( \frac{1}{2}, \frac{5}{2}- k \right) \Bigg)
\end{align*}
 and under TS-T that 
 \begin{align*}
   \mathbb{E}[(\mathrm{GO})] \leq \sum_{i=2}^{K} \Delta_i\Bigg( \frac{ \log T}{\log\left( 1 + \frac{2\Delta_i-4\eps}{\sigma_i} \right)}  +  \frac{2\sig_i}{\eps} +  \frac{1}{\sig_i} + \frac{3}{2}\Bigg).
\end{align*}
\end{lemma}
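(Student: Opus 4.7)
The plan is to bound both statements in parallel by a per-arm threshold argument, since the leading $\log T$ term and the $2\sig_i/\eps$ contribution are identical and only the lower-order residuals differ between TS and TS-T. Fix a suboptimal arm $i\ge 2$. On $\{i(t)=i\}\cap \eM_\eps(t)$ the sampled mean satisfies $\tmu_i(t)\ge \mu_1-\eps$, so the summand inside the expectation is dominated by $\sum_{n\ge \bn}\I[\tmu_{i,n}\ge \mu_1-\eps]$, where $\tmu_{i,n}$ denotes the posterior sample after $n$ pulls of arm $i$. A standard coupling argument reduces the expectation to $\sum_n \mathbb{P}[\tmu_{i,n}\ge \mu_1-\eps]$, with each $\tmu_{i,n}$ drawn from the posterior on an independent $n$-sample from arm $i$.

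Next, I would derive the posterior tail explicitly. From Section~\ref{sec: uni_TS}, conditional on the sampled scale $\ts$ the mean sample is uniform on $[\x{n}_i-\ts/2,\,\x{1}_i+\ts/2]$, so $\tmu_{i,n}\ge \mu_1-\eps$ forces $\ts\ge 2(\mu_1-\eps-\x{1}_i)$. Integrating (\ref{eq: uni_TS_post_sig}) yields the closed form
$$\mathbb{P}[\ts\ge c\mid X_{i,n}]=(\hs_{i,n}/c)^{n_k}\bigl(n_k+1-n_k\,\hs_{i,n}/c\bigr),\qquad c\ge \hs_{i,n},$$
with the analogous formula for TS-T after replacing $\hs_{i,n}$ by $\bs_{i,n}$. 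I would then introduce the concentration events $A^{(1)}_{i,n}=\{\x{1}_i\le \mu_i-\sig_i/2+\eps\}$ and $A^{(n)}_{i,n}=\{\x{n}_i\ge \mu_i+\sig_i/2-\eps\}$; each fails with probability $(1-\eps/\sig_i)^n$, and the geometric sum over $n$ supplies the $2\sig_i/\eps$ term. On $A^{(1)}_{i,n}\cap A^{(n)}_{i,n}$ the required threshold obeys $c\ge \sig_i+2\Delta_i-4\eps$, and since $\hs_{i,n}\le \sig_i$ almost surely,
$$\mathbb{P}[\tmu_{i,n}\ge \mu_1-\eps\mid X_{i,n}]\le (n_k+1)\,r^{n_k},\qquad r:=\frac{\sig_i}{\sig_i+2\Delta_i-4\eps}<1.$$
Splitting the sum at $L=\lceil \log T/\log(1/r)\rceil$ then yields the leading $\log T/\log(1+(2\Delta_i-4\eps)/\sig_i)$ plus a bounded tail $\sum_{n\ge L}(n_k+1)r^{n_k}=O((1-r)^{-2})$.

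The TS versus TS-T distinction enters only in the residual. For TS the polynomial prefactor $(n_k+1)$ together with the initialization $\bn=\max(2,3-\lceil k\rceil)$ produces the $\max(\tfrac12,\tfrac52-k)$ correction; for TS-T the truncation $\bs_{i,n}\ge n^{-1}$ keeps the small-sample regime under control uniformly in $k$, leaving the cleaner residual $1/\sig_i+\tfrac32$, where the $1/\sig_i$ is the price of bounding the event $\{\hs_{i,n}<n^{-1}\}$ on which $\bs_{i,n}$ differs from $\hs_{i,n}$. The main obstacle is preserving the tight exponent $\log(1+(2\Delta_i-4\eps)/\sig_i)$ while simultaneously accommodating the concentration slack of $2\eps$ for $\x{1}_i$ and the polynomial prefactor $(n_k+1)$ in the threshold $L$; a secondary delicacy, handled by a short case analysis, is that $\bs_{i,n}$ can exceed $\hs_{i,n}$ in the TS-T setting, which must be controlled separately to yield the $1/\sig_i$ residual without inflating the main logarithmic coefficient.
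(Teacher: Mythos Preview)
Your reduction ``a standard coupling argument reduces the expectation to $\sum_n \mathbb{P}[\tmu_{i,n}\ge \mu_1-\eps]$'' is not valid. The event $\{i(t)=i\}$ means $\tmu_i(t)$ was the largest sample across arms, so the sample at the (unique) round with $i(t)=i,\,N_i(t)=n$ is biased upward relative to a fresh posterior draw. Concretely, take $K=2$ with $\tmu_2\equiv 0$ and suppose the posterior of arm $i$ after $n$ plays is $\Uni_{ab}(-1,1)$; then the selected sample $\tmu_i(\tau_n)$ is uniform on $(0,1]$, so $\mathbb{P}[\tmu_i(\tau_n)\ge 1/2]\to 1/2$ while $\mathbb{P}[\tmu_{i,n}\ge 1/2]=1/4$. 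Thus the per-$n$ bound fails. The paper instead keeps the sum over rounds $t$ and uses a threshold $n_i$: it shows that on $\eE_{i,N_i(t)}(\eps)$ one has $\mathbb{P}[\tmu_i(t)\ge\mu_1-\eps\mid \Ft]\le \tfrac12\,r^{N_i(t)+k-2}$ with $r=\sigma_i/(\sigma_i+2\Delta_i-4\eps)$, whence the total is at most $n_i+\tfrac{T}{2}\,r^{n_i+k-2}$; choosing $n_i=\max(2-k,0)+\log T/\log(1/r)$ makes the second term equal to $1/2$ and delivers exactly $\max(\tfrac12,\tfrac52-k)$ (respectively $\tfrac{1}{\sigma_i}+\tfrac32$ for TS-T, since one needs $n_i\ge 1/\sigma_i$ to ensure $\bx{n}_i\le \mu_i+\sigma_i/2$).

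A second, independent gap is the posterior bound. You only use $\{\tmu_i\ge \mu_1-\eps\}\subset\{\ts\ge c\}$ and obtain $(n_k+1)r^{n_k}$. The paper integrates the full joint: conditional on $\ts=s$ the posterior of $\mu$ is uniform of length $s-\hs_{i,n}$, so $\mathbb{P}[\tmu_i\ge \mu_1-\eps\mid \ts=s]=\frac{s-B_i}{2(s-\hs_{i,n})}$ for $s\ge B_i:=\sigma_i+2\Delta_i-4\eps$, and the factor $s-\hs_{i,n}$ cancels against the marginal density of $\ts$ in (\ref{eq: uni_TS_post_sig}), yielding the clean $\tfrac12 r^{n_k}$. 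With your coarser $(n_k+1)r^{n_k}$, even if the coupling were valid the tail would be $O((1-r)^{-2})$, and in the correct sum-over-$t$ framework the choice of $n_i$ acquires a $\log\log T$ offset; either way you do not recover the stated residuals $\max(\tfrac12,\tfrac52-k)$ and $\tfrac{1}{\sigma_i}+\tfrac32$.
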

\begin{lemma}\label{lem: uni_BO_unif}
For the $K$-armed uniform bandit models, under TS with $k < 1$
\begin{equation*}
     \mathbb{E}[(\mathrm{BO})] \leq \Delta_{\mathrm{max}} C(\eps, k , \sig_1)
\end{equation*}
and under TS-T with $k \in \mathbb{R}$
\begin{equation*}
    \mathbb{E}[(\mathrm{BO})] \leq \Delta_{\mathrm{max}} C'(\eps, k , \sig_1).
\end{equation*}
\end{lemma}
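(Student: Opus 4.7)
The plan is to apply the now-standard reduction for the BO term used throughout the TS literature~\citep{agrawal2017near,honda2014optimality,Lee2023}. Define the conditional good-event probability
\begin{equation*}
    p_{1,n} \;=\; \mathbb{P}\bigl[\tmu_1(t) \geq \mu_1 - \eps \,\bigm|\, \mathcal{F}_t,\;N_1(t)=n\bigr],
\end{equation*}
computed under the TS (respectively TS-T) posterior built from $n$ observations of arm~$1$. A geometric-trial coupling, analogous to the one used by \citet{honda2014optimality} and \citet{Lee2023}, yields
\begin{equation*}
    \mathbb{E}[(\mathrm{BO})] \;\leq\; \Delta_{\mathrm{max}}\sum_{n=\bn}^{T-1}\mathbb{E}\!\left[\frac{1}{p_{1,n}}-1\right],
\end{equation*}
so it suffices to control the right-hand side by $\Delta_{\mathrm{max}}\,C(\eps,k,\sig_1)$ for TS with $k<1$, respectively by $\Delta_{\mathrm{max}}\,C'(\eps,k,\sig_1)$ for TS-T.

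To lower bound $p_{1,n}$, I would exploit the sequential sampling scheme: first draw $\ts$ from the marginal $\sig$-posterior in~(\ref{eq: uni_TS_post_sig}) (or~(\ref{eq: uni_TST_post_sig}) for TS-T), and then sample $\tmu_1$ from the conditional posterior~(\ref{eq: uni_TS_post_mu}), which is uniform on $[\x{n}_1-\ts/2,\,\x{1}_1+\ts/2]$. A direct one-line computation gives
\begin{equation*}
    \mathbb{P}\!\left[\tmu_1 \geq \mu_1-\eps \,\bigm|\, \ts,\,\mathcal{F}_t\right] \;=\; \max\!\left\{0,\;\min\!\left\{1,\;\frac{\x{1}_1+\ts/2-(\mu_1-\eps)}{\ts-\hs_{1,n}}\right\}\right\},
\end{equation*}
which is of order $1$ as soon as $\ts$ exceeds roughly $\sig_1-2\eps$. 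Integrating this against the closed-form $\sig$-posterior $\propto (\sig-\hs_{1,n})\,\sig^{-n_k-2}\I[\sig\geq\hs_{1,n}]$, the tail mass $\mathbb{P}[\ts\geq\sig_1\mid\hmu_{1,n},\hs_{1,n}]$ is explicitly computable and decays like $(\hs_{1,n}/\sig_1)^{n_k}$ up to polynomial prefactors. Since the prior exponent $k$ enters only through $n_k=n+k-2$, the summation in $n$ is effectively geometric and convergent precisely when $k<1$: this is the structural reason vanilla TS achieves the stated bound only in that regime, and the factor $(1-k)^{-1}$ appearing in $C(\eps,k,\sig_1)$ arises directly from this geometric sum.

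Taking expectation over the sufficient statistics $\hmu_{1,n},\hs_{1,n}$ and summing in $n$ is the final step. Classical small-ball estimates for the extreme order statistics of a uniform sample, namely that both $\mu_1+\sig_1/2-\x{n}_1$ and $\x{1}_1-(\mu_1-\sig_1/2)$ are stochastically dominated by $(\sig_1/n)\,\mathrm{Exp}(1)$-type variables, reduce $\mathbb{E}[1/p_{1,n}-1]$ to elementary one-dimensional integrals that can be evaluated to recover the closed forms for $C$ and $C'$. For TS-T the only change is that $\hs_{1,n}$ is replaced by $\bs_{1,n}\geq n^{-1}$, which by construction keeps the conditional support of $\tmu_1$ at least $\Omega(n^{-1})$ wide, so the same computation extends to every $k\in\mathbb{R}$ and produces the stated $k$-dependence in $C'(\eps,k,\sig_1)$.

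I anticipate the main technical obstacle to be the regime where $\hs_{1,n}$ is atypically small, which corresponds to the $\sig$-posterior concentrating near $0$ and thereby collapsing the conditional support of $\tmu_1$. For vanilla TS this collapse is tempered by the prior weight $\sig^{-k}$ only when $k<1$, which is precisely why the analysis cannot be pushed past $k=1$ and why Theorem~\ref{thm: uni_TS_unif} delivers a matching logarithmic (or worse) lower bound in the complementary regime; for TS-T the forced lower bound $\bs_{1,n}\geq n^{-1}$ prevents the collapse uniformly in the data, at the price of the worsened scale dependence $\mathcal{O}(\sig_1^{2k}/\eps^{k+1})$ in $C'$ for $k>1$ that mirrors the same integrability failure in a gentler form.
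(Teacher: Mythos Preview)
Your overall reduction via the geometric coupling to $\sum_n \mathbb{E}[1/p_{1,n}-1]$ is correct and matches the paper. However, the argument you propose for bounding $\mathbb{E}[1/p_{1,n}-1]$ has a real gap, and your diagnosis of where the constraint $k<1$ enters is incorrect.

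First, the split you are missing. The paper's proof decomposes according to whether $\hmu_{1,n}=\tfrac{\x{1}_1+\x{n}_1}{2}\gtrless \mu_1-\eps$. On $\{\hmu_{1,n}\geq \mu_1-\eps\}$ one has $p_{1,n}\geq 1/2$ by symmetry of the conditional posterior (which is uniform centered at $\hmu_{1,n}$), so $(1-p_{1,n})/p_{1,n}\leq 2(1-p_{1,n})$ and only $1-p_{1,n}$ needs to be bounded. On $\{\hmu_{1,n}<\mu_1-\eps\}$, the paper computes $p_{1,n}$ \emph{exactly}: $p_{1,n}=\tfrac12\bigl(\hs_{1,n}/(2A')\bigr)^{n_k}$ with $A'=\mu_1-\eps-\x{1}_1$, and then integrates $1/p_{1,n}$ against the joint density of $(\x{1}_1,\x{n}_1)$ \emph{restricted to this event}. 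The restriction forces $2A'\leq \sig_1-2\eps$, and it is this $\sig_1-2\eps$ (rather than $\sig_1$) that produces the factor $(1-2\eps/\sig_1)^n\leq e^{-2\eps n/\sig_1}$ making the sum in $n$ converge. Your global tail-probability lower bound $p_{1,n}\gtrsim(\hs_{1,n}/\sig_1)^{n_k}$ is too crude: inverting it and taking expectation over the full sampling distribution of $\hs_{1,n}$ yields a term growing polynomially (in fact exponentially, once the constant in the threshold is larger than $\sig_1$) in $n$, and the sum diverges.

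Second, the source of $k<1$. It does not come from the summation in $n$, and the factor $(1-k)^{-1}$ is not a geometric-series remainder. On the bad event one has $1/p_{1,n}\propto \hs_{1,n}^{-(n+k-2)}$, while the joint sampling density of the order statistics (Lemma~\ref{lem: uni_SD_U}) carries $(\x{n}_1-\x{1}_1)^{n-2}$. Their product is $(\x{n}_1-\x{1}_1)^{-k}$, and the integral $\int_y^{\cdot}(z-y)^{-k}\,\dx z$ is finite at the lower limit $z=y$ precisely when $k<1$; the primitive $\tfrac{1}{1-k}(z-y)^{1-k}$ is where the $(1-k)^{-1}$ factor in $C(\eps,k,\sig_1)$ actually originates. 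For TS-T the lower limit becomes $y+n^{-1}$, which is why the integral is finite for all $k$ and why the $n^{k-1}$ prefactor (hence the worsened $\eps$-dependence in $C'$) appears when $k>1$. Your last paragraph correctly points at small $\hs_{1,n}$ as the obstacle, but the mechanism is this integrability near $\hs_{1,n}=0$, not the convergence of the $n$-sum.
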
 
In the proof of Lemma~\ref{lem: uni_BO_unif}, our analysis cannot derive the finite upper-bound for TS with $k\geq 1$, including the reference prior and the Jeffreys prior, where the same problem was observed in the Gaussian models~\citep{honda2014optimality}.
Although the infinite upper-bound term does not necessarily mean the suboptimality of the policy, Theorem~\ref{thm: uni_TS_unif} shows that it actually contributes to increasing the regret in expectation.
This is because TS could induce a polynomial regret with a small but non-negligible probability, which leads to a larger expected regret. 
A truncation procedure is introduced to make such a probability ignorable so that $\mathrm{(BO)}$ can be upper-bounded by a finite term. 

\subsubsection{Gaussian Bandits}
The lemmas below conclude the proof of Theorem~\ref{thm: TST_gauss}, which shows the asymptotic optimality of TS-T for the Gaussian bandits.
\begin{lemma}\label{lem: uni_GO_gauss}
For the $K$-armed Gaussian bandit models, it holds under TS-T that
\begin{align*}
   \mathbb{E}[(\mathrm{GO})] \leq \sum_{i=2}^{K} \Delta_i  \Bigg( \frac{ \log T}{\frac{1}{2}\log\left(1 + \frac{(\Delta_i-2\eps)^2}{\sigma_i^2+\eps}\right)} + \frac{1}{\sig_i^2 } + 3 -k + \frac{\sqrt{\sig_i^2+\eps}}{\Delta_i - 2\eps} + \frac{2\sig_i^2 e^{\frac{\eps}{2\sig_i^2}}+2\sig_i^4 e^{\frac{\eps}{\sig_i^2}}}{\eps^2}  \Bigg).
\end{align*}
\end{lemma}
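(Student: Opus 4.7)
My plan is to follow the standard TS regret decomposition, specialized to the $t$-distributed TS-T posterior in the Gaussian model. First I would note that whenever $i(t) = i$ for a suboptimal arm $i$ together with the event $\eM_\eps(t)$, one must have $\tmu_i(t) \geq \tmu^*(t) \geq \mu_1 - \eps = \mu_i + \Delta_i - \eps$. Re-indexing the sum by $n = N_i(t)$ and using that the TS-T sample $\tmu_{i,n}$ depends only on the first $n$ observations of arm $i$, the problem reduces to bounding
$$
\mathbb{E}\!\left[\sum_{n=\bn}^{T-1} \I\!\bigl[\tmu_{i,n} \geq \mu_i + \Delta_i - \eps\bigr]\right] \leq \sum_{n=\bn}^{T-1} \mathbb{P}\!\left[\tmu_{i,n} \geq \mu_i + \Delta_i - \eps\right]
$$
for each suboptimal $i$, where $\tmu_{i,n} \sim f^{\mathrm{t}}_{n_k}(\cdot \mid \hmu_{i,n}, \bs_{i,n})$.

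Next I would introduce the concentration event $\eA_{i,n} = \{\hmu_{i,n} \leq \mu_i + \eps\} \cap \{\bar S_{i,n}/n \leq \sig_i^2 + \eps\}$. On $\eA_{i,n}$, applying the standard $t$-tail bound $\mathbb{P}[t_\nu \geq y] \lesssim (1 + y^2/\nu)^{-\nu/2}$ to the appropriate standardization of $\tmu_{i,n} - \hmu_{i,n}$ yields
$$
\mathbb{P}\!\left[\tmu_{i,n} \geq \mu_i + \Delta_i - \eps \,\middle|\, \eA_{i,n}\right] \lesssim \exp\!\left(-\tfrac{n_k}{2}\log\!\left(1 + \tfrac{(\Delta_i - 2\eps)^2}{\sig_i^2 + \eps}\right)\right).
$$
Summing this decay in $n$ with the natural cutoff $n^{\star} \asymp \log T \big/ \bigl(\tfrac12 \log(1 + (\Delta_i - 2\eps)^2/(\sig_i^2+\eps))\bigr)$ supplies the leading $\log T$ term in the claim; the residual geometric tail contributes the $\sqrt{\sig_i^2+\eps}/(\Delta_i - 2\eps)$ correction, and the shift $n_k = n+k-2$ in the exponent produces the $3 - k$ offset.

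For the complementary event $\eA_{i,n}^c$, I would bound $\mathbb{P}[\hmu_{i,n} > \mu_i + \eps]$ by the sub-Gaussian tail (variance $\sig_i^2/n$) and $\mathbb{P}[S_{i,n}/n > \sig_i^2 + \eps]$ by a one-sided chi-squared bound. Both decay like $\exp(-c n \eps^2/\sig_i^2)$, so summing over $n$ produces the constants $2\sig_i^2 e^{\eps/(2\sig_i^2)}/\eps^2$ and $2\sig_i^4 e^{\eps/\sig_i^2}/\eps^2$ that appear in the claimed bound. The truncation $\bar S_{i,n} = 1 > S_{i,n}$ is active only when the empirical variance is atypically small; summing $\mathbb{P}[S_{i,n} < 1]$ via a chi-squared tail yields the additional $1/\sig_i^2$ term.

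The main obstacle I anticipate is ensuring that the truncated scale $\bs_{i,n}$ does not inflate the leading rate. Since $\bar S_{i,n} = \max(1, S_{i,n})$ equals $S_{i,n}$ whenever $S_{i,n} \geq 1$, the good-event upper bound on $\bar S_{i,n}/n$ is essentially the same as on $S_{i,n}/n$, so the rate $\tfrac12 \log(1 + (\Delta_i - 2\eps)^2/(\sig_i^2+\eps))$ is preserved. The delicate part is then bookkeeping the finite-time corrections, particularly the $3 - k$ constant from shifting the $t$-distribution degree of freedom and the $\sqrt{\sig_i^2+\eps}/(\Delta_i-2\eps)$ constant from the residual geometric tail, which requires carefully tracking the summation limits and the $n_k$ dependence throughout the tail integral.
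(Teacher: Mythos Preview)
Your overall strategy matches the paper's proof: split on the concentration event for $(\hat x_{i,n},\bar S_{i,n})$, apply the $t$-tail upper bound (Lemma~4 of \citet{honda2014optimality}) on the good event to obtain the rate $\tfrac12\log\bigl(1+(\Delta_i-2\eps)^2/(\sig_i^2+\eps)\bigr)$, and handle the complement via sub-Gaussian and $\chi^2$ tails (this is exactly Lemma~9 of \citet{honda2014optimality}, which the paper invokes directly for the $2\sig_i^2 e^{\eps/(2\sig_i^2)}/\eps^2+2\sig_i^4 e^{\eps/\sig_i^2}/\eps^2$ constant). The $1/\sig_i^2$ term in the paper comes simply from starting the main sum at a threshold $n_i>1/\sig_i^2$ so that $\ebE_{i,n}=\eE_{i,n}$ for $n\ge n_i$; your separate $\sum_n\mathbb{P}[S_{i,n}<1]$ argument is not needed.

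There is, however, a gap in your re-indexing step. You write that the problem reduces to $\sum_n\mathbb{P}[\tmu_{i,n}\ge\mu_1-\eps]$ with $\tmu_{i,n}$ a fresh posterior draw. But at the unique time $\tau_n$ with $i(\tau_n)=i$ and $N_i(\tau_n)=n$, the sample $\tmu_i(\tau_n)$ is \emph{conditioned on being the maximum over all arms}, hence biased upward; one cannot dominate $\mathbb{P}[\tmu_i(\tau_n)\ge\mu_1-\eps]$ by the unconditional tail $\mathbb{P}[\tmu_{i,n}\ge\mu_1-\eps]$. Indeed, if your reduction held, summing the geometric decay over $n$ directly would give an $O(1)$ bound with no $\log T$ term at all---strictly stronger than the lemma you are proving. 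The paper instead fixes a threshold $n_i$, absorbs the first $n_i$ pulls trivially, and for the remainder drops $\{i(t)=i\}$ and sums over \emph{time} $t$, obtaining $n_i+T\cdot\sup_{n\ge n_i}\mathbb{P}[\tmu_i\ge\mu_1-\eps,\,\eE_{i,n}]$; optimizing $n_i$ then produces the $\log T$ leading term together with the $3-k$ offset and the $\sqrt{\sig_i^2+\eps}/(\Delta_i-2\eps)$ prefactor. Your mention of a cutoff $n^\star$ is consistent with this picture, but the intermediate object $\sum_n\mathbb{P}[\tmu_{i,n}\ge\cdot]$ is not the right one.
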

\begin{lemma}\label{lem: uni_BO_gauss}
For the $K$-armed Gaussian bandit models, it holds under TS-T with prior $k\leq 2$ that 
\begin{equation*} 
  \mathbb{E}[(\mathrm{BO})] \leq \Delta_{\mathrm{max}} C''(\eps, k, \sig_1).
\end{equation*}
\end{lemma}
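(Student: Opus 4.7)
The plan is to start from (\ref{eq: general_regret_decomp}), from which
\begin{equation*}
\mathbb{E}[(\mathrm{BO})] \leq \Delta_{\max} \sum_{t = K\bn + 1}^{T} \Pr[\tmu_1(t) < \mu_1 - \eps],
\end{equation*}
and then to apply the standard interchange argument used for the vanilla TS in \citet{honda2014optimality}, adapted to TS-T. Conditional on the rewards $X_{1,n}$ from the first $n$ plays of arm $1$, a fresh sample $\tmu_1$ follows the truncated posterior $\bpi_{1,n}^{\rG,k}$ in (\ref{eq: TSTpost_gauss}); writing $p_{1,n} := \Pr[\tmu_1 \geq \mu_1 - \eps \mid X_{1,n}]$ and using the geometric coupling of the rounds on which $N_1(t) = n$, one arrives at
\begin{equation*}
\mathbb{E}[(\mathrm{BO})] \leq \Delta_{\max} \sum_{n \geq \bn} \mathbb{E}\left[\frac{1 - p_{1,n}}{p_{1,n}}\right],
\end{equation*}
so the task reduces to upper bounding $\mathbb{E}[1/p_{1,n}]$ and summing over $n$.

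Under $\bpi_{1,n}^{\rG,k}$, $\tmu_1$ is a non-standardized $t_{n_k}$ variable with location $\hmu_{1,n}$ and scale $\bs_{1,n}$, so $p_{1,n}$ equals the probability that a standard $t_{n_k}$ variate exceeds $(\mu_1 - \eps - \hmu_{1,n})/\bs_{1,n}$. The structural role of the truncation $\bs_{1,n} \geq n^{-1/2}$ is that this standardized threshold is at most $\sqrt{n}\,(\mu_1 - \eps - \hmu_{1,n})^+$, which rules out the catastrophic scenario in which $\hs_{1,n} \to 0$ and $1/p_{1,n}$ would be super-polynomial. I would then take the outer expectation using the independence under $\nu_{\theta_1}$ of $\hmu_{1,n} \sim \normal(\mu_1, \sig_1^2/n)$ and $S_{1,n} \sim \sig_1^2 \chi^2_{n-1}$, with $\bs_{1,n}^2 = \max(1, S_{1,n})/n$, splitting the $(\hmu_{1,n}, \bs_{1,n})$ plane into a central region where both deviations are $\mathcal{O}(\eps)$ (there $p_{1,n} = \Theta(1)$ and $1 - p_{1,n}$ is small) and tail regions where Gaussian and chi-squared deviation bounds yield exponential decay in $n$ that absorbs the at-worst polynomial blow-up of $1/p_{1,n}$ allowed by the truncation. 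Summing the resulting series in $n$ produces a finite bound.

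The hard part, flagged in the paper as the technical novelty following Theorem~\ref{thm: TST_gauss}, will be evaluating the integral that arises once the $t$-tail survival function is combined with the joint density of $(\hmu_{1,n}, S_{1,n})$. After a change of variables to $z = n(\mu_1 - \eps - \hmu_{1,n})^2 / (2 \bar S_{1,n})$, this integral takes the form of a product of an incomplete beta factor and an incomplete gamma factor, a combination that did not appear in \citet{honda2014optimality} because no truncation was used there. I plan to recognize the resulting expression as a modified Bessel function of the second kind $K_\nu$ multiplied by a confluent hypergeometric function of the second kind $U(a,b,\cdot)$, whose classical asymptotics and integral identities then deliver a bound uniform in $n$ of order $(\sig_1/\eps)^{4 + \lceil k \rceil \I[k \geq 1]}$. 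The hypothesis $k \leq 2$ enters to ensure that the degrees of freedom $n_k = n + k - 2$ grow fast enough for the $t$-density to be integrable with room to spare, which is precisely what yields the stated form of $C''(\eps, k, \sig_1)$.
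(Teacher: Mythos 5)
Your outline follows the same route as the paper: the per-$n$ geometric reduction to $\sum_{n}\mathbb{E}[(1-p_{1,n})/p_{1,n}]$, the split according to whether $\hmu_{1,n}$ deviates below $\mu_1-\eps$ (handled by symmetry of the $t$-distribution and the tail bounds of Lemma~\ref{lem: uni_hnd_main}), the further split according to whether the truncation is active ($S_{1,n}\le 1$) or not, and the appearance of an incomplete-beta times incomplete-gamma integral after the change of variables. However, the proposal stops exactly where the paper's real work begins. The statement that the resulting integral can be ``recognized as'' a product of $K_\nu$ and $U(a,b,\cdot)$ whose ``classical asymptotics and integral identities then deliver a bound'' is an assertion, not an argument: in the paper this step is Lemmas~\ref{lem: gauss_MI} and~\ref{lem: confluent_bound}, each proved by a separate induction on $n$ (split into even and odd cases, and separately for $k=1$ and $k=2$) whose base cases require explicit numerical evaluation of $e^{0.24}K_0(0.24)$ and $e^{0.24}K_1(0.24)$ and whose induction steps use the recurrence relations for $U$. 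Nothing in the standard asymptotics of $K_\nu$ or $U$ hands you the needed inequality $g_k(n)\le\Gamma(n/2)\,B\bigl(\tfrac12,\,2n^{-(k+1)/2}\bigr)$ uniformly over $n\ge\bn$; without it the sum over $n$ of $(\dagger_{\mathrm{G}})$ is not controlled and the claimed $C''(\eps,k,\sig_1)$ does not follow.

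Two further points. First, your explanation of the hypothesis $k\le 2$ is backwards: $k\le 2$ is an upper bound, so it cannot be there to make the degrees of freedom $n_k=n+k-2$ ``grow fast enough.'' Its actual role is to control the singularity of the factor $(1-w)^{-(k+1)/2}$ near $w=1$ in the integral of Lemma~\ref{lem: gauss_MI} (equivalently, the growth of $(1+n(x+\eps)^2/s)^{(n+k-2)/2}$ in the reciprocal of the posterior tail probability); for $k<1$ the bound already follows from \citet{honda2014optimality}, and the new induction is what extends it to $k\in[1,2]$. Second, your claimed conclusion of ``a bound uniform in $n$ of order $(\sig_1/\eps)^{4+\lceil k\rceil\I[k\ge1]}$'' conflates the per-$n$ bounds, which are of the form $\mathcal{O}(n^{m/2}e^{-n\eps^2})$, with their sum over $n$, which is what produces the $\eps^{-(4+\lceil k\rceil\I[k\ge1])}$ dependence; a genuinely $n$-uniform constant bound would not be summable.
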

Although some parts of the proof of Lemmas~\ref{lem: uni_GO_gauss} and~\ref{lem: uni_BO_gauss} can be obtained using the results by \citet{honda2014optimality}, the main difficulty comes from the term introduced by the truncated estimators $\bs$.
To be precise, it involves integrating a product of the beta function and the incomplete gamma function.
This integration introduces additional functions, such as the modified Bessel function of the second kind and the confluent hypergeometric function of the second kind, which makes the analysis technically more complicated. 

\subsection{Proof of Lemma~\ref{lem: uni_GO_unif}}
Before beginning the proof, we first introduce the result that demonstrates the joint distribution of the order statistics of the uniform distribution.
Here, an additional notation in superscript, $\mathrm{SD}_{\rU}$, is used to clarify that it is a density function of the sampling distribution in the uniform models.
\begin{lemma}[Lemma~6 in \citet{cowan2015asymptotically}]\label{lem: uni_SD_U}
Let $(x_{i})_{i=1}^n$ be i.i.d. random variables following $\Uni_{ab}(a,b)$, with finite $a < b$. For $n \geq 2$, let $\x{n} = \max_{s\in[n]} x_s$ and $\x{1} = \min_{s\in[n]} x_s$.
Then, the joint density of $(\x{1}, \x{n})$ is given by
\begin{equation*}
    f_n^{\mathrm{SD}_{\rU}}( y, z ) = \begin{cases}
    n(n-1)\frac{(z-y)^{n-2}}{(b-a)^{n}} &\mathrm{if } y \leq z, \\
    0 & \mathrm{otherwise}.
    \end{cases}
\end{equation*}
\end{lemma}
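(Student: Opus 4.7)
The plan is to derive the joint density of $(\x{1}, \x{n})$ by first computing the joint CDF $F_{1,n}(y,z) = \mathbb{P}(\x{1} \leq y,\, \x{n} \leq z)$ on $[a,b]^2$ and then differentiating once in each variable. The main trick is the set identity
\begin{equation*}
\{\x{1} \leq y,\ \x{n} \leq z\} = \{\x{n} \leq z\} \setminus \{\x{1} > y,\ \x{n} \leq z\},
\end{equation*}
which sidesteps any combinatorial bookkeeping about which particular indices realize the minimum and maximum by reducing everything to two events whose probabilities are products of i.i.d.\ marginal terms.

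On the region $a \leq y \leq z \leq b$, I would use the i.i.d.\ structure together with the CDF $F(u) = (u-a)/(b-a)$ of $\Uni_{ab}(a,b)$ on $[a,b]$ to write
\begin{align*}
\mathbb{P}(\x{n} \leq z) &= \mathbb{P}(x_s \leq z \text{ for all } s) = F(z)^n = \left( \frac{z-a}{b-a} \right)^n, \\
\mathbb{P}(\x{1} > y,\ \x{n} \leq z) &= \mathbb{P}(x_s \in (y, z] \text{ for all } s) = \left( \frac{z-y}{b-a} \right)^n,
\end{align*}
so that $F_{1,n}(y,z) = \left(\frac{z-a}{b-a}\right)^n - \left(\frac{z-y}{b-a}\right)^n$. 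Only the second term depends on $y$, and differentiating it once in $y$ and once in $z$ via the chain rule produces
\begin{equation*}
\frac{\partial^2 F_{1,n}}{\partial z\,\partial y}(y,z) = n(n-1)\,\frac{(z-y)^{n-2}}{(b-a)^n},
\end{equation*}
which matches the claimed density on $\{y \leq z\} \cap [a,b]^2$.

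For the region $\{y > z\}$ the density vanishes automatically, since $\x{1} \leq \x{n}$ almost surely makes the infinitesimal event $\{\x{1} \in \dx y,\ \x{n} \in \dx z\}$ empty. Outside $[a,b]^2$ the density is likewise zero because the sample lies in $[a,b]$. The argument is essentially a two-line computation; there is no substantive obstacle, and the only step that requires a moment of thought is recognizing the complement identity above, after which the result drops out immediately from independence and differentiation.
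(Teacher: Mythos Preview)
Your proof is correct and follows the standard textbook route via the joint CDF and the complement identity. The paper does not actually prove this lemma; it simply cites it as Lemma~6 of \citet{cowan2015asymptotically} and uses the result as a black box, so there is no in-paper argument to compare against.
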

\begin{proof}
Recall that MLEs $\hmu_i(n) = \frac{\x{1}_i+\x{n}_i}{2}$ and $\hs_i(n) = \x{n}_i - \x{1}_i$ are functions of sufficient statistics $T(X_{i,n})=(\x{1}_i, \x{n}_i)$.
Define events on the order statistic $\x{1}_i$ and $\x{n}_i$, and an event on the truncated statistic $\bx{n}_i$ of the arm $i$ at round $t$ for any positive $\eps < \frac{\Delta_i}{2}$,
\begin{align*}
    \eA_{i,n}(\eps) &= \left\{ \mu_i - \frac{\sigma_i}{2} \leq \x{1}_i \leq  \mu_i - \frac{\sigma_i}{2} + \eps \right\} \\ 
    \eB_{i,n}(\eps) &= \left\{ \mu_i + \frac{\sigma_i}{2} -\eps \leq \x{n}_i \leq  \mu_i + \frac{\sigma_i}{2}\right\}\\
    \eE_{i,n}(\eps) &= \eA_{i,n}(\eps) \cap \eB_{i,n}(\eps) \\
    \bar{\eB}_{i,n}(\eps) &= \left\{ \mu_i + \frac{\sigma_i}{2} -\eps \leq \bx{n}_i \leq  \mu_i + \frac{\sigma_i}{2}\right\} \\
    \ebE_{i,n}(\eps) &= \eA_{i,n}(\eps) \cap \bar{\eB}_{i,n}(\eps).
\end{align*}
Then, $(\mathrm{GO})$ is decomposed under TS by
\begin{align*}
    (\mathrm{GO}) &= \sum_{i=2}^K \sum_{t=K\bn+1}^T  \Delta_i  \I\left[i(t)=i, \eE_{i,N_i(t)}(\eps), \eM_\eps(t)\right] + \Delta_i \I\left[i(t)=i, \eE_{i,N_i(t)}^c(\eps), \eM_\eps(t)\right] \\
    &\leq \sum_{i=2}^K \sum_{t=K\bn+1}^T  \Delta_i  \I\left[i(t)=i, \eE_{i,N_i(t)}(\eps), \eM_\eps(t)\right] + \Delta_i \I\left[i(t)=i, \eE_{i,N_i(t)}^c(\eps) \right].
\end{align*}
The last equality holds since an event $\{i(t)=i, N_i(t)=n\}$ occurs only once from the definition $N_i(t)$.
Similarly, $(\mathrm{GO})$ can be decomposed under TS-T by
\begin{align*}
    (\mathrm{GO}) \leq \sum_{i=2}^K \sum_{t=K\bn+1}^T \Delta_i  \I\left[i(t)=i, \ebE_{i,N_i(t)}(\eps),  \eM_\eps(t)\right] + \Delta_i \I\left[i(t)=i, \ebE_{i,N_i(t)}^c(\eps), \right].
\end{align*}
Then, two lemmas below conclude the proof of Lemma~\ref{lem: uni_GO_unif}, whose proofs are postponed to Section~\ref{lem: uni_lems_U}.
\end{proof}

\begin{lemma}\label{lem: uni_minor_U}
For all $i \in [K]$ and $n \in \mathbb{N}_{\geq 2}$, it holds that
\begin{align*}
    \mathbb{P}\left[\ebE_{i,n}^c(\eps)\right] \leq \mathbb{P}[\eE_{i,n}^c(\eps)] \leq 2\exp(-\frac{\eps}{\sigma_i}n).
\end{align*}
\end{lemma}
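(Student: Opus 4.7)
The plan is to establish the two inequalities separately. The rightmost bound $\mathbb{P}[\eE_{i,n}^c(\eps)] \leq 2 \exp(-\eps n/\sigma_i)$ is a direct tail estimate on the extreme order statistics of the uniform distribution, while the first inequality $\mathbb{P}[\ebE_{i,n}^c(\eps)] \leq \mathbb{P}[\eE_{i,n}^c(\eps)]$ is obtained by exploiting the monotone relation $\bx{n}_i = \max(\x{1}_i + n^{-1}, \x{n}_i) \geq \x{n}_i$ coming from the definition of the truncation.

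For the rightmost inequality, I would apply a union bound $\mathbb{P}[\eE_{i,n}^c(\eps)] \leq \mathbb{P}[\eA_{i,n}^c(\eps)] + \mathbb{P}[\eB_{i,n}^c(\eps)]$. Since every sample from arm $i$ lies in $[\mu_i - \sigma_i/2,\, \mu_i + \sigma_i/2]$ almost surely, the ``outer'' halves of the intervals defining $\eA_{i,n}$ and $\eB_{i,n}$ are trivially satisfied, so it suffices to estimate $\mathbb{P}[\x{1}_i > \mu_i - \sigma_i/2 + \eps]$ and $\mathbb{P}[\x{n}_i < \mu_i + \sigma_i/2 - \eps]$. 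By independence of the $n$ samples, each equals $(1 - \eps/\sigma_i)^n$, and the standard estimate $1 - x \leq e^{-x}$ together with the union bound yields the factor $2\, e^{-\eps n/\sigma_i}$.

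For the first inequality, I would show the set inclusion $\eE_{i,n}(\eps) \subseteq \ebE_{i,n}(\eps)$, which immediately gives the complementary probability inequality. Since $\eA_{i,n}(\eps)$ appears in both events, the reduction is to verify that $\eA_{i,n}(\eps) \cap \eB_{i,n}(\eps)$ forces $\bar{\eB}_{i,n}(\eps)$. The lower bound $\bx{n}_i \geq \mu_i + \sigma_i/2 - \eps$ is immediate from $\bx{n}_i \geq \x{n}_i$ together with the lower bound in $\eB_{i,n}$. The upper bound $\bx{n}_i \leq \mu_i + \sigma_i/2$ holds when $\x{1}_i + n^{-1} \leq \mu_i + \sigma_i/2$; combined with $\x{1}_i \leq \mu_i - \sigma_i/2 + \eps$ from $\eA_{i,n}$, this is equivalent to the condition $n^{-1} \leq \sigma_i - \eps$, which is true in the relevant regime where $\eps < \Delta_i/2 < \sigma_i$ and $n$ is not too small.

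The main technical obstacle is the pathological small-$n$ regime where $n^{-1} > \sigma_i - \eps$, in which the upper bound on $\bx{n}_i$ may fail and $\bar{\eB}_{i,n}^c$ can pick up an additional slice corresponding to $\{\x{1}_i > \mu_i + \sigma_i/2 - n^{-1}\}$. The probability of this extra slice is at most $(n\sigma_i)^{-n}$, reflecting the need for all $n$ samples to concentrate in an interval of width $n^{-1}$, and it is dominated by $e^{-\eps n/\sigma_i}$ (or else the target bound $2\, e^{-\eps n/\sigma_i}$ already exceeds one and is vacuous) outside of a bounded initial range. Splitting by cases on whether $n \geq \lceil 1/(\sigma_i - \eps) \rceil$ and absorbing this residual contribution into the union bound preserves the stated form without altering the asymptotic behavior used downstream in Lemma~\ref{lem: uni_GO_unif}.
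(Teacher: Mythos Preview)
Your approach is essentially the same as the paper's: a union bound on the complementary events for $\eA_{i,n}$ and $\eB_{i,n}$ (resp.\ $\bar{\eB}_{i,n}$), followed by the exact computation $(1-\eps/\sigma_i)^n \le e^{-\eps n/\sigma_i}$ for each extreme order statistic. The paper obtains the first inequality from the one-line observation $\mathbb{P}[\bx{n}_i \le \mu_i+\sigma_i/2-\eps] \le \mathbb{P}[\x{n}_i \le \mu_i+\sigma_i/2-\eps]$, which follows immediately from $\bx{n}_i \ge \x{n}_i$.

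Your discussion of the small-$n$ regime where $n^{-1} > \sigma_i - \eps$ is in fact \emph{more} careful than the paper's own proof, which silently treats $\bar{\eB}_{i,n}^c$ as $\{\bx{n}_i < \mu_i+\sigma_i/2-\eps\}$ and does not address the upper-boundary slice $\{\bx{n}_i > \mu_i+\sigma_i/2\}$. Your observation that this extra slice is contained in $\eA_{i,n}^c$ whenever $n^{-1}\le \sigma_i-\eps$ (and hence is already absorbed in the union bound) is exactly the clean way to close this; the hand-wavy $(n\sigma_i)^{-n}$ argument for the residual finite range is unnecessary for the downstream use in Lemma~\ref{lem: uni_GO_unif}, where the condition $n\ge 1/\sigma_i$ is imposed anyway.
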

\begin{lemma}\label{lem: uni_main_U}
Under TS, it holds that for any $i \in [K]$ and given $\eps \in \left( 0,  \frac{\Delta_i}{2}\right)$
\begin{align*}
        \sum_{t=\bn K+1}^T \mathbb{E}[\I [i(t)=i, \eE_{i,N_i(t)}(\eps), \eM_{\eps}(t) ]]  \leq \max\left(\frac{1}{2},\frac{5}{2}-k\right)+\frac{\log T}{\log\left( 1 + \frac{2\Delta_i-4\eps}{\sigma_i} \right)}.
\end{align*}
and under TS-T, 
\begin{align*}
        \sum_{t=\bn K+1}^T \mathbb{E}[\I [i(t)=i, \ebE_{i,N_i(t)}(\eps), \eM_{\eps}(t) ]] \leq \frac{3}{2} + \frac{1}{\sig_i}+ \frac{\log T}{\log\left( 1 + \frac{2\Delta_i-4\eps}{\sigma_i} \right)}.
\end{align*}
\end{lemma}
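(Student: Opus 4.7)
The plan is to re-index the sum by the number of plays $n = N_i(t)$ of arm $i$: since arm $i$ is played at most once in each ``phase'' between consecutive plays, letting $\sigma_{n+1}$ denote the round of the $(n+1)$-th play, and using that on $\{i(t) = i,\,\eM_{\eps}(t)\}$ one has $\tmu_i(t) = \tmu^*(t) \geq \mu_1 - \eps$,
\begin{align*}
\sum_{t=\bn K + 1}^{T} \I[i(t) = i,\,\eE_{i,N_i(t)}(\eps),\,\eM_{\eps}(t)]
\leq \sum_{n \geq \bn} \I[\sigma_{n+1} \leq T,\,\eE_{i,n}(\eps),\,\tmu_i(\sigma_{n+1}) \geq \mu_1 - \eps].
\end{align*}

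The core step is to derive a sharp bound on the posterior tail $q_{i,n} := \mathbb{P}[\tmu_i \geq \mu_1 - \eps \mid X_{i,n}]$ under $\eE_{i,n}(\eps)$. The event forces $\hs_{i,n} \leq \sig_i$ and $\x{1}_i \leq \mu_i - \sig_i/2 + \eps$, so the sequential sampling scheme of Section~\ref{sec: uni_TS} shows that $\tmu_i \geq \mu_1 - \eps$ requires $\ts \geq c := 2(\mu_1 - \eps - \x{1}_i) \geq 2\Delta_i + \sig_i - 4\eps$. Integrating the conditional uniform on $\mu$ against the marginal density in (\ref{eq: uni_TS_post_sig}), the factors of $(\sigma - \hs_{i,n})$ cancel exactly and the integral telescopes to
\begin{align*}
q_{i,n} = \tfrac{1}{2}\bigl(\hs_{i,n}/c\bigr)^{n_k} \leq \tfrac{1}{2}\alpha^{-n_k},
\qquad \alpha := 1 + (2\Delta_i - 4\eps)/\sig_i > 1.
\end{align*}

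With this geometric decay in $n$, I would split the sum at $M := \lceil \log T/\log \alpha \rceil + \max(0, 2 - k)$, chosen so that $\alpha^{M+k-2} \geq T$. The head $n \in [\bn, M]$ contributes trivially at most $M - \bn + 1$, producing the leading $\log T/\log \alpha$ term together with a $k$-dependent offset from $\bn = \max(2, 3 - \lceil k \rceil)$. For the tail, dropping the constraint $i(t) = i$ and switching back to the $t$-index yields
\begin{align*}
\mathbb{E}\Bigl[\sum_{n > M} \I[\sigma_{n+1} \leq T,\,\tmu_i(\sigma_{n+1}) \geq \mu_1 - \eps]\Bigr]
\leq \sum_{t=1}^{T} \mathbb{E}\bigl[\I[N_i(t) > M]\,q_{i,N_i(t)}\bigr]
\leq T \cdot \tfrac{1}{2T} = \tfrac{1}{2},
\end{align*}
using $q_{i,n} \leq \tfrac{1}{2T}$ for all $n > M$ by the choice of $M$. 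Combining the head and tail, and tracking the $\bn$-offset across the ranges of $k$, gives the stated $\log T/\log\alpha + \max(1/2, 5/2 - k)$.

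For TS-T the same argument runs with $\bs_{i,n}$ replacing $\hs_{i,n}$. On $\ebE_{i,n}(\eps)$ one still has $\bs_{i,n} \leq \sig_i$, so the same integral evaluates to $\tfrac{1}{2}(\bs_{i,n}/c')^{n_k} \leq \tfrac{1}{2}\alpha^{-n_k}$ for the same $\alpha$. The only new ingredient is the regime where the truncation is active, i.e., $\bs_{i,n} = 1/n > \hs_{i,n}$: this can occur only for $n \leq 1/(\sig_i - 2\eps)$ (otherwise $\bs_{i,n} \geq \sig_i - 2\eps$ from $\ebE_{i,n}$ would fail), and its total contribution to the head sum accounts for the extra $1/\sig_i$ in the stated bound. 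The main obstacle I anticipate is tightening the additive constants to exactly match $\max(1/2, 5/2 - k)$ and $3/2 + 1/\sig_i$: this requires careful bookkeeping of the ceiling in the choice of $M$, the case analysis of $\bn$ across $k$, and, for TS-T, the slight shift in the sampled support of $\tmu$ in the truncation-active regime.
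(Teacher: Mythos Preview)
Your approach is correct and essentially identical to the paper's: both compute the posterior tail probability exactly via the sequential-sampling integral to obtain $q_{i,n}\le \tfrac12\alpha^{-n_k}$ with $\alpha=1+(2\Delta_i-4\eps)/\sigma_i$, then split at a threshold $n_i$ (your $M$) and bound the head by its count and the tail by $T\cdot\tfrac12\alpha^{-(n_i+k-2)}\le\tfrac12$. The only cosmetic difference is that you re-index via stopping times $\sigma_{n+1}$ whereas the paper sums over $t$ with the constraint $N_i(t)\ge n_i$; and for TS-T, the paper obtains the extra $1/\sigma_i$ by reducing $\ebE_{i,n}(\eps)$ to $\eE_{i,n}(\eps)$ once $n>1/\sigma_i$ (so that $\bx{n}_i=\x{n}_i$ under $\eA$), while you attribute it to the truncation-active regime $n\lesssim 1/\sigma_i$---these are two views of the same threshold and lead to the same constant.
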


\subsection{Proof of Lemma~\ref{lem: uni_BO_unif}}
\begin{proof}
For a event $ \{ i(t)\ne 1,\eM_\eps^c(t), N_1(t) =n \}$ Let us consider the following decomposition:
\begin{align*}
    \sum_{t=K\bn +1}^T \I[i(t)\ne 1, \eM_\eps^c(t)]  &= \sum_{n=\bn}^T \sum_{t=K\bn + 1}^T \I\bigg[i(t)\ne 1,\eM_\eps^c(t), N_1(t) =n\bigg] \\
    &= \sum_{n=\bn}^T \sum_{m=1}^T \I\Bigg[m \leq \sum_{t=K\bn + 1}^T \I\bigg[i(t)\ne 1, \eM_\eps^c(t), N_1(t) =n\bigg]\Bigg].
\end{align*}
Notice that
\begin{equation*}
    m \leq \sum_{t=K\bn + 1}^T \I[i(t)\ne 1, \eM_\eps^c(t), N_1(t) =n]
\end{equation*}
implies that $\tmu_1(t) \leq \max_{i\in[K]} \tmu_i(t) \leq \mu_1 -\eps$ occurred $m$ times in a row on $\{t: \eM_\eps^c(t), N_1(t) =n \}$.
Therefore, we obtain that
\begin{equation*}
    (\mathrm{BO}) \leq \sum_{i=2}^K \sum_{n=\bn}^T \sum_{m=1}^T \Delta_i \I\Bigg[m \leq \sum_{t=K\bn + 1}^T \I\bigg[i(t)\ne 1, \eM_\eps^c(t), N_1(t) =n\bigg]\Bigg].
\end{equation*}
Firstly, we provide the upper bound of $\mathbb{E}[(\mathrm{BO})]$ under TS.

\subsubsection{Under TS}
Let us define $p_n(y| {\theta}_{1,n}) = \mathbb{P}\left[\tmu_1 \geq \mu_1 - y \Lmid \x{1}_1, \x{n}_1\right]$, where $p_n(\eps|\theta_{1,n})$ denote the probability that $\eM_\eps(t)$ occurs given sufficient statistics $\theta_{1,n} = T(X_{i,n})$ where $N_1(t)=n$.
Therefore, we have
\begin{align*}
    \mathbb{E}[(\mathrm{BO})] &\leq \sum_{i=2}^K \sum_{n=\bn}^T \sum_{m=1}^T \Delta_i \mathbb{P}\Bigg[m \leq \sum_{t=K\bn + 1}^T \I\bigg[i(t)\ne 1, \eM_\eps^c(t), N_1(t) =n\bigg]\Bigg] \\
    &\leq \sum_{i=2}^K \sum_{n=\bn}^T \mathbb{E}_{\theta_{1,n}} \left[ \sum_{m=1}^T (1-p_n(\eps|{\theta}_{1,n}))^m \right] \numberthis{\label{eq: uni_BO_U_TS_total_exp}} \\
    &\leq \sum_{n=\bn}^T \mathbb{E}_{\theta_{1,n}}\left[ \frac{1-p_n(\eps|{\theta}_{1,n})}{p_n(\eps|{\theta}_{1,n})} \right],
\end{align*}
where we utilized the total law of expectation in (\ref{eq: uni_BO_U_TS_total_exp}).
From now on, we fix $n$ so that $(\tmu_1, \ts_1)$ are sampled from the same posterior parameterized by fixed $(\hmu_{1,n}, \hs_{1,n})$ and drop the subscript $\theta_{1,n}$ of $\mathbb{E}_{\theta_{1,n}}$ for simplicity.
Therefore, $\ts_1 \geq \hs_{1,n} = \x{n}_1 - \x{1}_1$ holds from its marginal posterior in (\ref{eq: uni_TS_post_sig}), which implies the existence of a positive random variable $D$ satisfying $\ts_1 = \x{n} - \x{1} + D$.
From the sequential sampling with posteriors in (\ref{eq: uni_TS_post_sig}) and (\ref{eq: uni_TS_post_mu}), it holds that $\tmu_1 \sim \Uni_{\mu\sigma}\left( \frac{\x{1}_1 + \x{n}_1}{2}, D \right)$.
Therefore, if $\frac{\x{1}_1 + \x{n}_1}{2} \geq\mu_1 - \eps$, $p_n(\eps | \theta_{1,n}) \geq \frac{1}{2}$ holds regardless the value of $D$.
Since $\frac{\x{1}_1 + \x{n}_1}{2} \geq \mu_1 - \frac{\eps}{2}$ holds on $\eE_{1,n}(\eps)$, we obtain
\begin{align*}
   \mathbb{E}\left[ \frac{1-p_n(\eps|{\theta}_{1,n})}{p_n(\eps|{\theta}_{1,n})} \right] &\leq  2\mathbb{E}\left[ \I\left[ \x{1}_1 + \x{n}_1 \geq 2(\mu_1 - \eps) \right](1-p_n(\eps|{\theta}_{1,n}))\right]  + \mathbb{E}\left[ \frac{\I\left[ \x{1}_1 + \x{n}_1 \leq 2(\mu_1 - \eps) \right]}{p_n(\eps|\theta_{1,n})} \right] \\
   &\leq 2\mathbb{P}\left( \eE_{1,n}^c\left(\frac{\eps}{2} \right)\right) + 2\mathbb{E}\left[\I\left[\eE_{1,n}\left(\frac{\eps}{2} \right) \right] (1-p_n(\eps|\theta_{1,n})) \right] + \mathbb{E}\left[ \frac{\I\left[ \x{1}_1+ \x{n}_1 \leq 2(\mu_1 - \eps) \right]}{p_n(\eps|\theta_{1,n})} \right]. \numberthis{\label{eq: uni_diff_decom_TS}}
\end{align*}
From Lemma~\ref{lem: uni_minor_U}, the first term of (\ref{eq: uni_diff_decom_TS}) can be bounded as
\begin{align}\label{eq: uni_diff_first_TS}
    2\mathbb{P}\left( \eE_{1,n}^c\left(\frac{\eps}{2} \right)\right) &\leq 4e^{-\frac{\eps}{2\sigma_1}n}.
\end{align}
Since $\tmu_1 | \ts_1 \sim \Uni_{ab}\left(\x{n}_1 - \frac{\ts_1}{2}, \x{1}_1 + \frac{\ts_1}{2} \right)$, we have
\begin{equation*}
    \x{n}_1 - \frac{\ts_1}{2} \geq \mu_1 - \eps \Leftrightarrow \ts_1 \leq 2(\x{n}_1 - (\mu_1 - \eps)) \implies 1-p_n(\eps|\theta_{1,n}) = 0.
\end{equation*}
For a constant $A = \x{n}_1 - (\mu_1 - \eps)$, we can bound the second term of (\ref{eq: uni_diff_decom_TS}) as
\begin{align*}
   \I\bigg[\eE_{1,n}(\eps/2)\bigg]\bigg(1-p_n(\eps|\theta_{1,n})\bigg)
   &= \I\bigg[\eE_{1,n}(\eps/2)\bigg] \int_{2A}^{\infty} \pi^{k}(s|\theta_{1,n})\int_{\x{n}_1-\frac{s}{2}}^{\mu_1 - \eps} \pi^{k}(m|\theta_{1,n}, \ts_1 = s) \dx m \dx s \\
   &= \I\bigg[\eE_{1,n}(\eps/2)\bigg] \int_{2A}^{\infty} \frac{(n+k-1)(n+k-2)\left(\hbbx[1]\right)^{n+k-2}}{s^{n+k}} \\
   & \hspace{11.7em} \cdot \left(s-(\hbbx[1])\right) \int_{\x{n}_1-\frac{s}{2}}^{\mu_1 - \eps} f_{\hmu_{1,n}, s-\hs_{1,n}}^{\rU_{\mu\sig}} (m) \dx m \dx s \\
   &= \I\bigg[\eE_{1,n}(\eps/2)\bigg] \int_{2A}^{\infty} \frac{(n+k-1)(n+k-2)\left(\hbbx[1]\right)^{n+k-2}}{s^{n+k}} \\
   & \hspace{11.7em} \cdot  \left(s-(\hbbx[1])\right) \frac{1}{s-\hbbx[1]}\left(\frac{ s -2A}{2}\right) \dx s \\
   &= \I\bigg[\eE_{1,n}(\eps/2)\bigg] \int_{2A}^{\infty} \frac{(n+k-1)(n+k-2)\left(\hbbx[1]\right)^{n+k-2}}{s^{n+k}} \left(\frac{ s -2A}{2}\right) \dx s\\
   &= \I\bigg[\eE_{1,n}(\eps/2)\bigg]\frac{1}{2}\left(\frac{\x{n}_1-\x{1}_1}{2\left(\x{n}_1-\mu_1+\eps\right)} \right)^{n+k-2}.
\end{align*} 
Since $\x{n}_1 \geq \mu_1 + \frac{\sigma_1}{2} - \frac{\eps}{2}$ and $\x{n}_1 -\x{1}_1 \leq \sigma_1$ hold for any $n$ on $\eE_{1,n}(\eps/2)$, we have
\begin{align*}
    \I\bigg[\eE_{1,n}(\eps/2)\bigg]\left(1-p_n(\eps|\theta_{1,n})\right) &\leq \I\bigg[\eE_{1,n}(\eps/2)\bigg]\frac{1}{2}\left(\frac{\x{n}_1-\x{1}_1}{2\left(\x{n}_1-\mu_1+\eps\right)} \right)^{n+k-2} \\
    &\leq \frac{1}{2}\left(  \frac{\sigma_1}{\sigma_1 +\eps }\right)^{n+k-2} \leq \frac{1}{2}e^{- \frac{\eps}{\sigma_1+\eps}(n+k-2)}.
\end{align*}
Therefore, the second term of (\ref{eq: uni_diff_decom_TS}) is bounded as
\begin{equation}\label{eq: uni_diff_second_TS}
    2\mathbb{E}\left[\I\left[\eE_{1,n}\left(\frac{\eps}{2} \right) \right] (1-p_n(\eps|\theta_{1,n})) \right] \leq e^{- \frac{\eps}{\sigma_1+\eps}(n+k-2)}.
\end{equation}
Finally, we evaluate the last term of (\ref{eq: uni_diff_decom_TS}).
From the conditional posterior of $\mu$, we have 
\begin{equation*}
    \tmu_1 | \ts_1 \sim \Uni_{ab}\left(\x{n}_1 - \frac{\ts_1}{2}, \x{1}_1 + \frac{\ts_1}{2} \right),
\end{equation*}
which gives that
\begin{align*}
    \I\left[ \x{1}_1 + \x{n}_1 \leq 2(\mu - \eps) \right]&\mathbb{P}[\tmu_1 \geq \mu_1 -\eps | \theta_{1,n}, \sigma = \ts_1] \\
    &= \I\left[ \x{1}_1 + \x{n}_1 \leq 2(\mu_1 - \eps) \right] \cdot \begin{cases}
    0 &\mathrm{if} \hspace{0.3em} \x{1}_1 + \frac{\ts_1}{2} \leq \mu_1 - \eps, \\
    \frac{\x{1}_1+\ts_1/2-(\mu_1 - \eps)}{\ts_1 - \left( \hbx[1] \right)} & \mathrm{otherwise}.
    \end{cases}
\end{align*}
For simplicity in notation, we denote the event $\{\x{1}_1 + \x{n}_1 \leq 2(\mu_1 - \eps)\}$ by $\mathcal{T}$.
For a constant $A' = \mu_1 - \eps - \x{1}_1$, it holds that
\begin{align*}
    \I\bigg[ \x{1}_1 + \x{n}_1 \leq 2(\mu_1 - \eps) \bigg]p_n(\eps|\theta_{1,n})&= \I\left[ \mathcal{T} \right] \int_{2(\mu_1 -\eps-\x{1}_1)}^{\infty} \pi^{k}(s|\theta_{1,n})\frac{\x{1}_1+s/2-(\mu_1 - \eps)}{s - \left( \hbx[1] \right)} \dx s \\
    &= \I\left[  \mathcal{T} \right] \int_{2A'}^{\infty} \frac{(n+k-1)(n+k-2)\left(\hbbx[1]\right)^{n+k-2}}{s^{n+k}} \\
    & \hspace{10em} \cdot \left(s-(\hbbx[1])\right)  \frac{\x{1}_1+\frac{s}{2}-(\mu_1 - \eps)}{s - \left( \hbbx[1] \right)} \dx s \\
    &= \I\left[ \mathcal{T} \right] \int_{2A'}^{\infty}  (n+k-1)(n+k-2) \\
    &\hspace{10em} \cdot \frac{\x{1}_1+\frac{s}{2}-(\mu_1 - \eps)}{s^{n+k}} \left(\hbbx[1]\right)^{n+k-2} \dx s \\
    & = \I\left[ \mathcal{T} \right] \frac{(n+k-1)(n+k-2) \left(\hbbx[1]\right)^{n-1}}{2}\int_{2A'}^{\infty}  \frac{s-2A'}{s^{n+k}} \dx s \hspace{5em} \\
    & = \I\left[ \mathcal{T}\right] \frac{1}{2}\left( \frac{\hbbx[1]}{2A'} \right)^{n+k-2} = \I\left[ \mathcal{T}\right] \frac{1}{2}\left( \frac{\hbbx[1]}{2(\mu_1 -\eps - \x{1}_1)} \right)^{n+k-2}. 
\end{align*}
Taking expectations gives that
\begin{align*}
    \mathbb{E}\left[ \frac{\I\bigg[ \mathcal{T} \bigg]}{p_n(\eps|\theta_{1,n})} \right]&= 2\underbrace{\mathbb{E}\left[ \I\left[ \mathcal{T}\right] \left( \frac{2(\mu_1-\eps-\x{1}_1)}{\x{n}_1 - \x{1}_1} \right)^{n+k-2} \right]}_{(\star_{\rU})} \numberthis{\label{eq: uni_star_TS}} \\
    & = 2\int_{\mu_1-\frac{\sigma_1}{2}}^{\mu_1 - \eps} \int_{y}^{\min(2(\mu_1-\eps -y), \mu_1+\frac{\sigma_1}{2})} f_n^{\mathrm{SD}_{\rU}} (y,z) \dx z \dx y.
\end{align*}
By injecting the sampling distributions of the order statistics in Lemma~\ref{lem: uni_SD_U}, we obtain that
\begin{align*}
 (\star_{\mathrm{U}}) = \int_{\mu_1-\frac{\sigma_1}{2}}^{\mu_1 - \eps} \int_{y}^{\min(2(\mu_1-\eps -y), \mu_1+\frac{\sigma_1}{2})} \frac{n(n-1)}{\sigma_1^n}(z-y)^{n-2} \cdot \left( \frac{2(\mu-\eps-y)}{z - y} \right)^{n+k-2} \dx z \dx y.
 \end{align*}
Therefore, we have that
\begin{align*}
 (\star_{\mathrm{U}}) &\leq \int_{\mu_1-\frac{\sigma_1}{2}}^{\mu_1 - \eps} \int_{y}^{2(\mu_1-\eps -y)} \frac{n(n-1)}{\sigma_1^n}(z-y)^{n-2} \left( \frac{2(\mu_1-\eps-y)}{z - y} \right)^{n+k-2} \dx z \dx y \\
 &= \int_{\mu_1-\frac{\sigma_1}{2}}^{\mu_1 - \eps} \int_{y}^{2(\mu_1-\eps -y)} \frac{n(n-1)}{\sigma_1^n} \frac{\left(2(\mu_1-\eps-y) \right)^{n+k-2}}{(z-y)^k} \dx z \dx y \numberthis{\label{eq: why_trun1}} \\
 & =\int_{\mu_1-\frac{\sigma_1}{2}}^{\mu_1 - \eps} \frac{n(n-1)}{\sigma_1^n}
 \left(2(\mu_1-\eps-y) \right)^{n+k-2} \left( 2(\mu_1-\eps-y) - y \right)^{1-k} \dx y \hspace{1em} \text{if } k< 1.
\end{align*}
Note that under TS with $k\geq 1$, the integral in (\ref{eq: why_trun1}) with respect to $z$ becomes infinite due to $z-y = 0$, which implies that our analysis does not result in a finite upper bound for $k \geq 1$.
One can avoid such infinite terms by modifying the domain of the integral with respect to $z$ from $[y, 2(mu_1 - \eps-y)$ to $[y+\alpha, 2(\mu_1 - \eps -y)]$ for some $\alpha>0$.
Since $f_n(y,z)$ is the joint density of $(\x{1}, \x{n})$, the domain restriction on $z$ can be interpreted as an additional restriction $\x{n}\geq \x{1} + \alpha$, which motivates us to design the TS-T policy.

By defining $w=2(\mu_1-\eps - y) $, we can obtain for $k <1$ that
\begin{align*}
    (\star_{\rU}) &\leq \frac{n(n-1)}{2(1-k)\sigma_1^n} \int_{0}^{\sig_1 - 2\eps}   \left( w- \left( \mu_1 - \eps - \frac{w}{2} \right)\right)^{1-k} w^{n+k-2} \dx w  \\
    &\leq \frac{n(n-1)}{2(1-k)\sigma_1^n} \int_{0}^{\sig_1 - 2\eps} \left(\frac{3w}{2}\right)^{1-k} w^{n+k-2} \dx w \\
    &=  \frac{3(n-1)}{4(1-k)\sigma_1^n} (\sig-2\eps)^{n} = \frac{3(n-1)}{4(1-k)} \left(1 - \frac{2\eps}{\sig_1} \right)^n\\
    &\leq  \frac{3(n-1)}{4(1-k)} e^{-\frac{2\eps}{\sig_1}n}. \numberthis{\label{eq: TS_main_star}}
\end{align*}
Therefore, by combining (\ref{eq: uni_diff_first_TS}), (\ref{eq: uni_diff_second_TS}), and (\ref{eq: TS_main_star}) with (\ref{eq: uni_diff_decom_TS}) and (\ref{eq: uni_BO_U_TS_total_exp}), we have for $\eps \in \left(0, \min_{i \ne 1}\frac{\Delta_i}{2}\right)$ and $k<1$ that
\begin{align*}
    \sum_{n=\bn}^T \mathbb{E}\left[ \frac{1-p_n(\eps|{\theta}_{1,n})}{p_n(\eps|{\theta}_{1,n})} \right] &\leq \sum_{n=\bn}^T 4e^{-\frac{\eps}{2\sigma_1}n} + e^{-\frac{\eps}{\sigma_1+\eps}(n-1)} +  \frac{3(n-1)}{4(1-k)} e^{- \frac{2\eps}{\sigma_1}n} \\
    &\leq \frac{8\sig_1}{\eps} +  \frac{\sig_1 + \eps}{\eps} +  \frac{3}{16(1-k)} \frac{\sig_1^2}{\eps^2}\left(2e^{\frac{2\eps}{\sig_1}}-1\right) \\
    &=: C(\eps, k,\sig_1) = \mathcal{O}\left(\frac{\sig_1^2}{(1-k)\eps^2}\right),
\end{align*}
which concludes the proof of Lemma~\ref{lem: uni_BO_unif} for the case of TS.

\subsubsection{Under TS-T}
The overall proofs are the same as that of TS, except we replace $p_n(\cdot)$ with $\bp_n(y| {\theta}_{1,n}) = p_n(y|\bar{\theta}_{1,n}) = \mathbb{P}[\tmu_1 \geq \mu_1 - y | \x{1}_1, \bx{n}_1]$ and replace $\x{n}$ with $\bx{1}$.
Similarly to (\ref{eq: uni_BO_U_TS_total_exp}) in TS, we have for TS-T that
\begin{align*}
    \mathbb{E}\Bigg[  \sum_{t=K\bn +1}^T \I[i(t)\ne 1, \eM_\eps^c(t)] \Bigg] &\leq \mathbb{E}\left[  \sum_{n=\bn}^T \sum_{m=1}^T (1-\bp_n(\eps|{\theta}_{1,n}))^m \right] \\
    &\leq \sum_{n=\bn}^T \mathbb{E}\left[ \frac{1-\bp_n(\eps|{\theta}_{1,n})}{\bp_n(\eps|{\theta}_{1,n})} \right]. \numberthis{\label{eq: uni_BO_U_TST_total_exp}}
\end{align*}
By following the same steps as (\ref{eq: uni_diff_decom_TS}), we obtain that
\begin{align*}
   \mathbb{E}\left[ \frac{1-\bp_n(\eps|{\theta}_{1,n})}{\bp_n(\eps|{\theta}_{1,n})} \right] &\leq  2\mathbb{E}\left[ \I\left[ \x{1}_1 + \bx{n}_1 \geq 2(\mu_1 - \eps) \right](1-\bp_n(\eps|{\theta}_{1,n}))\right]  + \mathbb{E}\left[ \frac{\I\left[ \x{1}_1 + \bx{n}_1 \leq 2(\mu_1 - \eps) \right]}{\bp_n(\eps|\theta_{1,n})} \right] \\
   &\leq 2\mathbb{P}\left( \ebE_{1,n}^c\left(\frac{\eps}{2} \right)\right) + 2\mathbb{E}\left[\I\left[\ebE_{1,n}\left(\frac{\eps}{2} \right) \right] (1-\bp_n(\eps|\theta_{1,n})) \right] + \mathbb{E}\left[ \frac{\I\left[ \x{1}_1+ \bx{n}_1 \leq 2(\mu_1 - \eps) \right]}{\bp_n(\eps|\theta_{1,n})} \right]. \numberthis{\label{eq: uni_diff_decom_TST}}
\end{align*}
From Lemma~\ref{lem: uni_minor_U}, the first term of (\ref{eq: uni_diff_decom_TST}) can be bounded as
\begin{align}\label{eq: uni_diff_first_TST}
    2\mathbb{P}\left( \ebE_{1,n}^c\left(\frac{\eps}{2} \right)\right) &\leq 4e^{-\frac{\eps}{2\sigma_1}n}.
\end{align}
Since $\bx{n}_1 \geq \mu_1 + \frac{\sigma_1}{2} - \frac{\eps}{2}$ and $\bx{n}_1 -\x{1}_1 \leq \sigma_1$ hold for any $n$ on $\ebE_{1,n}(\eps/2)$, we have
\begin{align*}
    \I\bigg[\ebE_{1,n}(\eps/2)\bigg]\left(1-p_n(\eps|\theta_{1,n})\right) &\leq \I\bigg[\ebE_{1,n}(\eps/2)\bigg]\frac{1}{2}\left(\frac{\bx{n}_1-\x{1}_1}{2\left(\bx{n}_1-\mu_1+\eps\right)} \right)^{n+k-2} \\
    &\leq \frac{1}{2}\left(  \frac{\sigma_1}{\sigma_1 +\eps }\right)^{n+k-2} \leq \frac{1}{2}e^{- \frac{\eps}{\sigma_1+\eps}(n+k-2)}.
\end{align*}
Therefore, the second term of (\ref{eq: uni_diff_decom_TST}) is bounded as
\begin{equation}\label{eq: uni_diff_second_TST}
    2\mathbb{E}\left[\I\left[\ebE_{1,n}\left(\frac{\eps}{2} \right) \right] (1-\bp_n(\eps|\theta_{1,n})) \right] \leq e^{- \frac{\eps}{\sigma_1+\eps}(n+k-2)}.
\end{equation}
Finally, we evaluate the last term of (\ref{eq: uni_diff_decom_TST}).
By following the same steps to the last term of (\ref{eq: uni_diff_decom_TS}), one can obtain for $\bar{\mathcal{T}} = \left\{ \x{1}_1 + \bx{n}_1 \leq 2(\mu - \eps) \right\}$
\begin{align*}
    \I\left[ \x{1}_1 + \bx{n}_1 \leq 2(\mu - \eps) \right]\mathbb{P}[\tmu_1 \geq \mu_1 -\eps | \theta_{1,n}, \sigma = \ts_1]
    \leq \I\left[ \bar{\mathcal{T}} \right] \frac{1}{2}\left( \frac{\hbx[1]}{2(\mu_1 -\eps - \x{1}_1)} \right)^{n+k-2}. 
\end{align*}
Since $\I[\bx{n}_1 \ne \x{n}_1] = \I\left[\bx{n}_1 = \x{1}_1 + \frac{1}{n}\right]$ from the definition of $\bx{n}_1$, taking expectation gives us
\begin{align*}
    \mathbb{E}\left[ \frac{\I\bigg[ \bar{\mathcal{T}} \bigg]}{p_n(\eps|\theta_{1,n})} \right] &= 2\mathbb{E}\left[ \I\left[ \x{1}_1 + \bx{n}_1 \leq 2(\mu_1 - \eps)\right] \left( \frac{2(\mu_1-\eps-\x{1}_1)}{\bx{n}_1 - \x{1}_1} \right)^{n+k-2} \right] \\
    &= 2\underbrace{\mathbb{E}\left[ \I\left[ \bar{\mathcal{T}}, \bx{n}_1 =\x{n}_1 \right] \left( \frac{2(\mu_1-\eps-\x{1}_1)}{\bx{n}_1 - \x{1}_1} \right)^{n+k-2} \right]}_{(\dagger_{\mathrm{U}})} \\
    & \hspace{1em}+ 2\underbrace{\mathbb{E}\left[ \I\left[ \bar{\mathcal{T}}, \bx{n}_1 = \x{1}_1+ 1/n\right] \left( \frac{2(\mu_1-\eps-\x{1}_1)}{\bx{n}_1 - \x{1}_1} \right)^{n+k-2} \right]}_{(\diamond_{\mathrm{U}})}.
\end{align*}
Note that $(\diamond_{\mathrm{U}})$ term is introduced due to the truncation procedure in TS-T.

\subsubsection{(1) Upper bound of $(\dagger_{\rU})$}
Under the condition $\{\x{1}_1 + \bx{n}_1 \leq 2(\mu_1 - \eps), \bx{n}_1 = \x{n}_1\}$, we have
\begin{align*}
    \x{1}_1 &\in \left[ \mu_1 - \frac{\sigma_1}{2}, \mu_1 -\eps \right) \\
    \x{n}_1 &\in \left[ \x{1}_1 + \frac{1}{n}, \min(2(\mu_1-\eps -\x{1}_1), \mu_1+\frac{\sigma_1}{2})  \right).
\end{align*}
By applying Lemma~\ref{lem: uni_SD_U}, we obtain
\begin{align*}
 (\dagger_{\mathrm{U}}) &= \int_{\mu_1-\frac{\sigma_1}{2}}^{\mu_1 - \eps} \int_{y+\frac{1}{n}}^{\min(2(\mu_1-\eps -y), \mu_1+\frac{\sigma_1}{2})} \frac{n(n-1)}{\sigma_1^n}(z-y)^{n-2} \left( \frac{2(\mu-\eps-y)}{z - y} \right)^{n+k-2} \dx z \dx y \\
 &\leq \int_{\mu_1-\frac{\sigma_1}{2}}^{\mu_1 - \eps} \int_{y+\frac{1}{n}}^{2(\mu_1-\eps -y)} \frac{n(n-1)}{\sigma_1^n}(z-y)^{n-2} \left( \frac{2(\mu_1-\eps-y)}{z - y} \right)^{n+k-2} \dx z \dx y \\
 &= \int_{\mu_1-\frac{\sigma_1}{2}}^{\mu_1 - \eps} \int_{y+\frac{1}{n}}^{2(\mu_1-\eps -y)} \frac{n(n-1)}{\sigma_1^n} \frac{\left(2(\mu_1-\eps-y) \right)^{n+k-2}}{(z-y)^k} \dx z \dx y. \numberthis{\label{eq: why_trun}} 
\end{align*}
Note that the domain of the integral respect to $z$ in (\ref{eq: why_trun}) is $[y + n^{-1}, 2(\mu_1-\eps -y)]$ differently from the integral introduced in TS, $[y, 2(\mu_1-\eps -y)]$ in (\ref{eq: why_trun1}).
The upper bounds on ($\star_{\rU}$) in (\ref{eq: uni_star_TS}) directly gives the upper bound of $ (\dagger_{\mathrm{U}})$ for $k< 1$.
By defining $w=2(\mu_1-\eps - y)$, we can derive the upper bound of $(\dagger_{\rU})$ for $k \geq 1$.

\paragraph{(1-i) For the reference prior ($k=1$):}
\begin{align*}
    (\dagger_{\mathrm{U}}) &\leq \int_{\mu_1-\frac{\sigma_1}{2}}^{\mu_1 - \eps} \int_{y+\frac{1}{n}}^{2(\mu_1-\eps -y)} \frac{n(n-1)}{\sigma_1^n} \frac{\left(2(\mu_1-\eps-y) \right)^{n-1}}{(z-y)} \dx z \dx y
    \\ 
    &=\frac{1}{2}\int_{0}^{\sigma_1 - 2\eps} \frac{n(n-1)}{\sigma_1^n} w^{n-1}\log(n w)) \dx w \\
    &= \frac{1}{2} \left(\frac{\sigma_1-2\eps}{\sigma_1}\right)^n n(n-1) \frac{n\log(n(\sigma_1-2\eps))-1}{n^2} \\
    &\leq \frac{n \log(n \sigma_1)}{2} e^{- \frac{2\eps}{\sigma_1}n}.
\end{align*}

\paragraph{(1-ii) For priors with $k>1$:}
One can see that the integral in (\ref{eq: why_trun}) is an increasing function with respect to $k$ since $2(\mu_1 - \eps -y) > (z-y)$ holds for all $z \in (y+1/n, 2(\mu_1-\eps-y)$.
For $k > 1$, it holds that
\begin{align*}
(\dagger_{\mathrm{U}}) &\leq  \int_{\mu_1-\frac{\sigma_1}{2}}^{\mu_1 - \eps} \int_{y+\frac{1}{n}}^{2(\mu_1-\eps -y)} \frac{n(n-1)}{\sigma_1^n} \frac{\left(2(\mu_1-\eps-y) \right)^{n+k-2}}{(z-y)^k} \dx z \dx y  \\
& \leq \frac{n(n-1)}{\sigma_1^n} \int_{\mu_1-\frac{\sigma_1}{2}}^{\mu_1 - \eps} \left(2(\mu_1-\eps-y) \right)^{n+k-2} \frac{n^{k-1}}{k-1} \dx y \\
& = \frac{n^k(n-1)}{2\sigma_1^n(k-1)} \int_0^{\sig_1 - 2\eps} w^{n+k-2} \dx w 
= \frac{n^k(n-1)}{2(n+k-2)} (\sig_1 - 2\eps)^{k-1} e^{-\frac{2\eps}{\sig_1}n}.
\end{align*}

\paragraph{(1-iii) Summary:}
Therefore, we have the following results.
\begin{align*}
 (\dagger_{\mathrm{U}}) \leq 
 \begin{cases}
     \frac{3(n-1)}{4}e^{-\frac{2\eps}{\sig_1}n} & k<1, \\
     \frac{n \log(n \sigma_1)}{2} e^{- \frac{2\eps}{\sigma_1}n} & k= 1 \\
    \frac{n^k(n-1)}{n+k-2} \frac{(\sig_1-2\eps)^{k-1}}{2} e^{-\frac{2\eps}{\sig_1}n} & k >1. 
 \end{cases}\numberthis{\label{eq: uni_diff_third_TST}}
\end{align*}

\subsubsection{(2) Upper bound of $(\diamond_{\rU})$}
From $\I[\bx{n}_1 = \x{1}_1 + 1/n] = \I[\x{n}_1 \leq \x{1}_1 + 1/n]$, it holds that
\begin{equation*}
    \I[\bx{n}_1 = \x{1}_1 + 1/n] \left( \frac{2(\mu_1 -\eps - \x{1}_1)}{\bx{n}_1 - \x{1}_1} \right) =  \I[\x{n}_1 \leq \x{1}_1 + 1/n] 2n(\mu_1 -\eps - \x{1}_1).
\end{equation*}
Therefore, applying Lemma~\ref{lem: uni_SD_U}, we obtain 
\begin{align*}
    (\diamond_{\mathrm{U}}) &= \int_{\mu_1-\frac{\sigma_1}{2}}^{\mu_1 - \eps} \int_{y}^{y+\frac{1}{n}} \frac{n(n-1)}{\sigma_1^n} (2n(\mu_1-\eps-y))^{n+k-2} (z-y)^{n-2} \dx z \dx y \\
    &= \int_{\mu_1-\frac{\sigma_1}{2}}^{\mu - \eps} \frac{n^k}{\sigma_1^n} (2(\mu_1-\eps-y))^{n+k-2}  \dx y \\
    &= \frac{1}{2}\int_{0}^{\sigma_1-2\eps} \frac{n^k}{\sigma_1^n} w^{n+k-2} \dx w \tag*{By a change of variables}\\
    &= \frac{1}{2} \frac{n^k}{n+k-1} (\sig_1-2\eps)^{k-1}\left( \frac{\sigma_1-2\eps}{\sigma_1} \right)^{n} \\
    &\leq \frac{1}{2} \frac{n^k}{n+k-1} (\sig_1-2\eps)^{k-1}  e^{-\frac{2\eps}{\sigma_1}n}. \numberthis{\label{eq: uni_diff_fourth_TST}}
\end{align*}

\subsubsection{(3) Conclusion}
Therefore, by combining (\ref{eq: uni_diff_first_TST}), (\ref{eq: uni_diff_second_TST}), (\ref{eq: uni_diff_third_TST}), and (\ref{eq: uni_diff_fourth_TST}) with (\ref{eq: uni_diff_decom_TST}) and (\ref{eq: uni_BO_U_TST_total_exp}), we have for $\eps >0$ that
\begin{align*}
    \sum_{n=\bn}^T \mathbb{E}\left[ \frac{1-p_n(\eps|{\theta}_{1,n})}{p_n(\eps|{\theta}_{1,n})} \right] &\leq \sum_{n=\bn}^T 4e^{-\frac{\eps}{2\sigma}n} + e^{-\frac{\eps}{\sigma+\eps}(n-1)} +  (\dagger_{\mathrm{U}}) +   (\diamond_{\mathrm{U}}) \\
    &\leq \frac{8\sig_1}{\eps} +  \frac{\sig_1 +\eps}{\eps} + \begin{cases}
        \frac{3}{16}\frac{\sig_1^2}{\eps^2}(2e^{\frac{2\eps}{\sig_1}-1}) &\text{if } k < 1 \\
        \frac{\sig_1^2\log(\sig_1)}{8\eps^2} + \frac{\sig_1}{4\eps} &\text{if } k = 1 \\
        \frac{(\sig_1-2\eps)^{k-1}}{2}\left( \mathrm{Li}_{1-k}(e^{-\frac{2\eps}{\sig_1}}) + \mathrm{Li}_{-k}(e^{-\frac{2\eps}{\sig_1}}) \right) &\text{if } k > 1 
    \end{cases},
\end{align*}
where $\mathrm{Li}_{s}(z)$ denotes the polylogarithm function of order $s$ at $z$.
This concludes the proof of Lemma~\ref{lem: uni_BO_unif} for the case of TS-T.
\end{proof}

\subsection{Proofs of technical lemmas for Lemma~\ref{lem: uni_GO_unif}}\label{lem: uni_lems_U}
In this section, we provide the detailed proofs of Lemmas~\ref{lem: uni_minor_U} and~\ref{lem: uni_main_U}.
Notice that Lemma~\ref{lem: uni_main_U} is related to the main regret term of the policy.

\begin{proof}[Proof of Lemma~\ref{lem: uni_minor_U}]
By the definition of $\x{1}_i$ and $\x{n}_i$, which is the first order statistic and the last order statistic of $X_{i,n}$, respectively, we have
\begin{align*}
    \mathbb{P}\left[\x{1}_i \geq \mu_i- \frac{\sigma_i}{2} + \eps \right] &= \mathbb{P}\left[\forall s \in [n]: x_{i,s} \geq \mu_i- \frac{\sigma_i}{2} + \eps \right] \\
    &= \left( 1 - \frac{\eps}{\sigma_i}\right)^n \leq \exp(-\frac{\eps}{\sigma_i}n).
\end{align*}
Similarly, we have
\begin{align*}
    \mathbb{P}\left[\bx{n}_i \leq \mu_i + \frac{\sigma_i}{2} - \eps \right] &= \mathbb{P}\left[\left\{\forall s \in [n]: x_{i,s} \leq \mu_i + \frac{\sigma_i}{2} - \eps\right\}, \x{1}_i \leq \mu_i + \frac{\sigma_i}{2} -\eps - \frac{1}{n}  \right]\\
     &\leq \mathbb{P}\left[\forall s \in [n]: x_{i,s} \leq \mu_i + \frac{\sigma_i}{2} - \eps \right] = \mathbb{P}\left[\x{n}_i \leq \mu_i + \frac{\sigma_i}{2} - \eps \right] \\
     &\leq \left( 1 - \frac{\eps}{\sigma_i}\right)^n \leq \exp(-\frac{\eps}{\sigma_i}),
\end{align*}
which concludes the proof.
\end{proof}

\begin{proof}[Proof of Lemma~\ref{lem: uni_main_U}]
The overall proofs for both TS and TS-T are almost the same.
For simplicity, we fix a time index $t$ and denote $\mathbb{P}_t[\cdot] = \mathbb{P}\left[\cdot \mid T(X_{i,N_i(t)})\right] = \mathbb{P}\left[\cdot \mid \theta_{i,n}\right] $ and $N_i(t) =n$ in this proof, where $\theta_{i,n} = \left(\x{1}_i, \x{n}_i \right)$
\subsubsection{Under TS}
Under the condition $\{ \eE_{i,N_i(t)}(\eps), N_i(t)\}$, by the law of total expectation, it holds that
\begin{align*}
\mathbb{E}[\I[i(t)=i,\,\tmu^*(t) \geq \mu_1 - \eps,\, &\eE_{i,N_i(t)}(\eps), \, N_i(t) =n]] \leq \mathbb{E}_{\theta_{i,n}}\left[\mathbb{P}_t[\tmu_i(t) \geq \mu_1 - \eps, \eE_{i,n}(\eps), N_i(t)=n]\right].
\end{align*}
Since $\tmu_i| \ts_i \sim \Uni_{ab}\left( \x{n}_i - \frac{\ts_i}{2}, \x{1}_i + \frac{\ts_i}{2} \right)$, if $\x{n}_i - \frac{\ts_i}{2} \geq \mu_1 - \eps$ holds, then $\tmu_i \geq \mu_1 - \eps$ holds with probability $1$.
Therefore, we have
\begin{align*}
    \mathbb{P}_t[\tmu_i(t) \geq \mu_1 - \eps, \eE_{i,n}(\eps)] &= \mathbb{P}_t\left[\x{n}_i - \frac{\ts_i}{2} \geq \mu_1 - \eps, \eE_{i,n}(\eps) \right] \\
    &\hspace{1em}+ \mathbb{P}_t \left[ \tmu_i(t) \geq \mu_1 -\eps,  \x{n}_i - \frac{\ts_i}{2} \leq \mu_1 - \eps \leq \x{1}_i + \frac{\ts_i}{2}, \eE_{i,n}(\eps) \right] \\
    &\leq \mathbb{P}_t\left[\frac{\ts_i}{2} \leq \mu_i + \frac{\sigma_i}{2}-(\mu_1 - \eps), \eE_{i,n}(\eps)\right] \\
    &\hspace{1em}+ \mathbb{P}_t \left[ \tmu_i(t) \geq \mu_1 -\eps,  \x{n}_i - \frac{\ts_i}{2} \leq \mu_1 - \eps \leq \x{1}_i + \frac{\ts_i}{2}, \eE_{i,n}(\eps) \right].
\end{align*}
Since $\ts_i \geq \x{n}_i-\x{1}_i = \hs_{i,n}$ always holds from the sampling procedure of TS, we have
\begin{equation*}
    \I[\eE_{i,n}(\eps)] \ts_i(t) \geq  \I[\eE_{i,n}(\eps)] \left(\x{n}_i-\x{1}_i\right) \geq \I[\eE_{i,n}(\eps)](\sigma_i - 2\eps).
\end{equation*}
By the choice of $\eps < \frac{\Delta_i}{2}$, it holds that
\begin{equation*}
    \sigma_i - 2 \eps \geq \sigma_i + 2\eps - 2 \Delta_i =  \sigma_i + 2\eps + 2 (\mu_i - \mu_1),
\end{equation*}
which implies
\begin{equation*}
    \mathbb{P}_t\left[\frac{\ts_i}{2} \leq \mu_i + \frac{\sigma_i}{2}-(\mu_1 - \eps), \eE_{i,n}(\eps)\right] = 0.
\end{equation*}
Then, it holds that
\begin{align*}
    \mathbb{P}_t \bigg[ \tmu_i(t) \geq \mu_1 -\eps,  \x{n}_i - \frac{\ts_i}{2} \leq \mu_1 - \eps \leq \x{1}_i + \frac{\ts_i}{2}, \eE_{i,n}(\eps) \bigg] & = \mathbb{P}_t \left[ \tmu_i(t) \geq \mu_1 -\eps, \mu_1 - \eps \leq \x{1}_i + \frac{\ts_i}{2}, \eE_{i,n}(\eps) \right] \\
    &\leq \mathbb{P}_t \left[ \tmu_i(t) \geq \mu_1 -\eps, \sigma_i + 2\Delta_i - 4\eps \leq \ts_i, \eE_{i,n}(\eps) \right],
\end{align*}
where the inequality holds from $\x{1}_i \leq \mu_i - \frac{\sigma_i}{2} +\eps$ on $\eE_{i,n}(\eps)$.
Therefore, by taking expectation, we have for a constant $B_i :=\sigma_i + 2\Delta_i -4\eps$ that
\begin{align*}
    \mathbb{E}\bigg[ \mathbb{P}_t[\tmu_i(t) \geq \mu_1- \eps, \eE_{i,n}(\eps)]\bigg] &\leq \int_{\sigma_i + 2\Delta_i - 4\eps}^\infty   \pi^{k}(s|\theta_{i,n}) \int_{\mu_1-\eps}^{\mu_i+\frac{s-\sigma_i}{2}+\eps} \pi^{k}(m|\theta_{i,n}, \ts_i=s) \dx m\dx s
    \\
    &= \int_{\sigma_i + 2\Delta_i - 4\eps}^\infty \frac{1}{s-\left(\hbbx[i]\right)}\left( \frac{s-\sigma_i}{2} - \Delta_i + 2\eps \right) \pi^{k}(s|\theta_{i,n}) \dx s \\
    &= \left( \hbbx[i]\right)^{n+k-2} \int_{\sigma_i + 2\Delta_i - 4\eps}^\infty \frac{(n+k-1)(n+k-2)}{s^{n+k}}\\
    & \hspace{17em} \cdot \left( \frac{s-\sigma_i}{2} - \Delta_i + 2\eps \right)\dx s \\
    &= \frac{\left( \hbbx[i]\right)^{n+k-2} }{2} \int_{B_i}^\infty \frac{(n+k-1)(n+k-2)}{s^{n+k}}\left( s - B_i\right)\dx s \\
    &= \frac{\left( \hbbx[i]\right)^{n+k-2} }{2} \left( \frac{n+k-1}{B_i^{n+k-2}} - \frac{n+k-2}{B_i^{n+k-2}} \right) \\
    &\leq \frac{1}{2} \left( \frac{\sigma_i}{\sigma_i + 2\Delta_i -4\eps} \right)^{n+k-2} = \frac{1}{2} \left( \frac{1}{1 + \frac{2\Delta_i -4\eps}{\sigma_i}} \right)^{n+k-2}, \numberthis{\label{eq: uni_TS_mainregret_U}}
\end{align*}
where the last inequality holds from $\x{1}_i \geq \mu_i - \frac{\sigma_i}{2}$ and $\x{n}_i \leq \mu_i+ \frac{\sigma_i}{2}$.
For the arm $i\ne 1$ and arbitrary $n_i >\bn $, we have
\begin{align*}
   \sum_{t=\bn K+1}^T\mathbb{E}[\I[i(t)=i, \tmu_i(t) \geq \mu_1 - \eps, \eE_{i,n}(\eps)]] &\leq n_i +  \sum_{t=\bn K+1}^{T}\mathbb{P}[\tmu_i(t) \geq \mu_1 - \eps, \eE_{i,N_i(t)}(\eps), N_i(t) \geq n_i ] \\
    &\leq n_i + \sum_{t=\bn K+1}^T \frac{1}{2} \left( \frac{1}{1 + \frac{2\Delta_i -4\eps}{\sigma_a}} \right)^{n_i+k-2}  \\
    &= n_i + \frac{T}{2} \left( \frac{1}{1 + \frac{2\Delta_i -4\eps}{\sigma_i}} \right)^{n_i+k-2}.
\end{align*}
Letting $n_i = \max(2-k,0)+ \frac{\log T}{\log \left( 1 + \frac{2\Delta_i -4\eps}{\sigma_i} \right)}$ concludes the proof of Lemma~\ref{lem: uni_main_U} for the case of TS.

\subsubsection{Under TS-T}
From the sampling rule of TS-T, it holds that
\begin{align*}
\mathbb{E}[\I[i(t)=i,\, \tmu^*(t) \geq \mu_1 - \eps,\, \ebE_{i,N_i(t)}(\eps),\, N_i(t) =n]] \leq \mathbb{E}_{\theta_{i,n}}\left[\mathbb{P}_t[ \tmu_i(t) \geq \mu_1 - \eps, \ebE_{i,n}(\eps), N_i(t)=n]\right].
\end{align*}
Therefore, the only differences from the proof of the case of TS are $\bx{n}_i$ and $\ebE$ instead of $\x{n}_i$ and $\eE$, respectively.
By following the same steps as under TS, we have an additional restriction in (\ref{eq: uni_TS_mainregret_U}), where the last inequality holds for TS-T when $\frac{1}{n} \leq \sig_i$ to satisfy $\bx{n}_i \leq \mu_i+ \frac{\sigma_i}{2}$.
Therefore, for arm $i\ne 1$ and arbitrary $n_i > \max\left(\bn, \frac{1}{\sig_i}\right)$, we have
\begin{align*}
   \sum_{t=\bn K+1}^T\mathbb{E}[\I[i(t)=i, \tmu_i(t) \geq \mu_1 - \eps, \ebE_{i,n}(\eps)]] &\leq n_i +  \sum_{t=\bn K+1}^{T}\mathbb{P}[\tmu_i(t) \geq \mu_1 - \eps, \ebE_{i,N_i(t)}(\eps), N_i(t) \geq n_i ] \\
    &\leq n_i +  \sum_{t=\bn K+1}^{T}\mathbb{P}[\tmu_i(t) \geq \mu_1 - \eps, \eE_{i,N_i(t)}(\eps), N_i(t) \geq n_i ] \\
    &\leq n_i + \sum_{t=\bn K+1}^T \frac{1}{2} \left( \frac{1}{1 + \frac{2\Delta_i -4\eps}{\sigma_a}} \right)^{n_i+k-2}  \\
    &= n_i + \frac{T}{2} \left( \frac{1}{1 + \frac{2\Delta_i -4\eps}{\sigma_i}} \right)^{n_i+k-2}.
\end{align*}
Letting $n_i = \max(\frac{1}{\sig_i}, 2-k)+ \frac{\log T}{\log \left( 1 + \frac{2\Delta_i -4\eps}{\sigma_i} \right)}$ concludes the proof.
\end{proof}

\subsection{Proof of Lemma~\ref{lem: uni_GO_gauss}}
The proof of Lemma~\ref{lem: uni_GO_gauss} can be easily derived from the lemmas below, which are the counterparts of Lemmas~\ref{lem: uni_minor_U} and~\ref{lem: uni_main_U} in the Gaussian bandits.

Note that the regret lower bound with (\ref{eq: LB_g}) is invariant under the location and scale transformation, which implies that
\begin{multline*}
    \inf_{(\ms): \mu> \mu_1}\KL(\normal(\mu_i, \sig_i); \normal(\mu, \sigma)) 
    \\ = \inf_{(\ms): \mu> \mu_1}\KL\left(\normal\left(\frac{\mu_i-a}{b}, \frac{\sig_i}{b}\right); \normal\left(\frac{\mu-a}{b}, \frac{\sig}{b}\right)\right).
\end{multline*}
In the remaining of this proof, we consider the Gaussian bandit instance where $(\mu_1, \sig_1) = (0,1)$ for simplicity since one can recover the original instance by the location and scale transformation, following the previous analysis~\citep{honda2014optimality}.
Similarly to the uniform bandits, let us define two events for $n \in \mathbb{N}$ and $i\in[K]$ that
\begin{align*}
    \eM_\eps(t) &= \left\{ \tmu^*(t) \geq -\eps \right\}, \\
    \eE_{i,n}(\eps) &= \{ \hat{x}_{i,n} \leq \mu_i +\eps, S_{i,n} \leq n(\sig_i^2 + \eps)\}.
\end{align*}
In this section, $\theta_{i,n}$ denotes $(\hat{x}_{i,n}, S_{i,n})$, which are the sufficient statistic in the Gaussian models.
To begin the proof, we first provide some known results in the Gaussian bandits.

\begin{lemma}[Lemma 9 in \citet{honda2014optimality}]\label{lem: uni_hnd_minor}
For any $i\ne 1$,
\begin{equation*}
    \mathbb{E}\left[ \sum_{t=K\bn + 1}^T \I[i(t)=i, \eE_{i,n}^c(\eps)] \right] \leq \mathcal{O}(\sig_i^2 \eps^{-2}).
\end{equation*}
\end{lemma}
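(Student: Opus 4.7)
}

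The plan is to reduce the expected count to a sum over sample sizes $n$ of the failure probabilities $\mathbb{P}[\eE_{i,n}^c(\eps)]$, and then to bound each term by standard Gaussian and chi-squared tail inequalities. The first step uses the familiar observation that, for any fixed $n$, the event $\{i(t)=i, N_i(t)=n\}$ can occur for at most one $t$; writing $t$ in terms of $n=N_i(t)$ therefore gives
\begin{equation*}
\mathbb{E}\!\left[ \sum_{t=K\bn + 1}^T \I[i(t)=i, \eE_{i,N_i(t)}^c(\eps)] \right] \leq \sum_{n=\bn}^{T} \mathbb{P}[\eE_{i,n}^c(\eps)].
\end{equation*}
At this point $\hat{x}_{i,n}$ and $S_{i,n}$ are computed from $n$ i.i.d.\ draws from $\normal(\mu_i,\sig_i^2)$, independently of the algorithm, so one may analyze $\mathbb{P}[\eE_{i,n}^c(\eps)]$ as a pure concentration statement on this fixed sample.

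Next, I would split $\eE_{i,n}^c(\eps)$ into the two natural deviations,
\begin{equation*}
\mathbb{P}[\eE_{i,n}^c(\eps)] \leq \mathbb{P}\!\left[ \hat{x}_{i,n} > \mu_i + \eps \right] + \mathbb{P}\!\left[ S_{i,n} > n(\sig_i^2 + \eps) \right].
\end{equation*}
For the mean term, $\hat{x}_{i,n} - \mu_i \sim \normal(0, \sig_i^2/n)$ and the standard Gaussian tail gives $\mathbb{P}[\hat{x}_{i,n} > \mu_i + \eps ] \leq \exp(-n\eps^2/(2\sig_i^2))$, whose sum over $n$ is $\mathcal{O}(\sig_i^2/\eps^2)$ via the elementary bound $\sum_{n\geq 1} e^{-cn} \leq 1/c$ applied to $c = \eps^2/(2\sig_i^2)$. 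For the variance term, $S_{i,n}/\sig_i^2 \sim \chi^2_{n-1}$, and a Laurent–Massart style tail bound (or the standard Chernoff computation for chi-squared variables) yields, for $\eps/\sig_i^2$ bounded,
\begin{equation*}
\mathbb{P}\!\left[ \chi^2_{n-1} > (n-1) + n\eps/\sig_i^2 \right] \leq \exp\!\left( -c\, n\, \eps^2 / \sig_i^4 \right)
\end{equation*}
for a universal constant $c>0$; summing over $n$ gives a contribution of order $\sig_i^4/\eps^2$, which is absorbed into the $\mathcal{O}(\sig_i^2\eps^{-2})$ bound (with the understanding that $\sig_i$-powers enter the hidden constant, as in the original reference).

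The main obstacle is getting the chi-squared tail in a form that is simultaneously sharp in $n$ and in $\eps/\sig_i^2$, since a naive Chebyshev bound would only give a $1/n$ decay, which is not summable over $n\leq T$ with a $T$-independent constant. I would therefore invoke a sub-exponential concentration inequality for $\chi^2$, which is the only technical ingredient not already used elsewhere in the paper. Once both tails have exponential decay in $n$ with rates proportional to $\eps^2/\sig_i^2$ and $\eps^2/\sig_i^4$ respectively, the summation $\sum_{n\geq \bn}$ telescopes into the desired $\mathcal{O}(\sig_i^2 \eps^{-2})$ bound, completing the proof.
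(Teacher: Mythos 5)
Your proposal is correct, but note that the paper itself gives no proof of this statement: it is imported verbatim as Lemma~9 of \citet{honda2014optimality}, so there is nothing internal to compare against. Your reconstruction --- peeling the sum over $t$ into a sum over $n$ using the fact that $\{i(t)=i,\,N_i(t)=n\}$ occurs at most once, a union bound over the two deviations defining $\eE_{i,n}^c(\eps)$, a Gaussian tail for $\hat{x}_{i,n}$ and a sub-exponential (Chernoff/Laurent--Massart) tail for $S_{i,n}/\sig_i^2\sim\chi^2_{n-1}$, and geometric summation via $\sum_n e^{-cn}\le 1/c$ --- is exactly the standard argument used in that reference, and each step is sound. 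The one point you flag honestly, that the variance tail contributes $\mathcal{O}(\sig_i^4\eps^{-2})$ rather than $\mathcal{O}(\sig_i^2\eps^{-2})$, is a real imprecision in the lemma as restated here (the big-$\mathcal{O}$ should be read with $\sig_i$ treated as a fixed problem-dependent constant); it is consistent with the explicit $2\sig_i^4 e^{\eps/\sig_i^2}/\eps^2$ term appearing in the bound of Theorem~\ref{thm: TST_gauss}, so your accounting is, if anything, more faithful than the lemma's own statement.
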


\begin{lemma}[Lemma 4 in \citet{honda2014optimality}]\label{lem: uni_hnd_main}
If $\mu > \hat{x}_{i,n}$ and $n \geq \bn$, then
\begin{equation}\label{eq: uni_hnd_main_lower}
    \mathbb{P}[\tmu_i \geq \mu| \hat{x}_{i,n}, S_{i,n}] \geq A_{n,k} \left( 1 + \frac{n(\mu - \hat{x}_{i,n})^2}{S_{i,n}} \right)^{-\frac{n+k-2}{2}}
\end{equation}
and
\begin{equation}\label{eq: uni_hnd_main_upper}
    \mathbb{P}[\tmu_i \geq \mu|\hat{x}_{i,n}, S_{i,n}] \leq \frac{\sqrt{S_{i,n}}}{\mu - \hat{x}_{i,n}} \left(1 + \frac{n(\mu - \hat{x}_{i,n})^2}{S_{i,n}} \right)^{-\frac{n+k-3}{2}},
\end{equation}
where
\begin{equation*}
    A_{n,k} = \frac{1}{2e^{1/6}\sqrt{\frac{n+k-1}{2}\pi}}.
\end{equation*}
\end{lemma}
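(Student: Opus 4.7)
The plan is to derive both bounds from the explicit form of the marginal posterior density. Under the prior $\sig^{-k}$, integrating out $\sig$ against the Gaussian likelihood using the gamma integral yields
\begin{equation*}
\pi^{\rG,k}(\mu'|\hat{x}_{i,n}, S_{i,n}) = C_{n,k}\left(1 + \frac{n(\mu' - \hat{x}_{i,n})^2}{S_{i,n}}\right)^{-(n+k-1)/2},
\end{equation*}
where $C_{n,k} = \Gamma((n+k-1)/2)/\bigl(\sqrt{\pi S_{i,n}/n}\,\Gamma((n+k-2)/2)\bigr)$. Both bounds will come from analyzing the tail integral of this density, and the ratio of Gamma functions will be controlled by Stirling-type estimates.

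For the upper bound I would use the standard Markov-style trick: since $\mu' - \hat{x}_{i,n} \ge \mu - \hat{x}_{i,n} > 0$ on the domain of integration, multiplying the integrand by the factor $(\mu' - \hat{x}_{i,n})/(\mu - \hat{x}_{i,n}) \ge 1$ preserves the inequality, after which the substitution $u = n(\mu' - \hat{x}_{i,n})^2/S_{i,n}$ makes the integral elementary. The exponent drops by one to give $-(n+k-3)/2$, and the prefactor reduces to $C_{n,k} S_{i,n}/\bigl(n(n+k-3)(\mu-\hat{x}_{i,n})\bigr)$. Applying Wendel's inequality $\Gamma((n+k-1)/2)/\Gamma((n+k-2)/2) \le \sqrt{(n+k-1)/2}$ and simplifying shows this prefactor is bounded by $\sqrt{S_{i,n}}/(\mu-\hat{x}_{i,n})$ for $n \ge \bn$, which is the stated form.

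For the lower bound I would restrict the tail integral to a sub-interval $[\mu, \mu + L]$ and use monotonicity of $\pi^{\rG,k}(\cdot|\hat{x}_{i,n}, S_{i,n})$ on $[\hat{x}_{i,n}, \infty)$ (which is where the hypothesis $\mu > \hat{x}_{i,n}$ enters). The key design choice is that to convert the density's exponent $-(n+k-1)/2$ into the target $-(n+k-2)/2$ one must absorb an extra factor of $\sqrt{1 + n(\mu-\hat{x})^2/S_{i,n}}$, which forces $L$ to be proportional to $\sqrt{S_{i,n}/n + (\mu-\hat{x}_{i,n})^2}\,/\sqrt{n+k-1}$. With that scaling, the ratio of the density at $\mu+L$ to the density at $\mu$ is bounded using $\log(1+x) \le x$, which yields a universal multiplicative constant. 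Combining with the companion lower Stirling estimate on $\Gamma((n+k-1)/2)/\Gamma((n+k-2)/2)$ produces the explicit $A_{n,k} = 1/\bigl(2e^{1/6}\sqrt{\pi(n+k-1)/2}\bigr)$.

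The main obstacle is pinning down the precise constant $A_{n,k}$: the $e^{1/6}$ factor strongly suggests using a two-sided Stirling formula $\Gamma(x+1) = \sqrt{2\pi x}\,(x/e)^x e^{\theta/(12x)}$ with $\theta \in (0,1)$, and every prefactor in the lower-bound derivation must be tracked to this accuracy. Tuning the sub-interval length $L$ is also delicate, because it must simultaneously be small enough that the density at $\mu + L$ differs from the density at $\mu$ only by a universal constant, and large enough that $L$ itself contributes the missing factor $\sqrt{1 + n(\mu-\hat{x})^2/S_{i,n}}$; any mismatch blows up the constant with $n$ or with $\mu-\hat{x}_{i,n}$. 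By contrast, the upper bound is essentially a one-step calculation and I expect no technical difficulty beyond bookkeeping.
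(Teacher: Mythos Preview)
The paper does not prove this lemma; it is quoted verbatim from \citet{honda2014optimality} (their Lemma~4) and used as a black box in the proofs of Lemmas~\ref{lem: uni_GO_gauss} and~\ref{lem: uni_BO_gauss}. There is therefore no proof in the present paper to compare your proposal against.

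That said, your sketch is essentially the argument in the original reference. The posterior density has exactly the form you write (this is the marginal of $\mu$ after integrating $\sig$ against $\sig^{-n-k}\exp\bigl(-(S_{i,n}+n(\mu-\hat{x}_{i,n})^2)/(2\sig^2)\bigr)$), and both tail bounds are obtained precisely as you outline: the upper bound via the insertion of $(\mu'-\hat{x}_{i,n})/(\mu-\hat{x}_{i,n})\ge 1$ followed by the substitution $u=n(\mu'-\hat{x}_{i,n})^2/S_{i,n}$, and the lower bound via the sub-interval trick with length scaled to $\sqrt{S_{i,n}/n+(\mu-\hat{x}_{i,n})^2}/\sqrt{n+k-1}$. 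The Gamma-ratio control you mention is exactly what appears in this paper as Lemma~\ref{lem: uni_hnd_gamma_bnd} (also cited from the same source), and the $e^{1/6}$ factor indeed comes from the Stirling-type upper bound there rather than from a bare Wendel inequality. Your identification of the constant-tracking as the only delicate point is accurate; there is no missing idea.
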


\begin{proof}[Proof of Lemma~\ref{lem: uni_GO_gauss}]
Let us first define an event on the truncated statistic
\begin{equation*}
    \ebE_{i,n}(\eps) := \left\{ \hat{x}_{i,n} \leq \mu_i +\eps, \bar{S}_{i,n} \leq n(\sig_i^2 + \eps) \right\}.
\end{equation*}
Similarly to the analysis of TS-T in the uniform bandits, we can decompose $(\mathrm{GO})$ as
\begin{equation*}
    (\mathrm{GO}) \leq \sum_{i=2}^K \sum_{t=K\bn+1}^T \Delta_i  \I\left[i(t)=i, \ebE_{i,N_i(t)}(\eps),  \eM_\eps(t)\right] + \Delta_i \I\left[i(t)=i, \ebE_{i,N_i(t)}^c(\eps)\right].
\end{equation*}
From the definition of $\bar{S}_{i,n} = \max(1, S_{i,n})$, it holds that
\begin{equation*}
    \eE_{i,n}(\eps) \subset \ebE_{i,n}(\eps).
\end{equation*}
Therefore, from Lemma~\ref{lem: uni_hnd_minor}, we have
\begin{align*}
     \mathbb{E}[(\mathrm{GO})] &\leq \sum_{i=2}^K \sum_{t=K\bn+1}^T \Delta_i  \mathbb{E}\left[i(t)=i, \ebE_{i,N_i(t)}(\eps),  \eM_\eps(t)\right] +  \sum_{i=2}^K \sum_{t=K\bn+1}^T \Delta_i \mathbb{E}\left[i(t)=i, \eE_{i,N_i(t)}^c(\eps)\right] \\
     & \leq \sum_{i=2}^K \sum_{t=K\bn+1}^T \Delta_i  \mathbb{E}\left[i(t)=i, \ebE_{i,N_i(t)}(\eps),  \eM_\eps(t)\right] + \mathcal{O}(\sig_1^2 \eps^{-2}). \numberthis{\label{eq: uni_GO_decom_G}}
\end{align*}
It remains to show the upper bound of the first term of (\ref{eq: uni_GO_decom_G}).
Let $n_i> \frac{1}{\sig_i^2}$ be arbitrary, where $\eE_{i,n_i}(\eps) = \ebE_{i,n_i}(\eps)$ holds for any $\eps >0$.
Then, by injecting (\ref{eq: uni_hnd_main_upper}) in Lemma~\ref{lem: uni_hnd_main}
\begin{align*}
    \mathbb{E}\Bigg[\sum_{t=K\bn + 1}^T \I\bigg[i(t)=i, \ebE_{i,N_i(t)}(\eps),  \eM_\eps(t)\bigg] \Bigg]
    &\leq n_i + \sum_{t=K\bn+1}^T \mathbb{P}\left[\tmu_i(t) \geq -\eps, \ebE_{i,N_i(t)}(\eps), N_i(t)\geq n_i\right] \\
     &=  n_i + \sum_{t=K\bn+1}^T \mathbb{P}\left[\tmu_i(t) \geq -\eps, \eE_{i,N_i(t)}(\eps), N_i(t)\geq n_i\right] \\ 
     &\leq n_i + \sum_{t=K\bn + 1}^T \frac{\sqrt{\sig_i^2+\eps}}{\Delta_i - 2\eps} \left( 1+ \frac{(\Delta_i - 2\eps)^{2}}{\sig_i^2 + \eps}\right)^{-\frac{n+k-3}{2}} \\
    &= n_i + T \frac{\sqrt{\sig_i^2+\eps}}{\Delta_i - 2\eps} \exp(-(n+k-3) \frac{1}{2}\log \left( 1+ \frac{(\Delta_i-2\eps)^2}{\sig_i^2+\eps} \right)).
\end{align*}
Letting $n_i = \max\left( \sig_i^{-2}, \frac{\log T}{\frac{1}{2}\log \left( 1+ \frac{(\Delta_i-2\eps)^2}{\sig_i^2+\eps}\right)} + 3-k \right)$ completes the proof.
\end{proof}

\subsection{Proof of Lemma~\ref{lem: uni_BO_gauss}}\label{sec: uni_BO_gauss}
Firstly, we introduce some technical results from \citet{honda2014optimality} before beginning the proof.
\begin{lemma}[Some results in \citet{honda2014optimality}]\label{lem: uni_hnd_other}
For $n \geq \bn$ and $\eps >0$, it holds that
\begin{align*}
  \mathbb{P}[-\eps \leq \hmu_{1,n} \leq -\eps/2] &\leq e^{-\frac{\eps^2}{8}n}, \\
  \mathbb{P}[-\eps/2 \leq \hmu_{1,n}, S_{1,n} \geq 2n] &\leq e^{-\frac{1-\log 2}{2}n}.
\end{align*}
\end{lemma}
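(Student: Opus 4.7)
The plan is to use the standard fact, which holds throughout Section~\ref{sec: uni_BO_gauss}, that after the location/scale reduction we have $(\mu_1,\sig_1)=(0,1)$, so that $x_{1,1},\ldots,x_{1,n}$ are i.i.d.\ $\normal(0,1)$. Under this reduction, Cochran's theorem gives the independent decomposition $\hmu_{1,n}=\frac{1}{n}\sum_{s=1}^{n}x_{1,s}\sim \normal(0,1/n)$ and $S_{1,n}=\sum_{s=1}^{n}(x_{1,s}-\hmu_{1,n})^{2}\sim \chi^{2}_{n-1}$, and these two statistics are independent. Both displayed inequalities then reduce to standard concentration estimates for a Gaussian and a chi-squared random variable, respectively.

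For the first inequality, I would simply enlarge the event:
\begin{equation*}
\mathbb{P}\!\left[-\eps\le \hmu_{1,n}\le -\eps/2\right]\le \mathbb{P}\!\left[\hmu_{1,n}\le -\eps/2\right]=\mathbb{P}\!\left[\sqrt{n}(-\hmu_{1,n})\ge \sqrt{n}\eps/2\right].
\end{equation*}
Since $-\sqrt{n}\hmu_{1,n}\sim \normal(0,1)$, the textbook one-sided Gaussian tail $\mathbb{P}[Z\ge t]\le e^{-t^{2}/2}$ applied with $t=\sqrt{n}\eps/2$ gives exactly the claimed bound $e^{-n\eps^{2}/8}$.

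For the second inequality, I would first drop the harmless location factor by the independence of $\hmu_{1,n}$ and $S_{1,n}$:
\begin{equation*}
\mathbb{P}\!\left[-\eps/2\le \hmu_{1,n},\, S_{1,n}\ge 2n\right]=\mathbb{P}\!\left[\hmu_{1,n}\ge -\eps/2\right]\cdot\mathbb{P}[S_{1,n}\ge 2n]\le \mathbb{P}[S_{1,n}\ge 2n].
\end{equation*}
Then I would apply the Chernoff bound to the chi-squared moment generating function, using $\mathbb{E}[e^{\lambda S_{1,n}}]=(1-2\lambda)^{-(n-1)/2}$ for $\lambda<1/2$, which yields $\mathbb{P}[S_{1,n}\ge 2n]\le e^{-2n\lambda}(1-2\lambda)^{-(n-1)/2}$. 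The exponent is minimized by the choice $\lambda=1/4$, giving $2^{(n-1)/2}e^{-n/2}\le e^{-\frac{1-\log 2}{2}n}$ (the slack coming from the harmless prefactor $e^{-\log 2/2}\le 1$), which is the stated bound.

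There is no real obstacle here: both estimates are routine once one invokes the normalization $(\mu_1,\sig_1)=(0,1)$ and the Gaussian independence structure. The only point that requires (a bit of) care is selecting the optimal $\lambda=1/4$ in the chi-squared Chernoff bound so that the exponent matches the constant $(1-\log 2)/2$ stated in the lemma; any other $\lambda\in(0,1/2)$ would give a strictly weaker but still exponentially small rate.
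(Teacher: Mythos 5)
Your proof is correct. Note that the paper itself contains no proof of this lemma: it is imported verbatim from \citet{honda2014optimality} (their Lemmas cited in the appendix), so there is no "paper's own proof" to compare against; your derivation is the standard one that the cited work uses. The reduction to $(\mu_1,\sig_1)=(0,1)$, the independence of $\hmu_{1,n}\sim\normal(0,1/n)$ and $S_{1,n}\sim\chi^2_{n-1}$ via Cochran's theorem, the one-sided Gaussian Chernoff tail for the first display, and the chi-squared Chernoff bound for the second all go through exactly as you write them, and the arithmetic $e^{-n/2}2^{(n-1)/2}\le e^{-\frac{1-\log 2}{2}n}$ checks out.

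One cosmetic remark: $\lambda=1/4$ is not the exact minimizer of the exponent $-2n\lambda-\frac{n-1}{2}\log(1-2\lambda)$ (that would be $\lambda=\frac{n+1}{4n}$); but since the Chernoff bound is valid for every $\lambda\in(0,1/2)$, the choice $\lambda=1/4$ still yields the stated constant $(1-\log 2)/2$, so this does not affect correctness.
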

\begin{lemma}[Lemma 10 of \citet{honda2014optimality}]\label{lem: uni_hnd_gamma_bnd}
For $z \geq 1/2$
\begin{equation*}
    e^{-2/3} \leq \frac{\Gamma\left(z + \frac{1}{2}\right)}{\Gamma(z)} \leq e^{1/6}\sqrt{z}.
\end{equation*}
\end{lemma}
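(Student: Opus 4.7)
The plan is to handle the two sides separately, since they yield to quite different standard tools: Cauchy--Schwarz for the upper bound, and monotonicity together with one numerical check for the lower bound. No Stirling-type expansion is needed.

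For the upper bound, I would apply Cauchy--Schwarz directly to the integral representation of the Gamma function:
\[
\Gamma(z+1/2) \;=\; \int_0^\infty t^{z-\tfrac{1}{2}} e^{-t}\,dt \;=\; \int_0^\infty \bigl(t^{z-1} e^{-t}\bigr)^{1/2} \bigl(t^{z} e^{-t}\bigr)^{1/2}\,dt.
\]
This yields $\Gamma(z+1/2)^2 \leq \Gamma(z)\Gamma(z+1) = z\,\Gamma(z)^2$, hence $\Gamma(z+1/2)/\Gamma(z) \leq \sqrt{z}$ for every $z>0$. This is already strictly stronger than the claimed bound $e^{1/6}\sqrt{z}$, so the upper half of the inequality requires no further work.

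For the lower bound, the key observation is that $g(z) := \Gamma(z+1/2)/\Gamma(z)$ is strictly increasing on $(0,\infty)$. Logarithmic differentiation gives
\[
\frac{d}{dz}\log g(z) \;=\; \psi(z+1/2)-\psi(z),
\]
where $\psi$ is the digamma function. This difference is positive because $\psi$ is strictly increasing on $(0,\infty)$: its derivative (the trigamma) admits the everywhere-positive series $\sum_{n\geq 0}(n+z)^{-2}$. Monotonicity reduces the claim on $\{z \geq 1/2\}$ to the single endpoint value $g(1/2) = \Gamma(1)/\Gamma(1/2) = 1/\sqrt{\pi}$, so it suffices to verify $1/\sqrt{\pi} \geq e^{-2/3}$, equivalently $\pi \leq e^{4/3}$.

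This final inequality is the only arithmetic obstacle and is easy to dispatch with crude estimates: since $1.39^3 = 2.685 < 2.7 < e$, we get $e^{1/3} > 1.39$, and therefore $e^{4/3} = e \cdot e^{1/3} > 2.7 \cdot 1.39 > 3.75 > \pi$. Apart from this elementary numerical check, every step is an immediate consequence of standard positivity/monotonicity properties of $\Gamma$, $\psi$, and $\psi'$, and no remainder analysis of Stirling's formula is required.
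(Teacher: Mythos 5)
Your proof is correct, and it takes a genuinely different route from the source of this lemma. The paper itself does not prove the statement; it imports it verbatim as Lemma 10 of \citet{honda2014optimality}, where the constants $e^{1/6}$ and $e^{-2/3}$ arise from a Stirling-type expansion of $\log\Gamma$ with explicit remainder control (the factor $e^{1/6}$ is the remainder bound $e^{1/(12z)}$ evaluated at $z=1/2$). Your argument avoids Stirling entirely: the Cauchy--Schwarz step $\Gamma(z+\tfrac{1}{2})^2 \leq \Gamma(z)\Gamma(z+1) = z\,\Gamma(z)^2$ is valid for all $z>0$ and yields the strictly sharper upper bound $\Gamma(z+\tfrac{1}{2})/\Gamma(z) \leq \sqrt{z}$, which trivially implies the stated $e^{1/6}\sqrt{z}$; and the lower bound follows from monotonicity of $z \mapsto \Gamma(z+\tfrac{1}{2})/\Gamma(z)$ (via $\psi(z+\tfrac{1}{2})-\psi(z)>0$) plus the single endpoint check $1/\sqrt{\pi} \geq e^{-2/3}$, i.e.\ $\pi \leq e^{4/3}$, which your crude estimate $e^{4/3} > 2.7 \cdot 1.39 > 3.75$ settles. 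What the Stirling route buys is a two-sided asymptotically tight comparison $\Gamma(z+\tfrac{1}{2})/\Gamma(z) \sim \sqrt{z}$ with explicit constants on both sides by a single mechanism; what your route buys is elementarity, a stronger upper bound, and no remainder bookkeeping. One cosmetic remark: your lower bound is the constant $1/\sqrt{\pi}$ rather than anything growing in $z$, but that is exactly what the lemma asserts, so nothing is missing.
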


Next, we introduce two functions and their corresponding integral representations to analyze the term induced by TS-T.
\begin{definition}
The confluent hypergeometric function of the second kind $U(a,b,z)$, a.k.a. Tricomi's function~\citep{tricomi1947sulle}, is a solution of Kummer's equation
\begin{equation*}
    z \frac{\dx^2 w}{\dx z^2} + (b-z) \frac{\dx w}{\dx z} - aw = 0,
\end{equation*}
which can be uniquely determined by satisfying for arbitrary small constant $\eps>0$
\begin{equation*}
    U(a,b,z) \sim z^{-a}, ~~~~ z \to \infty, \,|\mathrm{ph} z | \leq \frac{3}{2}\pi - \eps.
\end{equation*}
Here, $\mathrm{ph} z$ denotes the phase of $z\in \mathbb{C}$.
It has its integral representation for $a,b \in \mathbb{R}_{+}$ such that $b>a$ and $z \in \mathbb{R}_+$ as follows~\citep[13.4.4]{olver2010nist}:
\begin{equation}\label{eq: uni_hyperg_def}
    U(a,b,z) = \frac{1}{\Gamma(a)} \int_0^{\infty} e^{-zt}t^{a-1}(1+t)^{b-a-1} \dx t.
\end{equation}
\end{definition}

\begin{definition}\label{def: Bessel}
The modified Bessel function of the second kind is a standard solution of the modified Bessel's equation
\begin{equation*}
    z^2 \frac{\dx^2 w}{\dx z^2} + z \frac{\dx w}{\dx z} - (z^2+ v^2) w = 0,
\end{equation*}
which can be uniquely determined by satisfying that
\begin{equation*}
    K_v(z) \sim \sqrt{\frac{\pi}{2z}}e^{-z}, ~~~~ z \to \infty, \,|\mathrm{ph} z | < \frac{3}{2}\pi.
\end{equation*}
It has the integral representation as follows~\citep[10.32.9]{olver2010nist}:
\begin{equation}\label{eq: modifiedBessel}
    K_v(z) = \int_0^\infty e^{-z \cosh t} \cosh (vt) \dx t,
\end{equation}
where $K_v(z) = K_{-v}(z)$ holds.
\end{definition}

Then, we provide two technical lemmas, whose proofs are given in Section~\ref{sec: uni_TST_gauss_tech}.
\begin{lemma}\label{lem: gauss_MI}
Let $\Gamma(s, x) = \int_{x}^\infty t^{s-1}e^{-t} \dx t$ denote the upper incomplete gamma function.
Then, 
\begin{align*}
     \int_{0}^{1}  w^{-\frac{1}{2}} (1-w)^{-\frac{k+1}{2}} \Gamma\left( \frac{n}{2}, \frac{1}{2(1-w)} \right)\dx w &\leq \Gamma\left( \frac{n}{2} \right) \int_{0}^{1}  w^{-\frac{1}{2}} (1-w)^{-1+\frac{2}{n^{\frac{k+1}{2}}}}\dx w \\
     &= \Gamma\left( \frac{n}{2} \right) B\left( \frac{1}{2}, \frac{2}{n^{\frac{k+1}{2}}} \right)
\end{align*}
is valid for $k \in \{ 1, 2\}$ and $n \geq \bn=\max(2, 4-k)$, where $B(z_1, z_2)$ denotes the Beta function.
\end{lemma}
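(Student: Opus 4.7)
The plan is to apply Fubini's theorem: writing the incomplete gamma function as $\Gamma(n/2, 1/(2(1-w))) = \int_{1/(2(1-w))}^\infty t^{n/2-1}e^{-t}\,\dx t$ and exchanging the order of integration, the LHS becomes
\[
\int_{1/2}^\infty t^{n/2-1}e^{-t}\,g_k(t)\,\dx t, \qquad g_k(t) := \int_0^{1-1/(2t)} w^{-1/2}(1-w)^{-(k+1)/2}\,\dx w,
\]
using the equivalence $t \geq 1/(2(1-w)) \Leftrightarrow w \leq 1-1/(2t)$ for $t > 1/2$. For $k \in \{1,2\}$, the inner integral admits a closed form via the substitution $w = s^2$: specifically $g_2(t) = 2\sqrt{2t-1}$, while $g_1(t) = \ln((1+\sqrt{1-1/(2t)})/(1-\sqrt{1-1/(2t)})) \leq \ln(8t)$, where the last bound uses the identity $1-\sqrt{1-1/(2t)} \geq 1/(4t)$ valid for $t \geq 1/2$ (from rationalizing $(1-\sqrt{a})(1+\sqrt{a}) = 1/(2t)$).

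For $k=2$, the crude estimate $\sqrt{2t-1} \leq \sqrt{2t}$ combined with Lemma~\ref{lem: uni_hnd_gamma_bnd} yields $\mathrm{LHS} \leq 2\sqrt{2}\,\Gamma((n+1)/2) \leq 2e^{1/6}\sqrt{n}\,\Gamma(n/2)$. For $k=1$, using $g_1(t) \leq \ln(8t)$ together with the moment identity $\int_0^\infty t^{s-1}e^{-t}\ln t\,\dx t = \Gamma(s)\psi(s)$ and the standard bound $\psi(n/2) \leq \ln(n/2)$, one obtains $\mathrm{LHS} \leq \Gamma(n/2)(\psi(n/2)+\ln 8) \leq \Gamma(n/2)\ln(4n)$. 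To compare with the RHS, I invoke $B(1/2, b) = \sqrt{\pi}\,\Gamma(1+b)/(b\,\Gamma(b+1/2))$; uniform positive lower bounds on $\Gamma(1+b)$ together with upper bounds on $\Gamma(b+1/2)$ for $b \in (0, 1/\sqrt{2}]$ give $B(1/2, 2/n^{(k+1)/2}) \geq c\, n^{(k+1)/2}$ for an absolute constant $c > 0$, which dominates the $\sqrt{n}$ or $\log n$ factor from the LHS once $n \geq \bn$ is moderately large.

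Alternative formulations via special functions, alluded to in the main text, are available: the substitution $u = 2t-1$ in the $k=2$ case expresses the LHS as $2^{1-n/2}e^{-1/2}\,\Gamma(3/2)\,U(3/2, n/2+3/2, 1/2)$ through the integral representation~(\ref{eq: uni_hyperg_def}) of Tricomi's function, while for $k=1$ the substitution $t = \cosh^2(\theta)/2$ recasts the LHS as $2\int_0^\infty \Gamma(n/2, (1+s^2)/2)/\sqrt{1+s^2}\,\dx s$, a form amenable to analysis through the modified Bessel representation~(\ref{eq: modifiedBessel}). The principal obstacle is that no pointwise bound on the integrands holds: direct numerical checks show $\Gamma(n/2, 1/(2(1-w)))/\Gamma(n/2)$ can exceed $(1-w)^{(k-1)/2+2/n^{(k+1)/2}}$ at intermediate $w$, so the averaging effect of the $w$-integration is essential. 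A secondary obstacle is verifying the inequality at the boundary value $n = \bn$ where the asymptotic comparison leaves only a narrow margin; this is handled by direct computation of $B(1/2, 2/n^{(k+1)/2})$ using the gamma-function representation above.
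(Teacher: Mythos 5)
Your route is genuinely different from the paper's. The paper proves the lemma by a double induction on $n$ (odd/even separately, for each $k\in\{1,2\}$), using the recurrence $\Gamma(s+1,x)=s\Gamma(s,x)+x^se^{-x}$, the bound of Lemma~\ref{lem: confluent_bound} on $U(\tfrac12,b,\tfrac12)$, and numerically evaluated modified Bessel functions for the base cases. You instead swap the order of integration (valid by Tonelli), evaluate the inner $w$-integral in closed form ($g_2(t)=2\sqrt{2t-1}$ and $g_1(t)\le\ln(8t)$ are both correct), and reduce everything to elementary gamma/digamma estimates. This avoids the Bessel/hypergeometric machinery entirely and is conceptually cleaner; it also makes transparent why the lemma is an ``averaging'' statement rather than a pointwise one, as you note.

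However, there are concrete gaps. First, at the base case $n=2$, $k=2$ your intermediate bound fails: $\int_{1/2}^\infty t^{0}e^{-t}\,2\sqrt{2t-1}\,\dx t \le 2\sqrt2\,\Gamma(\tfrac32)=\sqrt{2\pi}\approx 2.507$, whereas $\Gamma(1)B(\tfrac12,2^{-1/2})\approx 2.489$, so ``direct computation of $B$'' alone does not close this case --- you must evaluate the left side exactly (it equals $\sqrt{2\pi/e}\approx1.520$, so the lemma itself is fine, but your stated estimate is not). Second, for $k=1$ the step replacing $\int_{1/2}^\infty$ by $\int_0^\infty$ to invoke $\int_0^\infty t^{s-1}e^{-t}\ln t\,\dx t=\Gamma(s)\psi(s)$ requires $\int_0^{1/2}t^{n/2-1}e^{-t}\ln(8t)\,\dx t\ge0$; this does hold for $n\ge3$ (the negative mass on $(0,1/8)$ is dominated), but it is not automatic and must be verified. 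Third, the asymptotic comparison $B(\tfrac12,b)\ge 0.885/b$ only dominates your left-hand bounds for $n\ge6$ when $k=2$ and $n\ge8$ when $k=1$, so the set of exceptional small $n$ requiring individual numerical checks is $\{2,3,4,5\}$ and $\{3,\dots,7\}$ respectively --- more than ``the boundary value $n=\bn$'' --- though all of these do pass (for $k=1$ one should use the sharper $\Gamma(\tfrac n2)(\psi(\tfrac n2)+\ln8)$ rather than $\Gamma(\tfrac n2)\ln(4n)$ at $n=3$). With these repairs the argument goes through and would be a legitimate, arguably simpler, alternative to the paper's induction.
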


\begin{lemma}\label{lem: confluent_bound}
For $a, b, z \in\mathbb{R}$, let $U(a,b,z)$ denote the confluent hypergeometric function of the second kind.
Then,
\begin{align*}
    U\left( \frac{1}{2}, b, \frac{1}{2} \right) \leq \frac{2^{b}}{\Gamma\left(\frac{1}{2}\right)} \Gamma\left(b - \frac{1}{2} \right)
\end{align*}
is valid for $b \in \left\{ \frac{m}{2} : m \in \mathbb{Z}_{\geq 4}\right\}$.
\end{lemma}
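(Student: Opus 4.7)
The plan is to reduce the inequality to a Gaussian moment bound and prove the latter by a single three-term recurrence anchored by two base cases, one trivial and one via Cauchy--Schwarz. First I would apply the integral representation (\ref{eq: uni_hyperg_def}) with $a=z=1/2$ and substitute $t=u^2$ to rewrite
\[
U\!\left(\tfrac{1}{2}, b, \tfrac{1}{2}\right) \;=\; \frac{2}{\sqrt{\pi}}\,J_c, \qquad J_c \;:=\; \int_0^\infty e^{-u^2/2}(1+u^2)^{c}\,\dx u, \quad c=b-\tfrac{3}{2}.
\]
Since $\tfrac{2}{\sqrt{\pi}}\cdot 2^{c+1/2}\Gamma(c+1)=\tfrac{2^b}{\sqrt{\pi}}\Gamma(b-1/2)$, writing $T_c := 2^{c+1/2}\Gamma(c+1)$ reduces the claim to $J_c\leq T_c$ for every $c\in\{\tfrac{1}{2},1,\tfrac{3}{2},2,\dots\}$.

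Second, I would establish a three-term recurrence for $J_c$. Integrating $\frac{\dx}{\dx u}\!\left[ue^{-u^2/2}(1+u^2)^{c}\right]$ over $[0,\infty)$, using vanishing boundary terms and the split $u^2=(1+u^2)-1$ to re-express the arising integrals in terms of $J_{c-1}, J_c, J_{c+1}$, gives
\[
J_{c+1} \;=\; 2(c+1)\,J_c \;-\; 2c\,J_{c-1}.
\]
Dropping the non-negative term $2c J_{c-1}$ yields the one-sided estimate $J_{c+1}\leq 2(c+1)J_c$. The Gamma functional equation simultaneously gives $T_{c+1}=2(c+1)T_c$, so $J_c\leq T_c$ propagates inductively in unit steps.

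Because the recurrence decouples integer and half-integer $c$, I would seed two parallel inductions. For integers, $J_0=\sqrt{\pi/2}\leq\sqrt{2}=T_0$ is immediate. For half-integers, the required base $J_{1/2}\leq T_{1/2}=\sqrt{\pi}$ is the main obstacle. I would obtain it by Cauchy--Schwarz with the splitting $\sqrt{1+u^2}\,e^{-u^2/2}=[(1+u^2)^{1/2}e^{-u^2/4}]\cdot e^{-u^2/4}$:
\[
J_{1/2} \;\leq\; \sqrt{J_1\,J_0} \;=\; \sqrt{\sqrt{2\pi}\cdot\sqrt{\pi/2}} \;=\; \sqrt{\pi} \;=\; T_{1/2},
\]
which exploits the arithmetic coincidence $J_0 J_1=\pi=T_{1/2}^2$.

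The subtle point is that a generic log-convexity comparison would give $T_{c+1/2}\leq\sqrt{T_c T_{c+1}}$ in the \emph{wrong} direction, so the Cauchy--Schwarz anchor cannot close the gap at a general half-integer and works only at $c_0=1/2$. This single tight anchor is what makes the whole argument click: once it is in place, the one-sided recurrence transports $J_c\leq T_c$ along the entire half-integer progression, and the trivial integer base case $J_0\leq T_0$ handles the integer one, completing the proof for all $b\in\{m/2:m\geq 4\}$.
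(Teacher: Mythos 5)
Your proof is correct, and while the inductive engine is the same as the paper's, your treatment of the base cases is genuinely different and more elementary. After the substitution $t=u^2$ in the integral representation (\ref{eq: uni_hyperg_def}), your three-term recurrence $J_{c+1}=2(c+1)J_c-2cJ_{c-1}$ (obtained by integrating an exact derivative) is, under $c=b-\tfrac32$, literally the relation the paper quotes from \citet[13.3.8]{olver2010nist}, and both proofs close the induction by dropping the nonnegative trailing term and using $\Gamma(b+\tfrac12)=(b-\tfrac12)\Gamma(b-\tfrac12)$. The real divergence is at the anchors: the paper converts $U(\tfrac12,2,\tfrac12)$ into modified Bessel functions via (\ref{eq: bessel_confluent}), invokes monotonicity of $e^zK_v(z)$, and falls back on Watson's numerical table (Fact~\ref{fact: K}) to six significant figures, whereas you obtain the half-integer anchor exactly from Cauchy--Schwarz, $J_{1/2}\leq\sqrt{J_0J_1}=\sqrt{\pi}=T_{1/2}$, and the integer anchor from the trivial Gaussian moment $J_0=\sqrt{\pi/2}\leq\sqrt2$. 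This eliminates all numerical computation and the Bessel-function machinery from the lemma, at the cost of relying on the arithmetic coincidence $J_0J_1=\pi$ that makes Cauchy--Schwarz tight enough precisely at $c=\tfrac12$ --- a point you correctly flag, since log-convexity of $T_c$ means the same trick would not close the gap at a generic half-integer. Both routes are valid; yours is self-contained and arguably preferable for a paper proof.
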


Finally, we provide the numerical results of the computation of the modified Bessel function of the second kind, which is used several times in the proof.
\begin{fact}[Table 2 in \citet{watson1922treatise}]\label{fact: K}
Let $K_v(z)$ denote the modified Bessel function of the second kind. Then, the followings are the results of numerical computations evaluated to 6S.
\begin{align*}
    e^{0.24}K_0(0.24) &= 2.00835\\
    e^{0.24}K_1(0.24) &= 4.98213
\end{align*}
\end{fact}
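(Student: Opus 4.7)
The plan is to verify the two numerical values by direct computation from the integral representation $K_v(z) = \int_0^\infty e^{-z\cosh t}\cosh(vt)\,\dx t$ recorded in Definition~\ref{def: Bessel}. With $z = 0.24$, the integrand decays double-exponentially as $t \to \infty$ thanks to the $e^{-0.24\cosh t}$ factor, so adaptive numerical quadrature (or the change of variables $u = e^t$ followed by a Gauss--Laguerre rule) converges at a geometric rate and easily produces six significant digits for both $K_0(0.24)$ and $K_1(0.24)$. The final multiplication by $e^{0.24}$ is a single scalar exponentiation, computed to any required precision by its own Taylor series.

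A purely analytic cross-check is the ascending series at the origin. Concretely one has $K_0(z) = -(\log(z/2) + \gamma)\, I_0(z) + \sum_{k\geq 1}(z/2)^{2k} H_k/(k!)^2$, where $I_0(z) = \sum_{k\geq 0}(z/2)^{2k}/(k!)^2$, $\gamma$ is the Euler--Mascheroni constant, and $H_k$ the $k$th harmonic number; the analogous expansion for $K_1$ begins with the singular term $1/z$ followed by a power series of the same shape. With $z = 0.24$ the effective ratio between consecutive terms is $(z/2)^2 = 0.0144$, so truncating after half a dozen terms already delivers substantially more than six significant figures.

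The only obstacle worth mentioning is the bookkeeping of significant digits through the cancellation between the logarithmic term and the regular part in the $K_0$ series, which is precisely why the paper simply quotes the tabulated entries from \citet{watson1922treatise} rather than rederiving them. Any of the above routines, carried out with a few extra guard digits, reproduces the claimed values; since the fact is only used downstream to instantiate explicit numerical constants that feed into the bound $C''(\eps, k, \sig_1)$ from Lemma~\ref{lem: uni_BO_gauss}, no sharper accuracy is required.
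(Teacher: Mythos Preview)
Your proposal is correct, but there is nothing to compare against: the paper gives no proof of this Fact. It is stated purely as a citation of tabulated values from \citet{watson1922treatise}, and is invoked downstream only to substitute explicit constants in the base cases of Lemmas~\ref{lem: gauss_MI} and~\ref{lem: confluent_bound}. Your outline of a direct numerical verification via the integral representation or the ascending series is a perfectly sound way to independently confirm the entries, and you correctly anticipate that the paper does not bother to do so; it simply treats the values as given.
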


\begin{proof}[Proof of Lemma~\ref{lem: uni_BO_gauss}]
Let us define $\bar{\theta}_{1,n} = (\hmu_{1,n}, \bar{S}_{1,n})$.
Similarly to Lemma~\ref{lem: uni_BO_unif} in the uniform model, let us consider the following decomposition:
\begin{align*}
    \sum_{t=K\bn +1}^T \I[i(t)\ne 1, \eM_\eps^c(t)] &= \sum_{n=\bn}^T \sum_{t=K\bn + 1}^T \I\bigg[i(t)\ne 1,\eM_\eps^c(t), N_1(t) =n\bigg] \\
    &= \sum_{n=\bn}^T \sum_{m=1}^T \I\Bigg[m \leq \sum_{t=K\bn + 1}^T \I\bigg[i(t)\ne 1, \eM_\eps^c(t), N_1(t) =n\bigg]\Bigg] \\
    &\leq \mathbb{E}\left[  \sum_{n=\bn}^T \sum_{m=1}^T (1-p_n(\eps|\bar{\theta}_{1,n}))^m \right] \\
    &\leq \sum_{n=\bn}^T \mathbb{E}\left[ \frac{1-p_n(\eps|\bar{\theta}_{1,n})}{p_n(\eps|\bar{\theta}_{1,n})} \right],
\end{align*}
where $p_n(\eps|\bar{\theta}_{1,n}) = \mathbb{P}[\tmu_1 \geq -\eps| \hmu_{1,n}, \bar{S}_{1,n}]$.
Since the Student's $t$-distribution is symmetric about its location parameter,  $\I[\hmu_{1,n} \geq -\eps]p_n(\eps| \bar{\theta}) \geq 1/2$ holds.
Therefore, we have
\begin{equation}\label{eq: uni_pn_decom_G}
    \mathbb{E}\left[ \frac{1-p_n(\eps|\bar{\theta}_{1,n})}{p_n(\eps|\bar{\theta}_{1,n})} \right] \leq 2\mathbb{E}\left[ \I[\hmu_{1,n} \geq  - \eps ] (1-p_n(\eps|\bar{\theta}_{1,n}))\right] + \mathbb{E}\left[ \frac{\I[\hmu_{1,n} \leq  - \eps ] }{p_n(\eps|\bar{\theta}_{1,n})} \right].
\end{equation}
By applying Lemma~\ref{lem: uni_hnd_other} to the first term in (\ref{eq: uni_pn_decom_G}), it holds that
\begin{align*}
    \mathbb{E}\bigg[ &\I[\hmu_{1,n} \geq  - \eps ] (1-p_n(\eps| \bar{\theta}_{1,n}))\bigg] \\
    &= \mathbb{P}[-\eps \leq \hmu_{1,n} \leq -\eps/2] + \mathbb{P}[-\eps/2 \leq \hmu_{1,n}, \bar{S}_{1,n} \geq 2n] + \mathbb{E}\left[ \I[-\eps/2 \leq \hmu_{1,n}, \bar{S}_{1,n} \leq 2n] (1-p_n(\eps| \bar{\theta}_{1,n}))\right] \\
    &= \mathbb{P}[-\eps \leq \hmu_{1,n} \leq -\eps/2] + \mathbb{P}[-\eps/2 \leq \hmu_{1,n}, S_{1,n} \geq 2n] + \mathbb{E}\left[ \I[-\eps/2 \leq \hmu_{1,n}, \bar{S}_{1,n} \leq 2n] (1-p_n(\eps| \bar{\theta}_{1,n}))\right] \\
    &\leq e^{-\frac{\eps^2}{8}n} + e^{- \frac{1-\log 2}{2}n} + \mathbb{E}\left[ \I[-\eps/2 \leq \hmu_{1,n}, \bar{S}_{1,n} \leq 2n] (1-p_n(\eps| \bar{\theta}_{1,n}))\right],
\end{align*}
where the second equality holds from the definition of $\bar{S}_{1,n} = \max(1, S_{1,n})$, which implies $\{\bar{S}_{1,n}\geq 2n\} = \{\bar{S}_{1,n}=S_{1,n}\}$ for any $n\in \mathbb{N}$.
From the symmetry of $t$-distribution, it holds that
\begin{align*}
    1 - p_n(\eps|\bar{\theta}_{1,n}) = \int_{-\infty}^{-\eps} f^t_{n+k-2}(y; \hat{x}_{1,n}, \bar{S}_{1,n}) \dx y &= \int_{\eps}^{\infty} f^t_{n+k-2}(y; -\hat{x}_{1,n}, \bar{S}_{1,n}) \dx y \\
    &= \int_{2\hat{x}_{1,n}+\eps}^{\infty} f^t_{n+k-2}(y; \hat{x}_{1,n}, \bar{S}_{1,n}) \dx y \\
    &= \mathbb{P}[\tmu_1 \geq 2\hat{x}_{i,n}+\eps | \hat{x}_{i,n}, \bar{S}_{i,n}].
\end{align*}
From (\ref{eq: uni_hnd_main_upper}) in Lemma~\ref{lem: uni_hnd_main}, it holds that
\begin{equation*}
    \mathbb{E}\left[ \I[-\eps/2 \leq \hmu_{1,n}, \bar{S}_{1,n} \leq 2n] (1-p_n(\eps| \bar{\theta}_{1,n}))\right] \leq \frac{2\sqrt{2}}{\eps}\left( 1+ \frac{\eps^2}{8} \right)^{-\frac{n+k-3}{2}}.
\end{equation*}
Therefore, the first term in (\ref{eq: uni_pn_decom_G}) can be bounded as
\begin{equation}\label{eq: uni_pn_decom_G_rslt1}
    2\mathbb{E}\left[ \I[\hmu_{1,n} \geq  - \eps ] (1-p_n(\eps|\bar{\theta}_{1,n}))\right] \leq 2e^{-\frac{\eps^2}{8}n} + 2 e^{- \frac{1-\log 2}{2}n} + \frac{4\sqrt{2}}{\eps}\left( 1+ \frac{\eps^2}{8} \right)^{-\frac{n+k-3}{2}}.
\end{equation}
Note that the last term in (\ref{eq: uni_pn_decom_G}) was a problematic term for TS with priors $k \geq 1$~\citep{honda2014optimality}.
However, we showed that such a problem could be resolved by replacing $S_{1,n}$ with $\bar{S}_{1,n}$.

Finally, we evaluate the last term in (\ref{eq: uni_pn_decom_G}).
From the definition of $\bar{S}_{1,n}$, it holds that $\I[\bar{S}_{1,n} > 1] = \I[\bar{S}_{1,n} = S_{1,n}]$ and $\I[\bar{S}_{1,n} = 1] = \I[S_{1,n} \leq 1]$.
Therefore,
\begin{align*}
    \mathbb{E}\left[ \frac{\I[\hmu_{1,n} \leq  - \eps ] }{p_n(\eps|\theta_{1,n})} \right] 
    &=  \mathbb{E}\left[ \frac{\I[\hmu_{1,n} \leq  - \eps, \bar{S}_{1,n} > 1 ] }{p_n(\eps|\bar{\theta}_{1,n})} \right]  +  \mathbb{E}\left[ \frac{\I[\hmu_{1,n} \leq  - \eps,  \bar{S}_{1,n} = 1  ] }{p_n(\eps|\bar{\theta}_{1,n})} \right] \\
    &= \underbrace{\mathbb{E}\left[ \frac{\I[\hmu_{1,n} \leq  - \eps, S_{1,n} > 1 ] }{p_n(\eps|\theta_{1,n})} \right]}_{(\dagger_{\mathrm{G}})}  +  \underbrace{\mathbb{E}\left[ \frac{\I[\hmu_{1,n} \leq  - \eps,  S_{1,n} \leq 1  ] }{p_n(\eps|\theta_{1,n})} \right]}_{(\diamond_{\mathrm{G}})}. \numberthis{\label{eq: Gauss_diff_term}}
\end{align*}
Here, the sampling distributions of $\hat{x}_{i,n}$ and $S_{i,n}$ are well-known as follows:
\begin{equation}\label{eq: uni_gauss_stat_pdf}
    \hat{x}_{i,n} \sim \normal\left(\mu_i, \frac{\sig_i^2}{n} \right), \hspace{2em}  \frac{S_{i,n}}{\sig_i^2}  \sim \chi_{n-1}^2,
\end{equation}
where $\chi_{n-1}^2$ denotes the chi-squared distribution with degree of freedom $n-1$.
Then, from (\ref{eq: uni_hnd_main_lower}) in Lemma~\ref{lem: uni_hnd_main}, we obtain
\begin{equation*}
    (\dagger_{\mathrm{G}}) = \frac{1}{A_{n,k}}  \int_{-\infty}^{-\eps} \sqrt{\frac{n}{2\pi}}e^{-\frac{nx^2}{2}} \int_{1}^{\infty} \left( 1 + \frac{n(x+\eps)^2}{s} \right)^{\frac{n+k-2}{2}} \frac{s^{\frac{n-3}{2}}e^{-\frac{s}{2}}}{2^{\frac{n-1}{2}}\Gamma\left(\frac{n-1}{2} \right)} \dx s \dx x
\end{equation*}
and
\begin{equation*}
   (\diamond_{\mathrm{G}}) = \frac{1}{A_{n,k}} \mathbb{P}[S_{1,n} \leq 1] \int_{-\infty}^{-\eps} \sqrt{\frac{n}{2\pi}}e^{-\frac{nx^2}{2}} (1+n(x+\eps)^2)^{\frac{n+k-2}{2}}  \dx x
\end{equation*}

\subsubsection{Upper bound of $(\dagger_{\mathrm{G}})$}
For $k < 1$, Lemma 7 in \citet{honda2014optimality} showed that
\begin{align*}
    \frac{1}{A_{n,k}}  \int_{-\infty}^{-\eps} \sqrt{\frac{n}{2\pi}}e^{-\frac{nx^2}{2}} \int_{0}^{\infty} \left( 1 + \frac{n(x+\eps)^2}{s} \right)^{\frac{n+k-2}{2}} \frac{s^{\frac{n-3}{2}}e^{-\frac{s}{2}}}{2^{\frac{n-1}{2}}\Gamma\left(\frac{n-1}{2} \right)} \dx s \dx x \leq \mathcal{O}(ne^{-n\eps^2}).
\end{align*}
Therefore, the following result immediately follows for $k <1$:
\begin{equation*}
    (\dagger_{\rG}) \leq \mathcal{O}(ne^{-n\eps^2}).
\end{equation*}
In the remaining proof, we focus on the case of $k=1,2$, which corresponds to the reference prior and the Jeffreys prior, respectively.
Since $x^2 \geq (x+\eps)^2 + \eps^2$ holds for $x \leq -\eps$, it holds that
\begin{equation}\label{eq: dG_1}
    (\dagger_{\mathrm{G}}) \leq \frac{e^{-\frac{n\eps^2}{2}}}{A_{n,k}}  \int_{-\infty}^{-\eps} \sqrt{\frac{n}{2\pi}}e^{-\frac{n(x+\eps)^2}{2}} \int_{1}^{\infty} \left( 1 + \frac{n(x+\eps)^2}{s} \right)^{\frac{n+k-2}{2}} \frac{s^{\frac{n-3}{2}}e^{-\frac{s}{2}}}{2^{\frac{n-1}{2}}\Gamma\left(\frac{n-1}{2} \right)} \dx s \dx x
\end{equation}
Let us consider the change of variables
\begin{equation*}
    (x, s) = \left( -\eps - \sqrt{\frac{2zw}{n}}, 2z(1-w) \right),
\end{equation*}
which gives
\begin{equation*}
    \dx x \dx s = \sqrt{\frac{2z}{nw}} \dx z \dx w.
\end{equation*}
Then we obtain for $k \leq 2$
\begin{align*}
    (\dagger_{\mathrm{G}}) &\leq \frac{e^{-\frac{n\eps^2}{2}}}{A_{n,k}}  \int_{0}^{1} \int_{\frac{1}{2(1-w)}}^{\infty} \left( 1 + \frac{w}{1-w}\right)^{\frac{n+k-2}{2}} \sqrt{\frac{n}{2\pi}}e^{-{zw}}  \frac{(z(1-w))^{\frac{n-3}{2}}e^{-z(1-w)}}{2\Gamma\left( \frac{n-1}{2} \right)} \sqrt{\frac{2z}{nw}}\dx z \dx w \\
    &= \frac{e^{-\frac{n\eps^2}{2}}}{2\sqrt{\pi}A_{n,k} \Gamma\left( \frac{n-1}{2} \right)}  \int_{0}^{1}  w^{-\frac{1}{2}} (1-w)^{-\frac{k+1}{2}} \int_{\frac{1}{2(1-w)}}^{\infty}  e^{-z} z^{\frac{n}{2}-1} \dx z \dx w \\
    &= \frac{e^{-\frac{n\eps^2}{2}}}{2\sqrt{\pi}A_{n,k} \Gamma\left( \frac{n-1}{2} \right)}  \int_{0}^{1}  w^{-\frac{1}{2}} (1-w)^{-\frac{k+1}{2}} \Gamma\left( \frac{n}{2}, \frac{1}{2(1-w)} \right)\dx w \numberthis{\label{eq: uni_gauss_ceilk}}\\
    &\leq \frac{e^{-\frac{n\eps^2}{2}}}{2\sqrt{\pi}A_{n,k} \Gamma\left( \frac{n-1}{2} \right)}  \Gamma\left( \frac{n}{2} \right) B\left( \frac{1}{2}, \frac{2}{n^{\frac{k+1}{2}}} \right) \tag*{by Lemma~\ref{lem: gauss_MI}}\\
    &\leq 2ne^{-\frac{\eps^2}{2}n} B\left( \frac{1}{2}, \frac{2}{n^{\frac{k+1}{2}}} \right) \tag*{by Lemma~\ref{lem: uni_hnd_gamma_bnd}}\\
    &= 2ne^{-\frac{\eps^2}{2}n} \frac{\Gamma(1/2)\Gamma\left(\frac{2}{n^{\frac{k+1}{2}}}\right)}{\Gamma\left(\frac{1}{2}+ \frac{2}{n^{\frac{k+1}{2}}}\right)} \leq 2ne^{-\frac{\eps^2}{2}n} \sqrt{\pi}\Gamma\left(\frac{2}{n^{\frac{k+1}{2}}}\right) \tag*{by (\ref{eq: uni_BetaGamma})}.
\end{align*}
where we used
\begin{equation}\label{eq: uni_BetaGamma}
    B(a,b) = \frac{\Gamma(a)\Gamma(b)}{\Gamma(a+b)},~~\Gamma\left( \frac{1}{2} \right) = \sqrt{\pi},~~\Gamma(x) \geq 1, \text{ for } x \in (0, 1).
\end{equation}
By the Laurent expansion of the Gamma function around $z=0$, it holds that
\begin{equation*}
    \Gamma(z) = \frac{1}{z} - \gamma + \frac{1}{2}\left( \gamma^2 + \frac{\pi}{6}\right) z - \mathcal{O}(z^2),
\end{equation*}
where $\gamma$ denotes the Euler–Mascheroni constant, such that $\gamma \in (0.57, 0.58)$.

Then, for $k \geq 1$ and $n\geq 2$, it holds that
\begin{align*}
    \Gamma\left(\frac{2}{n^{\frac{k+1}{2}}}\right) &\leq \frac{1}{2}n^{\frac{k+1}{2}} - \gamma + \frac{1}{2}\left(  \gamma^2 + \frac{\pi}{6} \right) \frac{2}{n^{\frac{k+1}{2}}} \\
    &\leq  \frac{1}{2}n^{\frac{k+1}{2}} - \gamma + \frac{1}{2}\left(  \gamma^2 + \frac{\pi}{6} \right) \frac{2}{n}  \\
    &\leq  \frac{1}{2}n^{\frac{k+1}{2}}.
\end{align*}
Therefore, for $k \in \{ 1, 2\}$, it holds that
\begin{equation*}%
    (\dagger_{\rG}) \leq \mathcal{O}(n^{\frac{k +3}{2}}  e^{-n\eps^2}).
\end{equation*}
Note that for $k\in (1,2)$, the integral in (\ref{eq: uni_gauss_ceilk}) is increasing function with respect to $k \in [1,2]$, which gives for $k \in [1,2]$ that
\begin{equation*}
     (\dagger_{\rG}) \leq \mathcal{O}\left(n^{\frac{5}{2}}  e^{-n\eps^2}\right).
\end{equation*}
Therefore, we have
\begin{equation}\label{eq: dagger_G}
    (\dagger_{\rG}) \leq \begin{cases}
        \mathcal{O}\left(n e^{-n\eps^2}\right) &\text{if } k < 1, \\
        \mathcal{O}\left(n^{\frac{\lceil k \rceil +3}{2}}  e^{-n\eps^2}\right) &\text{if } k \in [1,2],
    \end{cases}
\end{equation}
where $\lceil \cdot \rceil$ denotes the ceiling function.

\subsubsection{Upper bound of $(\diamond_{\mathrm{G}})$}
Similarly to (\ref{eq: dG_1}), it holds that
\begin{align*}
     (\diamond_{\mathrm{G}}) &\leq \frac{e^{-\frac{n\eps^2}{2}}}{A_{n,k}}  \mathbb{P}[S_{1,n} \leq 1] \int_{-\infty}^{-\eps} \sqrt{\frac{n}{2\pi}}e^{-\frac{n(x+\eps)^2}{2}} (1+n(x+\eps)^2)^{\frac{n+k-2}{2}}  \dx x \\
     &= \frac{e^{-\frac{n\eps^2}{2}}}{A_{n,k}} \sqrt{\frac{n}{2\pi}}  \mathbb{P}[S_{1,n} \leq 1]  \int_{-\infty}^{0} e^{-\frac{nx^2}{2}} (1+nx^2)^{\frac{n+k-2}{2}}  \dx x \\
     &= \frac{e^{-\frac{n\eps^2}{2}}}{A_{n,k}} \sqrt{\frac{n}{2\pi}}  \mathbb{P}[S_{1,n} \leq 1]  \int_{0}^{\infty} e^{-\frac{nx^2}{2}} (1+nx^2)^{\frac{n+k-2}{2}} \dx x.
\end{align*}
Here, Recall the intergral representation of the confluent hypergeometric function of the second kind in (\ref{eq: uni_hyperg_def}), which is
\begin{equation*}
    U(a,b,z) = \frac{1}{\Gamma(a)} \int_0^{\infty} e^{-zt}t^{a-1}(1+t)^{b-a-1} \dx t.
\end{equation*}
Therefore, by letting $t = nx^2$, we have
\begin{align*}
    (\diamond_{\mathrm{G}}) 
    &\leq \frac{e^{-\frac{n\eps^2}{2}}}{A_{n,k}} \sqrt{\frac{n}{2\pi}} \sqrt{\frac{1}{2n}} \mathbb{P}[S_{1,n} \leq 1]  \int_{0}^{\infty} e^{-\frac{t}{2}} t^{-\frac{1}{2}} (1+t)^{\frac{n+k-2}{2}} \dx t
    \\
    &\leq \frac{e^{-\frac{n\eps^2}{2}}}{2 A_{n,k}} \sqrt{\frac{1}{\pi}} \mathbb{P}[S_{1,n} \leq 1]  U\left( \frac{1}{2}, \frac{n+k+1}{2}, \frac{1}{2} \right).
\end{align*}
From Lemma~\ref{lem: confluent_bound}, we obtain
\begin{align*}
    (\diamond_{\mathrm{G}})  &\leq 
    \frac{e^{-\frac{n\eps^2}{2}}}{2 A_{n,k}} \sqrt{\frac{1}{\pi}} \mathbb{P}[S_{1,n} \leq 1] \left( \frac{3\cdot 2^{\frac{n+k-1}{2}}}{\Gamma\left(\frac{1}{2}\right)} \right) \Gamma\left( \frac{n+k}{2} \right)
    \\ 
    &= \frac{3e^{-\frac{n\eps^2}{2}}}{\pi A_{n,k} } \mathbb{P}[S_{1,n} \leq 1] 2^{\frac{n+k-3}{2}}\Gamma\left( \frac{n+k}{2} \right) \numberthis{\label{eq: diG_1}}.
\end{align*}
For a random variable following the chi-squared distribution with the degree of freedom $n$, it holds for $x \in (0,1)$ that 
\begin{equation*}
    \mathbb{P}[X \leq nx] \leq e^{-n \frac{x-1-\log x}{2}}.
\end{equation*}
Since $S_{1,n} \sim \chi^2_{n-1}$ (recall that we consider the case $\sig_1=1$), by letting $x=\frac{1}{n-1}$, we obtain
\begin{equation}\label{eq: chi_concen}
    \mathbb{P}[S_{1,n} \leq 1] \leq e^{-\frac{2-n+ (n-1)\log (n-1)}{2}} = (n-1)^{-\frac{n-1}{2}} e^{\frac{n}{2}-1}.
\end{equation}
By combining (\ref{eq: chi_concen}) with (\ref{eq: diG_1}), we have for $n\geq \bn = 3$
\begin{equation*}
    (\diamond_{\mathrm{G}})  \leq \frac{3e^{-\frac{n\eps^2}{2}}}{\pi A_{n,k} }   (n-1)^{-\frac{n-1}{2}} e^{\frac{n}{2}-1} 2^{\frac{n+k-3}{2}}\Gamma\left( \frac{n+k}{2} \right) .
\end{equation*}
From Stirling's formula,
\begin{equation*}
    \Gamma(z) \leq \sqrt{2\pi} e^{1/6} z^{z-\frac{1}{2}}e^{-z},
\end{equation*}
we have
\begin{align*}
    (\diamond_{\mathrm{G}})  &\leq \frac{3e^{-\frac{n\eps^2}{2}}}{\pi A_{n,k} }  (n-1)^{-\frac{n-1}{2}} e^{\frac{n}{2}-1} 2^{\frac{n+k-3}{2}} \sqrt{2\pi} e^{1/6} \left( \frac{n+k}{2}\right)^{\frac{n+k-1}{2}}e^{-\frac{n+k}{2}}    \\
    &= \frac{3e^{-\frac{n\eps^2}{2}}}{\sqrt{2\pi} A_{n,k} }  (n-1)^{-\frac{n-1}{2}} e^{\frac{n}{2}-1} e^{1/6} (n+k)^{\frac{n+k-1}{2}}e^{-\frac{n+k}{2}} \\
    &\leq \frac{e^{-\frac{n\eps^2}{2}}}{\sqrt{2\pi} A_{n,k}} e^{1/6} e^{-\frac{k}{2}} (n-1)^{\frac{k}{2}}\left( \frac{n+k}{n-1} \right)^{\frac{n+k-1}{2}} \\
    &\leq  \frac{e^{-\frac{n\eps^2}{2}}}{\sqrt{2\pi} A_{n,k}} e^{1/6} e^{-\frac{k}{2}} n^{\frac{k}{2}}\left( 1+ \frac{k+1}{n-1} \right)^{\frac{n+k-1}{2}} \\
    &\leq \frac{e^{-\frac{n\eps^2}{2}}}{\sqrt{2\pi} A_{n,k}} e^{5/6} n^{\frac{k}{2}} e^{\frac{k(k+1)}{2(n-1)}}  \\
    &= \mathcal{O}\left(n^{\frac{k+1}{2}} e^{-n\eps^2}\right). \numberthis{\label{eq: diG_rslt}}
\end{align*}
\subsubsection{Conclusion}
Therefore, by combining (\ref{eq: dagger_G}) and (\ref{eq: diG_rslt}) with (\ref{eq: Gauss_diff_term}), we have for $k \in \mathbb{Z}_{\leq 2}$
\begin{equation}\label{eq: uni_pn_decom_G_rslt2}
     \mathbb{E}\left[ \frac{\I[\hat{x}_{1,n} \leq  - \eps ] }{p_n(\eps|\theta_{1,n})} \right] \leq \mathcal{O}\left(n^{\frac{m'}{2}} e^{ - n\eps^2}\right) + \mathcal{O}\left(n^{\frac{\max(0,k+1)}{2}} e^{-n\eps^2 }\right),
\end{equation}
where $m'= 2\cdot\I[k \in \mathbb{Z}_{< 1}] + (\lceil k \rceil +3)\I[k \in [1,2]]$.
Therefore, by injecting (\ref{eq: uni_pn_decom_G_rslt1}) and (\ref{eq: uni_pn_decom_G_rslt2}) to (\ref{eq: uni_pn_decom_G}), we obtain for $k \in \mathbb{Z}_{\leq 2}$
\begin{align*}
    (\mathrm{BO}) &\leq \sum_{n=\bn}^T 2e^{-\frac{\eps^2}{8}n} + 2e^{-\frac{1-\log 2}{2}n} + \frac{4\sqrt{2}}{\eps}\left( 1 + \frac{\eps^2}{8} \right)^{-\frac{n+k-3}{2}} + \mathcal{O}\left(n^{\frac{m'}{2}} e^{ - n\eps^2}\right) + \mathcal{O}\left(n^{\frac{\max(0,k+1)}{2}} e^{-n\eps^2 }\right) \\
    & \leq \mathcal{O}(\eps^{-2}) +  \mathcal{O}(1) +  \mathcal{O}(\eps^{-3}) + \mathcal{O}(\eps^{-(m'+2)}) + \mathcal{O}(\eps^{-(k+3)}).
\end{align*}
Letting $m= m'+2 = 4 + \lceil k \rceil \I[k \in [1,2]]$ concludes the proof.
\end{proof}

\subsection{Proofs of technical lemmas for Lemma~\ref{lem: uni_BO_gauss}}\label{sec: uni_TST_gauss_tech}
In this section, we provide the all proofs of Lemmas~\ref{lem: gauss_MI} and~\ref{lem: confluent_bound} based on the mathematical induction.

\begin{proof}[Proof of Lemma~\ref{lem: gauss_MI}]
Define
\begin{align*}
     g_k(n) := \int_{0}^{1}  w^{-\frac{1}{2}} (1-w)^{-\frac{k+1}{2}} \Gamma\left( \frac{n}{2}, \frac{1}{2(1-w)} \right)\dx w.
\end{align*}
Here, we apply mathematical induction separately for both odd and even values of $n$ for each $k\in \{1,2\}$.
We expect that one can extend the analysis for the case of $k=1$ to the general $k$ by changing the parameter $b$ of the hypergeometric function of the second kind $U(a,b,z)$.

\subsubsection{For the reference prior ($k=1$)}
Let us consider the case of the even number $n=2m$.

\paragraph{(1) Even number}
Since $\bn = \max (2, 4-k)$, it is sufficient to consider $m\geq 2$ if $k = 1$.
\paragraph{(1-i) Base case $n=4$}
From the definition of the upper incomplete gamma function, it holds that
\begin{equation*}
    \Gamma(2, x) = e^{-x}(x+1).
\end{equation*}
By letting $t=\frac{w}{1-w}$, we have
\begin{align*}
    g_1(4) &= \frac{1}{2\sqrt{e}}\int_{0}^\infty \sqrt{\frac{t+1}{t}} e^{-\frac{t}{2}} \dx t + \frac{1}{\sqrt{e}}\int_{0}^\infty \sqrt{\frac{1}{t(t+1)}} e^{-\frac{t}{2}} \dx t \\
    &= \sqrt{\frac{\pi}{e}}\left(\frac{1}{2} U\left( \frac{1}{2}, 2, \frac{1}{2} \right) + U\left( \frac{1}{2}, 1, \frac{1}{2} \right)\right).
\end{align*}
Here, it holds as follows~\citep[13.3.9 and 13.3.10]{olver2010nist}:
\begin{align*}
    U(a,b,z) - aU(a+1, b,z) - U(a,b-1,z) &= 0, \\
    (b-a)U(a,b,z) + U(a-1, b,z) - zU(a,b+1,z)&=0,
\end{align*}
which gives
\begin{equation} \label{eq: confluent_2a}
    U\left( \frac{1}{2}, 2, \frac{1}{2} \right) = \frac{1}{4} U\left( \frac{3}{2}, 3, \frac{1}{2} \right) + \frac{1}{2}U\left( \frac{1}{2}, 1, \frac{1}{2} \right).
\end{equation}
Let $K_v(z)$ denote the modified Bessel function of the second kind defined in Definition~\ref{def: Bessel}.
Then, we have the result in \citet[13.6.10.]{olver2010nist} that
\begin{equation}\label{eq: bessel_confluent}
    U\left(v+\frac{1}{2}, 2v+1, 2z \right) = \frac{1}{\sqrt{\pi}} e^z (2z)^{-v}K_v(z),
\end{equation}
which gives
\begin{equation*}
    g_1(4) = \frac{1}{4\sqrt{e}}\left( 5 e^{1/4} K_0\left(\frac{1}{4}\right) + e^{1/4} K_1 \left(\frac{1}{4}\right) \right).
\end{equation*}

Here, we first show that $e^z K_0(z)$ and $e^z K_1(z)$ are decreasing functions with respect to $z>0$.
From the definition of $K_v(z)$ in (\ref{eq: modifiedBessel}), it holds that
\begin{equation*}
    \frac{\dx}{\dx z} K_v(z) = -\frac{1}{2}\left( K_{v+1}(z) + K_{v-1}(z) \right),
\end{equation*}
which gives that
\begin{align*}
     \frac{\dx }{\dx z} e^z K_0(z) &= e^z ( K_0(z) - K_1(z) ), \\
     \frac{\dx }{\dx z} e^z K_1(z)  &= -\frac{1}{2} e^z (K_0(z) - 2K_1(z) + K_2(z)).
\end{align*}
From the integral representation of $K_v(z)$ in (\ref{eq: modifiedBessel}), it holds from $\cosh{2t} = \cosh^2{t} -1$ that
\begin{align*}
    K_0(z) - K_1(z) &= \int_0^\infty e^{-z\cosh{t}}(1-\cosh{t}) \dx t < 0 \\
    K_0(z) - 2K_1(z) + K_2(z) &= \int_0^\infty e^{-z\cosh{t}} (\cosh^2 t - \cosh{t}) \dx t >0,
\end{align*}
which shows that $e^zK_0(z)$ and $e^z K_1(z)$ are decreasing functions with respect to $z>0$.

Then, we obtain
\begin{align*}
    g_1(4) &= \frac{1}{4\sqrt{e}}\left( 5 e^{1/4} K_0\left(\frac{1}{4}\right) + e^{1/4} K_1 \left(\frac{1}{4}\right) \right) \\
    &\leq \frac{1}{4e^{1/2}}\left(5 e^{0.24}K_0\left(0.24 \right) + e^{0.24} K_1 \left(0.24 \right) \right).
\end{align*}
By substituting the numerical computation in Fact~\ref{fact: K}, we obtain that
\begin{align*}
    g_1(4)&\leq \frac{1}{4e^{1/2}}\left(5 e^{0.24}K_0\left(0.24 \right) + e^{0.24} K_1 \left(0.24 \right) \right) = 2.27811 \tag*{to 6S}\\
    &< \Gamma(2)B(1/2, 1/2) = \Gamma(2)\frac{\Gamma(1/2)^2}{\Gamma(1)} = \pi,
\end{align*}
which concludes the base case of even $n$ for the reference prior ($k=1$).

\paragraph{(1-ii) Induction}
Assume that the following holds for some $m \geq 2$
\begin{equation*}
    g_1(2m) \leq \Gamma(m) B\left(\frac{1}{2}, \frac{1}{m} \right) = \Gamma(m)\frac{\Gamma(1/2)\Gamma(1/m)}{\Gamma\left( \frac{1}{2} + \frac{1}{m} \right)}.
\end{equation*}
From the definition of $g_1(\cdot)$ and $\Gamma(m+1,x)=m\Gamma(m,x)+x^{m}e^{-x}$, we have
\begin{align*}
   g_1(2(m+1)) &= \int_{0}^{1}  w^{-\frac{1}{2}} (1-w)^{-1} \Gamma\left( m+1, \frac{1}{2(1-w)} \right)\dx w \\
   &= \int_{0}^{1}  w^{-\frac{1}{2}} (1-w)^{-1} \bigg(m\Gamma\left(m, \frac{1}{2(1-w)} \right) + (2(1-w))^{-m} e^{-\frac{1}{2(1-w)}}\bigg)\dx w \\
   &= mg(2m) + \frac{1}{2^m} \int_{0}^{1}  w^{-\frac{1}{2}} (1-w)^{-(m+1)} e^{-\frac{1}{2(1-w)}} \dx w \\
   &\leq \Gamma(m+1)B\left(\frac{1}{2}, \frac{1}{m} \right)+\frac{1}{2^m} \int_{0}^{1}  w^{-\frac{1}{2}} (1-w)^{-(m+1)} e^{-\frac{1}{2(1-w)}} \dx w.
\end{align*}
Since $B\left(\frac{1}{2}, \frac{1}{m+1} \right) - B\left(\frac{1}{2}, \frac{1}{m} \right)$ is a decreasing function with respect to $m > 0$, we have for $m \geq 2$.
\begin{align*}
    B\left(\frac{1}{2}, \frac{1}{m+1} \right) - B\left(\frac{1}{2}, \frac{1}{m} \right) &= \Gamma\left( \frac{1}{2} \right) \left( \frac{\Gamma\left( \frac{1}{m+1} \right)}{\Gamma\left( \frac{1}{2}+\frac{1}{m+1} \right)} - \frac{\Gamma\left( \frac{1}{m} \right)}{\Gamma\left( \frac{1}{2}+\frac{1}{k} \right)} \right) \\
    &\geq \lim_{s\to \infty} \Gamma\left( \frac{1}{2} \right) \left( \frac{\Gamma\left( \frac{1}{s+1} \right)}{\Gamma\left( \frac{1}{2}+\frac{1}{s+1} \right)} - \frac{\Gamma\left( \frac{1}{s} \right)}{\Gamma\left( \frac{1}{2}+\frac{1}{s} \right)} \right) \\
    &= \lim_{s\to \infty} \Gamma\left( \frac{1}{s+1} \right) -  \Gamma\left( \frac{1}{s} \right) =1.
\end{align*}
Therefore, it is sufficient to show
\begin{equation*}
    h(2(m+1)) := \frac{1}{2^m} \int_{0}^{1}  w^{-\frac{1}{2}} (1-w)^{-(m+1)} e^{-\frac{1}{2(1-w)}} \dx w \leq \Gamma(m+1).
\end{equation*}
Again, by letting $t = \frac{w}{1-w}$, $h$ can be written as
\begin{align*}
    h(2(m+1)) &= \frac{1}{2^m \sqrt{e}} \int_0^{\infty} t^{-\frac{1}{2}} (t+1)^{m-\frac{1}{2}} e^{-\frac{t}{2}} \dx t \\
    &= \sqrt{\frac{\pi}{e}} \frac{1}{2^m} U\left(\frac{1}{2}, m+1, \frac{1}{2} \right).
\end{align*}
From Lemma~\ref{lem: confluent_bound}, it holds for $m\geq 2$ that
\begin{align*}
    h(2(m+1)) &\leq  \sqrt{\frac{\pi}{e}} \frac{1}{2^m} \frac{2^{m+1}}{\Gamma\left( \frac{1}{2} \right)} \Gamma\left(m +\frac{1}{2} \right) \\
    &= \frac{2}{\sqrt{e}} \Gamma\left(m+ \frac{1}{2}\right) \leq \Gamma(m+1),
\end{align*}
which concludes the induction when $n$ is an even number.

\paragraph{(2) Odd number}
Although this case can be easily derived by following the same steps in the case of even numbers, we provide detailed proof for completeness.

\paragraph{(2-i) Base case $n=3$}
From the definition of the upper incomplete gamma function, it holds that
\begin{equation*}
    \Gamma\left(\frac{3}{2}, x\right) = \frac{\sqrt{\pi}}{2}\mathrm{erfc}(\sqrt{x}) + \sqrt{x}e^{-x},
\end{equation*}
where $\mathrm{erfc}(\cdot)$ denotes the complementary error function.
It is known that the complementary error function is bounded for any $x \geq 0$ as follows~\citep{simon1998erfc}:
\begin{equation*}
    \mathrm{erfc}(x) \leq e^{-x^2},
\end{equation*}
which gives
\begin{equation*}
     \Gamma\left(\frac{3}{2}, x\right) \leq \frac{\sqrt{\pi}}{2} e^{-x} + \sqrt{x}e^{-x}.
\end{equation*}
Then, by letting $t= \frac{w}{1-w}$, we obtain
\begin{align*}
    g_1(3) &\leq \int_{0}^{1}  w^{-\frac{1}{2}} (1-w)^{-1} \left(  \frac{\sqrt{\pi}}{2} e^{-\frac{1}{2(1-w)}} + \sqrt{\frac{1}{2(1-w)}}e^{-\frac{1}{2(1-w)}}. \right)\dx w \\
    &= \int_0^\infty \frac{\sqrt{\pi}}{2\sqrt{e}} (t(t+1))^{-\frac{1}{2}} e^{-\frac{t}{2}} + \frac{1}{\sqrt{2e}} t^{-\frac{1}{2}} e^{-\frac{t}{2}} \dx t \\
    &= \frac{\pi}{2\sqrt{e}} U\left( \frac{1}{2}, 1,\frac{1}{2}\right) + \sqrt{\frac{2}{2e}} \Gamma\left( \frac{1}{2} \right)\\
    &= \frac{\pi}{2\sqrt{e}} \frac{e^{1/4}}{\sqrt{\pi}}  K_0\left(\frac{1}{4}\right) +  \sqrt{\frac{\pi}{e}}\tag*{by (\ref{eq: bessel_confluent})}\\
    &\leq \frac{1}{2}\sqrt{\frac{\pi}{e}} e^{0.24}K_0\left(0.24\right) +  \sqrt{\frac{\pi}{e}} = 2.15458 \tag*{to 6S}\\
    &< \frac{\pi}{2} \frac{\Gamma(2/3)}{\Gamma(1.165)} \leq \frac{\pi}{2} \frac{\Gamma(2/3)}{\Gamma(7/6)} = 2.29148\tag*{to 6S}\\
    & < \Gamma\left( \frac{3}{2}\right) B\left( \frac{1}{2}, \frac{2}{3} \right), \numberthis{\label{eq: base_odd}}
\end{align*}
where we substituted the numerical computation in Fact~\ref{fact: K} and \citet[see][6.1.13 and Tables 6.1]{abramowitz1964handbook} in (\ref{eq: base_odd}) to 6S.

\paragraph{(2-ii) Induction}
Assume that the following holds for some $m \geq 1$.
\begin{equation*}
    g_1(2m+1) \leq \Gamma\left(m + \frac{1}{2}\right) B\left(\frac{1}{2}, \frac{2}{2m+1} \right) = \Gamma\left(m + \frac{1}{2}\right)\frac{\Gamma\left( \frac{1}{2} \right)\Gamma\left(\frac{2}{2m+1} \right)}{\Gamma\left( \frac{1}{2} + \frac{2}{2m+1} \right)}.
\end{equation*}
From the definition and the fact $\Gamma(s+1,x)=m\Gamma(s,x)+x^{s}e^{-x}$, we have
\begin{align*}
   g_1(2m+3) &= \int_{0}^{1}  w^{-\frac{1}{2}} (1-w)^{-1} \Gamma\left( m+\frac{1}{2}+1, \frac{1}{2(1-w)} \right)\dx w \\
   &= \int_{0}^{1}  w^{-\frac{1}{2}} (1-w)^{-1} \bigg(m\Gamma\left(m+\frac{1}{2}, \frac{1}{2(1-w)} \right) + (2(1-w))^{-m-\frac{1}{2}} e^{-\frac{1}{2(1-w)}}\bigg)\dx w \\
   &= mg(2m+1) + \frac{1}{2^{m+1/2}} \int_{0}^{1}  w^{-\frac{1}{2}} (1-w)^{-(m+3/2)} e^{-\frac{1}{2(1-w)}} \dx w \\
   &\leq \Gamma\left(m+\frac{3}{2}\right)B\left(\frac{1}{2}, \frac{2}{2m+1} \right) +\frac{1}{2^{m+1/2}} \int_{0}^{1}  w^{-\frac{1}{2}} (1-w)^{-(m+3/2)}e^{-\frac{1}{2(1-w)}} \dx w.
\end{align*}
Since 
\begin{align*}
    B\left(\frac{1}{2}, \frac{2}{2m+3} \right) - B\left(\frac{1}{2}, \frac{2}{2m+1} \right) &= \Gamma\left( \frac{1}{2} \right) \left( \frac{\Gamma\left( \frac{2}{2m+3} \right)}{\Gamma\left( \frac{1}{2}+\frac{2}{2m+1} \right)} - \frac{\Gamma\left( \frac{2}{2m+1} \right)}{\Gamma\left( \frac{1}{2}+\frac{2}{2m} \right)} \right) \\
    &\geq 1
\end{align*}
holds for $m \geq 2$, it is sufficient to show
\begin{equation*}
    h(2m+3) := \frac{1}{2^{m+1/2}} \int_{0}^{1}  w^{-\frac{1}{2}} (1-w)^{-(m+3/2)} e^{-\frac{1}{2(1-w)}} \dx w \leq \Gamma\left(m+ \frac{3}{2}\right).
\end{equation*}
Again, by letting $t = \frac{w}{1-w}$, $h(\cdot)$ can be written as
\begin{align*}
    h(2m+3) &= \frac{1}{2^m \sqrt{e}} \int_0^{\infty} t^{-\frac{1}{2}} (t+1)^{m} e^{-\frac{t}{2}} \dx t \\
    &= \sqrt{\frac{\pi}{e}} \frac{1}{2^{m+1/2}} U\left(\frac{1}{2}, m+\frac{3}{2}, \frac{1}{2} \right).
\end{align*}
From Lemma~\ref{lem: confluent_bound}, it holds for all $m\geq 1$ that
\begin{align*}
    h(2m+3) &\leq  \sqrt{\frac{\pi}{e}} \frac{1}{2^{m+1/2}} \frac{2^{m+3/2}}{\Gamma\left( \frac{1}{2} \right)} \Gamma\left( m+1\right)\\
    &= \frac{2}{\sqrt{e}}\Gamma(m+1) \leq \Gamma\left(m+ \frac{3}{2}\right).
\end{align*}
The proof of Lemma~\ref{lem: gauss_MI} for the case of $k=1$ is complete.

\subsubsection{For the Jeffreys prior $(k=2)$}
The proofs here shares the same steps to that for the reference prior.

\paragraph{(1) Even number}
Since $\bn = (2, 4-k)$, we have to consider $n=2$ as a base case.

\paragraph{(1-i) Base case $n=2$}
From the definition of the upper incomplete gamma function, it holds that
\begin{equation*}
    \Gamma(1, x) = e^{-x}.
\end{equation*}
By letting $t=\frac{w}{1-w}$, we have
\begin{align*}
    g_2(2) = \frac{1}{\sqrt{e}}\int_{0}^\infty e^{-\frac{t}{2}} t^{-\frac{1}{2}} \dx t &= \sqrt{\frac{2}{e}}\Gamma\left( \frac{1}{2} \right)\\
    &\leq \sqrt{\pi} e^{-1/6}2^{1/4}\\
    &\leq \Gamma\left( \frac{1}{2} \right) \frac{ \Gamma\left( \frac{1}{\sqrt{2}} \right) }{\Gamma\left( \frac{1}{2} + \frac{1}{\sqrt{2}} \right)} = \Gamma(1) B\left( \frac{1}{2}, \frac{1}{\sqrt{2}} \right),
\end{align*}
where we applied Lemma~\ref{lem: uni_hnd_gamma_bnd} in the last inequality.

\paragraph{(1-ii) Induction}
Assume that the following holds for some $m \geq 1$
\begin{equation*}
    g_2(2m) \leq \Gamma(m) B\left(\frac{1}{2}, \frac{1}{m\sqrt{2m}} \right) .
\end{equation*}
From the definition and the fact $\Gamma(m+1,x)=m\Gamma(m,x)+x^{m}e^{-x}$, we have
\begin{align*}
   g_2(2(m+1)) &= \int_{0}^{1}  w^{-\frac{1}{2}} (1-w)^{-\frac{3}{2}} \Gamma\left( m+1, \frac{1}{2(1-w)} \right)\dx w \\
   &= \int_{0}^{1}  w^{-\frac{1}{2}} (1-w)^{-\frac{3}{2}} \bigg(m\Gamma\left(m, \frac{1}{2(1-w)} \right) + (2(1-w))^{-m} e^{-\frac{1}{2(1-w)}}\bigg)\dx w \\
   &= mg_2(2m) + \frac{1}{2^m} \int_{0}^{1}  w^{-\frac{1}{2}} (1-w)^{-\left(m+\frac{3}{2}\right)} e^{-\frac{1}{2(1-w)}} \dx w \\
   &\leq \Gamma(m+1)B\left(\frac{1}{2}, \frac{1}{m\sqrt{2m}} \right)+\frac{1}{2^m} \int_{0}^{1}  w^{-\frac{1}{2}} (1-w)^{-\left(m+\frac{3}{2}\right)} e^{-\frac{1}{2(1-w)}} \dx w.
\end{align*}
Here, it holds for $m \geq 1$ that
\begin{equation*}
    B\left(\frac{1}{2}, \frac{1}{(m+1)\sqrt{2(m+1)}} \right) - B\left(\frac{1}{2}, \frac{1}{m\sqrt{2m}} \right) \geq \sqrt{2m+2}.
\end{equation*}
Therefore, it is sufficient to show
\begin{equation*}
    h(2(m+1)) := \frac{1}{2^m} \int_{0}^{1}  w^{-\frac{1}{2}} (1-w)^{-\left(m+\frac{3}{2}\right)} e^{-\frac{1}{2(1-w)}} \dx w \leq \sqrt{2(m+1)}\Gamma(m+1).
\end{equation*}
Again, by letting $t = \frac{w}{1-w}$, $h$ can be written as
\begin{align*}
    h(2(m+1)) &= \frac{1}{2^m \sqrt{e}} \int_0^{\infty} t^{-\frac{1}{2}} (t+1)^{m} e^{-\frac{t}{2}} \dx t \\
    &= \sqrt{\frac{\pi}{e}} \frac{1}{2^m} U\left(\frac{1}{2}, m+\frac{3}{2}, \frac{1}{2} \right).
\end{align*}
From Lemma~\ref{lem: confluent_bound}, it holds for $m\geq 1$ that
\begin{align*}
    h(2(m+1)) &\leq  \sqrt{\frac{\pi}{e}} \frac{1}{2^m} \frac{2^{m+\frac{3}{2}}}{\Gamma\left( \frac{1}{2} \right)} \Gamma\left(m +1 \right) \\
    &= \frac{2\sqrt{2}}{\sqrt{e}} \Gamma\left(m+ 1\right) \leq \sqrt{2(m+1)}\Gamma(m+1),
\end{align*}
which concludes the induction when $n$ is an even number.

\paragraph{(2) Odd number}
Although this case can be easily derived by following the same steps in the case of even numbers, we provide detailed proof for completeness.

\paragraph{(2-i) Base case $n=3$}
From the definition of the upper incomplete gamma function, it holds that
\begin{equation*}
    \Gamma\left(\frac{3}{2}, x\right) = \frac{\sqrt{\pi}}{2}\mathrm{erfc}(\sqrt{x}) + \sqrt{x}e^{-x},
\end{equation*}
where $\mathrm{erfc}(\cdot)$ denotes the complementary error function.
It is known that the complementary error function is bounded for any $x \geq 0$ as follows~\citep{simon1998erfc}:
\begin{equation*}
    \mathrm{erfc}(x) \leq e^{-x^2},
\end{equation*}
which gives
\begin{equation*}
     \Gamma\left(\frac{3}{2}, x\right) \leq \frac{\sqrt{\pi}}{2} e^{-x} + \sqrt{x}e^{-x}.
\end{equation*}
Then, by letting $t= \frac{w}{1-w}$, we obtain
\begin{align*}
    g_2(3) &\leq \int_{0}^{1}  w^{-\frac{1}{2}} (1-w)^{-\frac{3}{2}} \left(  \frac{\sqrt{\pi}}{2} e^{-\frac{1}{2(1-w)}} + \sqrt{\frac{1}{2(1-w)}}e^{-\frac{1}{2(1-w)}}. \right)\dx w \\
    &= \int_0^\infty \frac{\sqrt{\pi}}{2\sqrt{e}} t^{-\frac{1}{2}} e^{-\frac{t}{2}} + \frac{1}{\sqrt{2e}} t^{-\frac{1}{2}}(1+t)^{\frac{1}{2}} e^{-\frac{t}{2}} \dx t \\
    &= \sqrt{\frac{\pi}{2e}} \Gamma\left( \frac{1}{2} \right)  + \sqrt{\frac{\pi}{2e}} U\left( \frac{1}{2}, 2,\frac{1}{2}\right)\\
    &= \frac{\pi}{\sqrt{2e}} + \frac{1}{2\sqrt{2e}} \left( e^{1/4}K_0\left(\frac{1}{4}\right) + e^{1/4}K_1\left(\frac{1}{4}\right) \right)\tag*{by (\ref{eq: confluent_2a}) and (\ref{eq: bessel_confluent})}\\
    &\leq \frac{\pi}{\sqrt{2e}} + \frac{1}{2\sqrt{2e}} \left( e^{0.24}K_0\left(0.24 \right) + e^{0.24}K_1\left(0.24\right) \right) = 2.84642 \tag*{to 6S} \\
    & < \Gamma\left( \frac{3}{2}\right) B\left( \frac{1}{2}, \frac{2}{3\sqrt{3}} \right) = 3.35278
\end{align*}
where we substituted the numerical computation in Fact~\ref{fact: K}.

\paragraph{(2-ii) Induction}
Assume that the following holds for some $m \geq 1$.
\begin{equation*}
    g_2(2m+1) \leq \Gamma\left(m + \frac{1}{2}\right) B\left(\frac{1}{2}, \frac{2}{(2m+1)\sqrt{2m+1}} \right).
\end{equation*}
From the definition and the fact $\Gamma(s+1,x)=m\Gamma(s,x)+x^{s}e^{-x}$, we have
\begin{align*}
   g_2(2m+3) &= \int_{0}^{1}  w^{-\frac{1}{2}} (1-w)^{-\frac{3}{2}} \Gamma\left( m+\frac{1}{2}+1, \frac{1}{2(1-w)} \right)\dx w \\
   &= \int_{0}^{1}  w^{-\frac{1}{2}} (1-w)^{-\frac{3}{2}} \bigg(m\Gamma\left(m+\frac{1}{2}, \frac{1}{2(1-w)} \right)+ (2(1-w))^{-m-\frac{1}{2}} e^{-\frac{1}{2(1-w)}}\bigg)\dx w \\
   &= mg_2(2m+1) + \frac{1}{2^{m+1/2}} \int_{0}^{1}  w^{-\frac{1}{2}} (1-w)^{-(m+2)} e^{-\frac{1}{2(1-w)}} \dx w \\
   &\leq \Gamma\left(m+\frac{3}{2}\right)B\left(\frac{1}{2}, \frac{2}{(2m+1)\sqrt{2m+1}} \right) +\frac{1}{2^{m+1/2}} \int_{0}^{1}  w^{-\frac{1}{2}} (1-w)^{-(m+2)}e^{-\frac{1}{2(1-w)}} \dx w.
\end{align*}
Since 
\begin{align*}
    B\left(\frac{1}{2}, \frac{2}{(2m+3)\sqrt{2m+3}} \right) - B\left(\frac{1}{2}, \frac{2}{(2m+1)\sqrt{2m+1}} \right) \geq \sqrt{2m+3}
\end{align*}
holds for $m \geq 1$, it is sufficient to show
\begin{equation*}
    h(2m+3) := \frac{1}{2^{m+1/2}} \int_{0}^{1}  w^{-\frac{1}{2}} (1-w)^{-(m+3/2)} e^{-\frac{1}{2(1-w)}} \dx w \leq \sqrt{2m+3} \Gamma\left(m+ \frac{3}{2}\right).
\end{equation*}
Again, by letting $t = \frac{w}{1-w}$, $h(\cdot)$ can be written as
\begin{align*}
    h(2m+3) &= \frac{1}{2^{m+1/2} \sqrt{e}} \int_0^{\infty} t^{-\frac{1}{2}} (t+1)^{m+\frac{1}{2}} e^{-\frac{t}{2}} \dx t \\
    &= \sqrt{\frac{\pi}{e}} \frac{1}{2^{m+1/2}} U\left(\frac{1}{2}, m+2, \frac{1}{2} \right).
\end{align*}
From Lemma~\ref{lem: confluent_bound}, it holds for all $m\geq 1$ that
\begin{align*}
    h(2m+3) &\leq  \sqrt{\frac{\pi}{e}} \frac{1}{2^{m+1/2}} \frac{2^{m+2}}{\Gamma\left( \frac{1}{2} \right)} \Gamma\left( m+\frac{3}{2}\right)\\
    &= \frac{2\sqrt{2}}{\sqrt{e}}\Gamma\left(m+\frac{3}{2}\right) \leq \sqrt{2m+3}\Gamma\left(m+ \frac{3}{2}\right).
\end{align*}
The proof of Lemma~\ref{lem: gauss_MI} for the case of $k=2$ is complete.
\end{proof}

\begin{proof}[Proof of Lemma~\ref{lem: confluent_bound}]
Similarly to the proof of Lemma~\ref{lem: gauss_MI}, we apply mathematical induction.

\paragraph{Base case: $b=2$}
When $b=2$ ($m=4$), it holds from (\ref{eq: confluent_2a}) and (\ref{eq: bessel_confluent}) that
\begin{align*}
U\left( \frac{1}{2}, 2, \frac{1}{2} \right) &= \frac{1}{4}U\left( \frac{3}{2}, 2, \frac{1}{2} \right) + \frac{1}{2}U\left( \frac{1}{2}, 1, \frac{1}{2} \right) \\
&= \frac{e^{1/4}}{2\sqrt{\pi}}\left( K_0\left( \frac{1}{4} \right) + K_1\left( \frac{1}{4} \right) \right) \\
&\leq \frac{1}{2\sqrt{\pi}}\left( e^{0.24}K_0\left( 0.24 \right) + e^{0.24}K_1\left( 0.24 \right) \right) =  1.97198 \tag*{to 6S}\\
&< \frac{4}{\Gamma(1/2)}\Gamma\left( \frac{3}{2}\right) = 2,
\end{align*}
where we substituted the numerical computation to 6S given in Fact~\ref{fact: K}.
When $b=2+\frac{1}{2}$ ($m=5$), it holds that
\begin{equation*}
    U\left( \frac{1}{2}, 2+\frac{1}{2}, \frac{1}{2} \right) = 2\sqrt{2} < \frac{4\sqrt{2}}{\Gamma(1/2)}\Gamma\left( 2\right) = 4\sqrt{\frac{2}{\pi}}.
\end{equation*}

\paragraph{Induction}
For the confluent hypergeometric function of the second kind, the following recurrence relation holds as follows~\citep[13.3.8]{olver2010nist}
\begin{equation*}
    (b-a-1)U(a,b-1,z) + (1-b-z)U(a,b,z) + zU(a, b+1,z ) = 0.
\end{equation*}
Injecting $a,z=\frac{1}{2}$ gives
\begin{align*}
     U\left( \frac{1}{2}, b+1, \frac{1}{2} \right) &= (2b-1)  U\left( \frac{1}{2}, b, \frac{1}{2} \right) - (2b-3)  U\left( \frac{1}{2}, b-1, \frac{1}{2} \right) \\
     &\leq (2b-1)  U\left( \frac{1}{2}, b, \frac{1}{2} \right).
\end{align*}
Therefore, if
\begin{equation*}
    U\left( \frac{1}{2}, b, \frac{1}{2} \right) \leq \frac{2^{b}}{\Gamma\left(\frac{1}{2}\right)} \Gamma\left(b- \frac{1}{2} \right)
\end{equation*}
holds, then we obtain
\begin{align*}
    U\left( \frac{1}{2}, b+1, \frac{1}{2} \right) &\leq (2b-1)  \frac{2^{b}}{\Gamma\left(\frac{1}{2}\right)} \Gamma\left(b- \frac{1}{2} \right) \\
    &= \frac{2^{b+1}}{\Gamma\left(\frac{1}{2}\right)} \Gamma\left(b+ \frac{1}{2} \right).
\end{align*}
The proof of Lemma~\ref{lem: confluent_bound} is complete.
\end{proof}

\section{Proof of the Suboptimality of TS}\label{sec: uni_TS_unif_pf}
In this section, we provide proof of the suboptimality of TS with $k\geq 1$ for the uniform bandits with unknown supports.

\begin{proof}[Proof of Theorem~\ref{thm: uni_TS_unif}]
Since TS-T starts from playing every arms twice, $N_i(s) \geq 2$ holds for all $i \in \{1, 2\}$ and $s \geq 5$.
Then, it holds for $T \geq 5$ that
\begin{align*}
    \mathbb{E}[\reg(T)] &= \Delta_2 \mathbb{E} \left [\sum_{t=1}^T \I [i(t) = 2] \right] \\
    &\geq \Delta_2 \mathbb{E} \left[ \sum_{t=5}^T \I [i(t)=2, N_1(t) = 2] \right].
\end{align*}
Since $N_1(t)$ denotes the number of playing arm $1$ until round $t$, if an event $\{ i(s) \ne 2, N_1(s) =2 \}$ occurs for some $s\geq 5$, then $N_1(t) > 2$ holds for $t > s$.
Therefore, for any $t\geq 5$,
\begin{align*}
    \{i(t) = 2, N_1(t) = 2 \} &\Leftrightarrow \{ \forall s \in [1, t-4] : i(s+4) = 2 \} \\
     &\Leftrightarrow \{ \forall s \in [1, t-4] :\tmu_1(s+4) < \mu_2 \}.
\end{align*}
By letting $T' = T - 4$, we have
\begin{align*}
    \mathbb{E}\left[\sum_{t=5}^T \I  [i(t)=2, N_1(t) = 2] \right] &=
    \mathbb{E}\left[  \sum_{t=5}^{T} \I \left[\forall s \in [1, t-4] :\tmu_1(s+4) < \mu_2 \right] \right]  \\
    &= \mathbb{E}_{\x{1},\x{2}}\left[ \sum_{s=1}^{T'} \left(\mathbb{P}\left[\tmu_1 \leq \mu_2 \Lmid \x{1}_1, \x{2}_1\right]\right)^s \right].
\end{align*}
Since $\tmu_1|\ts_1 \sim \Uni_{\mu\sig}(\hmu_{1,2}, \ts_1-\hs_{1,2})$, if $\hmu_{1,2} + \frac{\ts_1-\hs_{1,2}}{2} \leq \mu_2$ holds, then $\tmu_1 \leq \mu_2$ always holds since $\tmu_1$ is generated from the fixed posterior distribution.
Therefore, we have
\begin{align*}
    \mathbb{P}\left[\tmu_1 \leq \mu_2 \Lmid \x{1}_1, \x{2}_1\right] &\geq \I[\hmu_{1,2} \leq \mu_2] \mathbb{P}\left[\ts_1 \leq 2(\mu_2-\hmu_{1,2}) + \hs_{1,2} \Lmid \x{1}_1, \x{2}_1\right],
\end{align*}
since $\ts_1 \geq \hs_{1,2}$ holds.
Therefore, we obtain that
\begin{align*}
    \mathbb{P}\left[\ts_1 \leq 2(\mu_2-\hmu_{1,2}) + \hs_{1,2} \Lmid \x{1}_1, \x{2}_1\right]
    &= \I[\hmu_{1,2} \leq \mu_2] \int_{\hs_{1,2}}^{(\mu_2-\hmu_{1,2}) + \hs_{1,2}} k(k+1) \hs_{1,2}^k \frac{s-\hs_{1,2}}{s^{k+2}} \dx s \\
    &= \I[\hmu_{1,2} \leq \mu_2] \Bigg(1 - (k+1)\left( \frac{\hs_{1,2}}{2(\mu_2-\hmu_{1,2}) + \hs_{1,2}} \right)^k \\
    &\hspace{10em}+ k\left( \frac{\hs_{1,2}}{2(\mu_2-\hmu_{1,2}) + \hs_{1,2}} \right)^{k+1} \Bigg) \\
    &\geq \I[\hmu_{1,2} \leq \mu_2] \left(1 - (k+1)\left( \frac{\hs_{1,2}}{2(\mu_2-\hmu_{1,2}) + \hs_{1,2}} \right)^k \right) \\
    &\geq \I[\x{2}_1 \leq \mu_2] \left(1 - (k+1)\left( \frac{\hs_{1,2}}{2(\mu_2-\hmu_{1,2}) + \hs_{1,2}} \right)^k \right) \tag*{\text{by $\x{2}_1 \geq \hmu_{1,2}$}}
\end{align*}
For simplicity, let us define
\begin{align*}
    q_n(k) = q(k|\x{1}_1, \x{2}_1) &=  \I[\x{2}_1 \leq \mu_2] \left(1 - (k+1)\left( \frac{\hs_{1,2}}{2(\mu_2-\hmu_{1,2}) + \hs_{1,2}} \right)^k \right)  \\
    &= \I[\x{2}_1 \leq \mu_2] \left(1 - (k+1)\left( \frac{\x{2}_1-\x{1}_1}{2(\mu_2-\x{1}_1)} \right)^k \right).
\end{align*}
Then, it holds that
\begin{align*}
    \mathbb{E}_{\x{1},\x{2}}\left[ \sum_{s=1}^{T'} \left(\mathbb{P}\left[\tmu_1 \leq \mu_2 \Lmid \x{1}_1, \x{2}_1\right]\right)^s \right] 
    &\geq \mathbb{E}_{\x{1},\x{2}}\left[ \sum_{s=1}^{T'} \left(q_n(k)\right)^s \right] \\
    &=  \mathbb{E}_{\x{1},\x{2}}\left[ \left(1-(q_n(k))^{T'}\right)\frac{q_n(k)}{1-q_n(k)} \right]\\
    &\geq  \frac{1}{2}\mathbb{E}_{\x{1},\x{2}}\left[ \I[\x{2}_1 \leq \mu_2, \, (q_n(k))^{T'} \leq 1/2] \frac{q_n(k)}{1-q_n(k)}  \right] \\
    &\geq \frac{1}{2}\mathbb{E}_{\x{1},\x{2}}\left[ \frac{ \I\left[\x{2}_1 \leq \mu_2, \, (q_n(k))^{T'} \leq 1/2\right] }{(k+1)\left( \frac{\x{2}_1-\x{1}_1}{2(\mu_2-\x{1}_1)} \right)^k } \right] - \frac{1}{2}.
\end{align*}
Here, it holds that
\begin{align*}
    (q_n(k))^{T'} \leq 1/2 &\Leftrightarrow \left(1 - (k+1)\left( \frac{\x{2}_1-\x{1}_1}{2(\mu_2-\x{1}_1)} \right)^k \right)^{T'} \leq \frac{1}{2} \\
    &\Leftrightarrow 1 - (k+1)\left( \frac{\x{2}_1-\x{1}_1}{2(\mu_2-\x{1}_1)} \right)^k \leq 2^{-\frac{1}{T'}} \\
    &\Leftrightarrow 1-2^{-\frac{1}{T'}}\leq (k+1)\left( \frac{\x{2}_1-\x{1}_1}{2(\mu_2-\x{1}_1)} \right)^k \\
    &\Leftarrow \frac{\log 2}{T'} \leq (k+1)\left( \frac{\x{2}_1-\x{1}_1}{2(\mu_2-\x{1}_1)} \right)^k.
\end{align*}
From Lemma~\ref{lem: uni_SD_U} with $n=2$, it holds that
\begin{align*}
    \mathbb{E}_{\x{1},\x{2}}\left[ \frac{ \I\left[\x{2}_1 \leq \mu_2, \, (q_n(k))^{T'} \leq 1/2\right] }{(k+1)\left( \frac{\x{2}_1-\x{1}_1}{2(\mu_2-\x{1}_1)} \right)^k } \right]
    &= \iint\limits_{\substack{ a_1 \leq y \leq z \leq \mu_2, \\
    \frac{\log 2}{(k+1)T'} \leq \left( \frac{z-y}{2(\mu_2-y)} \right)^k}} 2\sig_2^{-2} \frac{(2(\mu_2 -y))^k}{(k+1)(z-y)^k}\dx z \dx y \\
    &= \frac{2}{\sig_2^2}\int_{a_1}^{\mu_2} \int_{y+2(\mu_2-y)B_k}^{\mu_2}  \frac{(2(\mu_2 -y))^k}{(k+1)(z-y)^k} \dx z \dx y,
\end{align*}
where we denoted $B_k=\left( \frac{\log 2}{T' (k+1)} \right)^{1/k}$.
Then, by direct computation, we obtain for $k=1$
\begin{align*}
     \frac{2}{\sig_2^2}\int_{a_1}^{\mu_2} \int_{y+2(\mu_2-y)A_1}^{\mu_2}  \frac{2(\mu_2 -y)}{2(z-y)} \dx z \dx y &= \frac{1}{\sig_2^2}\int_{a_1}^{\mu_2} 2(\mu_2-y)\log\left( \frac{2T'}{\log 2} \right) \dx y \\
     &=\frac{1}{\sig_2^2} \left(\mu_2 - a_1 \right)^2 \log\left( \frac{2T'}{\log 2} \right)  \\
     &= \frac{1}{4}\log\left( \frac{2T'}{\log 2} \right) \numberthis{\label{eq: B_k1}}
\end{align*}
and for $k\geq 2$ that
\begin{align*}
    \frac{2}{\sig_2^2}\int_{a_1}^{\mu_2} \int_{y+2(\mu_2-y)B_k}^{\mu_2}  \frac{(2(\mu_2 -y))^k}{(k+1)(z-y)^k} \dx z \dx y 
    &= \frac{2}{(k^2-1)\sig_2^2}  \int_{a_1}^{\mu_2} 2(\mu_2-y) \left( \frac{1}{A^{k-1}} - 2^{k-1} \right) \dx y\\
    &=  \frac{2}{(k^2-1)\sig_2^2} \left(\mu_2 - a_1 \right)^2 \left( \frac{1}{B_k^{k-1}} - 2^{k-1} \right) \\
    &= \frac{1}{2(k^2-1)} \left( \left( \frac{(k+1)T'}{\log 2} \right)^{\frac{k-1}{k}} - 2^{k-1} \right), \numberthis{\label{eq: B_kk}}
\end{align*}
where (\ref{eq: B_k1}) and (\ref{eq: B_kk}) hold from the assumption 
\begin{equation*}
    a_1 = a_2,~~~~\text{and}~~~~(\mu_2, \sig_2) = \left( \frac{a_2+b_2}{2} ,b_2 - a_2\right).
\end{equation*}
Note that for $T' \geq 1$ and $k\geq 2$, $B_k < \frac{1}{2}$ holds.
\end{proof}

\begin{figure}[t]
     \centering
     \begin{subfigure}[b]{0.48\columnwidth}
         \centering
         \includegraphics[width=\columnwidth, height=\columnwidth]{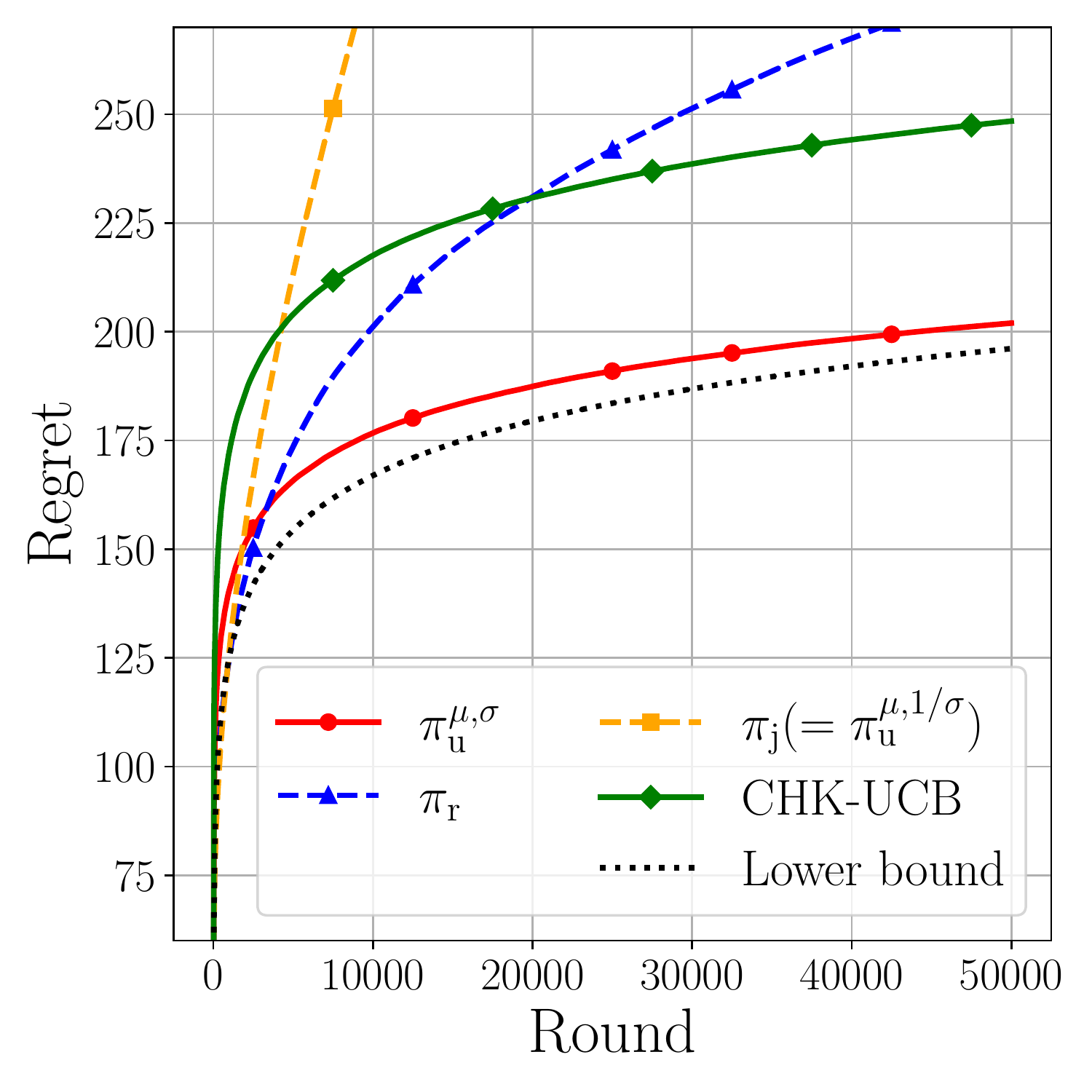}
         \caption{Regret of TS.}
         \label{fig: uni_TS_G}
     \end{subfigure}
     \hfil
     \begin{subfigure}[b]{0.48\columnwidth}
         \centering
         \includegraphics[width=\columnwidth, height=\columnwidth]{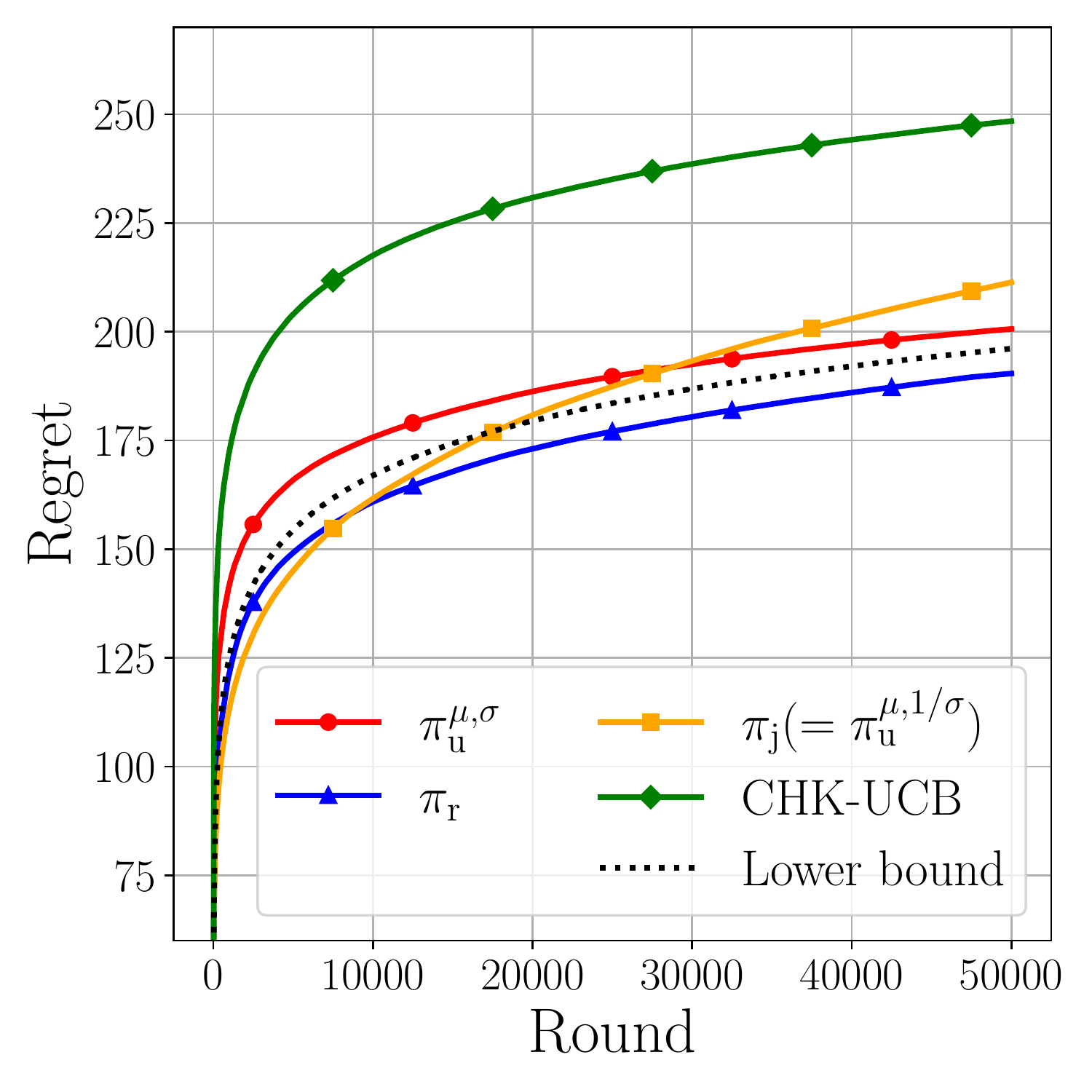}
         \caption{Regret of TS-T.}
         \label{fig: uni_TST_G}
     \end{subfigure}
\caption{Cumulative regret for the $6$-armed Gaussian bandit instance. The solid lines and the dashed lines denote the averaged values over 10,000 independent runs of the policies that can and cannot achieve the lower bound, respectively.} 
\label{fig: uni_G}
\end{figure}

\section{Numerical Validation in Gaussian Bandits}
In this section, we present simulation results to validate the theoretical analysis of TS-T in Gaussian bandits (with unknown location and scale parameters).
To provide a baseline for comparison, we present the results of asymptotically optimal UCB-based policies, CHK-UCB for the Gaussian bandits~\citep{cowan2017normal} where ``CHK'' is the initials of the authors following the notation in the original paper.

Following the previous study~\citep{cowan2017normal}, we considered a $6$-armed Gaussian bandit instance with parameters $\bmu=(10, 9, 8, 7, -1, 0)$ and $\bm{\sig} = (2\sqrt{2} , 1, 1 , \sqrt{0.5} , 1 , 2)$.
In Figure~\ref{fig: uni_G}, the solid lines denote the averaged regret over 10,000 independent runs of the policy that was found to be optimal in terms of the regret lower bound with (\ref{eq: LB_g}), whereas the dashed lines denote that of the suboptimal policies.
The dotted lines denote the asymptotic regret lower bound.
Recall that the Jeffreys prior ($k=2$) coincides with the uniform prior with the location-rate parameterizations ($\mu, \sig^{-1}$).

Based on the theoretical results of TS and TS-T, one can expect that TS and TS-T in the Gaussian bandits will show a similar tendency to that in simulations of the uniform bandits.
As we expect, TS with the uniform prior $\pi_{\mathrm{u}}^{\ms}$ shows the best performance, while TS with two invariant priors shows the suboptimal performance in Figure~\ref{fig: uni_TS_G}.
One can see the similar behavior of TS-T in the Gaussian bandits in Figure~\ref{fig: uni_TST_G}, where the performance of the reference prior significantly improves and seems to achieve the optimality.

\end{document}